\newtheorem{theorem}{Theorem}
\newtheorem{assumption}{Assumption}
\newtheorem{subassumption}{Assumption}
\numberwithin{subassumption}{assumption}
\newtheorem{proposition}[theorem]{Proposition}
\newtheorem{definition}[assumption]{Definition}
\newtheorem{lemma}[theorem]{Lemma}
\newtheorem{remark}{Remark}
\renewcommand{\b}{\boldsymbol{b}}
\newcommand{\C}{\mathcal{C}}
\renewcommand{\d}{\mathrm{d}}
\newcommand{\E}{\mathbb{E}}
\newcommand{\e}{\boldsymbol{e}}
\newcommand{\eps}{\epsilon}
\newcommand{\F}{\mathcal{F}}
\newcommand{\fF}{\mathfrak{F}}
\newcommand{\fG}{\mathfrak{G}}
\renewcommand{\L}{\mathcal{L}}
\renewcommand{\k}{\boldsymbol{k}}
\newcommand{\N}{\mathcal{N}}
\newcommand{\fN}{\mathfrak{N}}
\newcommand{\p}{\boldsymbol{p}}
\renewcommand{\P}{\mathbb{P}}
\newcommand{\qv}[1]{\left\langle #1 \right\rangle}
\newcommand{\R}{\mathbb{R}}
\newcommand{\fR}{\mathfrak{R}}
\newcommand{\bSigma}{\boldsymbol{\Sigma}}
\newcommand{\tr}{\mathrm{tr}}
\newcommand{\TV}{\mathrm{TV}}
\newcommand{\unif}{\mathrm{Unif}}
\newcommand{\var}{\mathrm{Var}}
\newcommand{\w}{\boldsymbol{w}}
\newcommand{\x}{\boldsymbol{x}}
\newcommand{\y}{\boldsymbol{y}}
\newcommand{\Z}{\mathbb{Z}}
\DeclareMathOperator*{\argmin}{arg\,min}
\title{Statistical Spatially Inhomogeneous Diffusion Inference}
\author{ Yinuo Ren\textsuperscript{\rm 1}, Yiping Lu\textsuperscript{\rm 2},
    Lexing Ying\textsuperscript{\rm 1,3}, Grant M. Rotskoff\textsuperscript{\rm
    1,4}\footnote{\noindent GMR acknowledges support from a Google Research Scholar Award.} } \affiliations { \textsuperscript{\rm 1}Insitute for Computational and
\begin{document}

\maketitle

\begin{abstract}
    Inferring a diffusion equation from discretely-observed measurements is a
    statistical challenge of significant importance in a variety of fields, from
    single-molecule tracking in biophysical systems to modeling financial
    instruments. Assuming that the underlying dynamical process obeys a
    $d$-dimensional stochastic differential equation of the form
        $$
            \d\x_t=\b(\x_t)\d t+\Sigma(\x_t)\d\w_t,
        $$
     we propose neural network-based estimators of both the drift $\b$ and the
     spatially-inhomogeneous diffusion tensor $D = \Sigma\Sigma^{T}$ and provide
     statistical convergence guarantees when $\b$ and $D$ are $s$-H\"older
     continuous. Notably, our bound aligns with the minimax optimal rate
     $N^{-\frac{2s}{2s+d}}$ for nonparametric function estimation even in the
     presence of correlation within observational data, which necessitates
     careful handling when establishing fast-rate generalization bounds. Our
     theoretical results are bolstered by numerical experiments demonstrating
     accurate inference of spatially-inhomogeneous diffusion tensors.
\end{abstract}

\section{Introduction}

The dynamical evolution of a wide variety of natural processes, from molecular
motion within cells to atmospheric systems, involves an interplay between
deterministic forces and noise from the surrounding environment. While it is
possible to observe time series data from such systems, in general the
underlying equation of motion is not known analytically. Stochastic differential
equations offer a powerful and versatile framework for modeling these complex
systems, but inferring the deterministic drift and diffusion tensor from time
series data remains challenging, especially in high-dimensional settings. Among
the many strategies
proposed~\cite{crommelin2011diffusion,frishman2020learning,nickl2022inference},
there are few rigorous results on the optimality and convergence properties of
estimators of, in particular, spatially-inhomogeneous diffusion tensors.

Many numerical algorithms have been proposed to infer the drift and diffusion,
accommodating various settings, including
one-dimensional~\cite{sura2002note,papaspiliopoulos2012nonparametric,davis2022estimation}
and multidimensional
SDEs~\cite{pokern2009remarks,frishman2020learning,crommelin2011diffusion}. Also,
the statistical convergence rate has been extensively studied for both the
one-dimensional
case~\cite{dalalyan2005sharp,dalalyan2006asymptotic,pokern2013posterior,aeckerle2018sup}
and the multidimensional
cases~\cite{van2006convergence,dalalyan2007asymptotic,van2016gaussian,nickl2017nonparametric,nickl2020nonparametric,oga2021drift,nickl2022inference}.
For parametric estimators using a Fourier or wavelet basis, the statistical
limits of estimating the spatially-inhomogeneous diffusion tensor have been
rigorously characterized~\cite{hoffmann1997minimax,hoffmann1999adaptive}.
However, strategies based on such decompositions do not scale to
high-dimensional problems, which has motivated the investigation of neural
networks as a more flexible representation of the SDE coefficients
\cite{han2018solving,rotskoff2022active,khoo2021solving,li2021markov}. 

Thus, we consider the nonparametric neural network
estimator~\cite{suzuki2018adaptivity,oono2019approximation,schmidt2020nonparametric}
as our ansatz function class, which has achieved great success in estimating SDE
coefficients
empirically~\cite{xie2007estimation,zhang2018deep,han2018solving,wang2022neural,lin2023computing}.
We aim to build statistical guarantees for such neural network-based estimators.
The most related concurrent work is~\cite{gu2023stationary}, where the authors
provide a convergence guarantee for the neural network estimation of the drift
vector and the homogeneous diffusion tensor of an SDE by solving appropriate
supervised learning tasks. However, their approach assumes that the data
observed along the trajectory are independently and identically distributed from
the stationary distribution. Additionally, the generalization bound used
in~\cite{gu2023stationary} is not the fast rate generalization
bound~\cite{bartlett2005local,koltchinskii2006local}, resulting in a suboptimal
final guarantee. Therefore, we seek to bridge the gap between the i.i.d. setting
and the non-i.i.d. ergodic setting using mixing conditions and extend the
algorithm and analysis to the spatially-inhomogeneous diffusion estimation. We
show that neural estimators have the ability to achieve standard minimax optimal
nonparametric function estimation rates even when the data are non-i.i.d. 

\subsection{Contribution}

In this paper, we construct a fast-rate error bound for estimating a
multi-dimensional spatially-inhomogeneous diffusion process based on non-i.i.d
ergodic data along a single trajectory. Our contributions are as follows:
\begin{itemize}
\item We derive for neural network-based diffusion estimators a convergence rate
that matches the minimax optimal nonparametric function estimation rate for the
$s$-H\"older continuous function class~\cite{tsybakov2009introduction};
\item Our analysis explores the $\upbeta$-mixing condition to address the
correlation present among observed data along the trajectory, making our result
readily applicable to a wide range of ergodic diffusion processes;
\item We present numerical experiments, providing empirical support for our
derived convergence rate and facilitating further applications of neural
diffusion estimators in various contexts with theoretical assurance.
\end{itemize}
Our theoretical bound depicts the relationships between the error of
nonparametric regression, numerical discretization, and ergodic approximation,
and provides a general guideline for designing data-efficient, scale-minimal,
and statistically-optimal neural estimators for diffusion inference.

\subsection{Related Works}

\paragraph{Inference of diffusion processes from data} 

The problem of inferring the drift and diffusion coefficients of an SDE from
data has been studied extensively in the literature. The setting with access to
the whole continuous trajectory is studied
by~\cite{dalalyan2006asymptotic,dalalyan2007asymptotic,strauch2015sharp,strauch2016exact,nickl2020nonparametric,rotskoff2019dynamical},
in which the diffusion tensor can be exactly identified using quadratic
variation arguments, and thus only the drift inference is considered. Many works
focus on the numerical recovery of both the drift vector and the diffusion
tensor in the more realistic setting when only discrete observations are
available, including methods based on local
linearization~\cite{ozaki1992bridge,shoji1998estimation}, martingale estimating
functions~\cite{bibby1995martingale}, maximum likelihood
estimation~\cite{pedersen1995consistency,ait2002maximum}, and Markov chain Monte
Carlo~\cite{elerian2001likelihood}. We refer readers
to~\cite{sorensen2004parametric,lopez2021parametric} for an overview of
parametric approaches. A spectral method that estimates the eigenpairs of the
Markov semigroup operator is proposed in~\cite{crommelin2011diffusion}, and a
nonparametric Bayesian inference scheme based on the finite element method is
studied in~\cite{papaspiliopoulos2012nonparametric}. As for the statistical
convergence rate of the drift and diffusion inference, a line of pioneering
works is by~\cite{hoffmann1997minimax,hoffmann1999adaptive,hoffmann1999lp},
where the minimax convergence rate of the one-dimensional diffusion process is
derived for Besov spaces and matched by adaptive wavelet estimators. Alternative
analyses mainly follow a Bayesian methodology, with notable results
by~\cite{nickl2017nonparametric,nickl2020nonparametric,nickl2022inference} in
both high- and low-frequency schemes. 

\paragraph{Solving high-dimensional PDEs with deep neural networks} The curse of
dimensionality has stymied efforts to solve high-dimensional partial
differential equations (PDEs) numerically. However, deep learning has
demonstrated remarkable flexibility and adaptivity in approximating
high-dimensional functions, which indeed has led to significant advances in
computer vision and natural language processing. Recently, a series of works
\cite{han2018solving,yu2018deep,karniadakis2021physics,khoo2021solving,long2018pde,zang2020weak,kovachki2021neural,lu2019deeponet,li2022semigroup,rotskoff2022active}
have explored solving PDEs with deep neural networks, achieving impressive
results for a diverse collection of tasks. These approaches rely on representing
PDE solutions using neural networks, and various schemes propose different loss
functions to obtain a solution. For instance, \cite{han2018solving} uses the
Feynman-Kac formulation to convert PDE solving into a stochastic control
problem, \cite{karniadakis2021physics} solves the PDE minimizing the strong
form, while \cite{zang2020weak} solves weak formulations of PDEs with an
adversarial approach.

\paragraph{Theoretical guarantees for neural network-based PDE solvers}
Statistical learning theory offers a powerful toolkit to prove theoretical
convergence results for PDE solvers based on deep learning. For example,
\cite{weinan2022some,chen2021representation,marwah2021parametric,marwah2022neural}
investigated the regularity of PDEs approximated by neural networks and
\cite{nickl2020convergence,duan2021convergence,lu2021machine,hutter2021minimax,lu2022sobolev}
consider the statistical convergence rate of various machine learning-based PDE
solvers. However, most of these optimality results are based on concentration
results that assume the sampled data are independent and identically
distributed. This i.i.d. assumption is often violated in various financial and
biophysical applications, for example, time series prediction, complex system
analysis, and signal processing. Among many possible relaxations to this i.i.d.
setting, the scenario, where data are drawn from a strong mixing process, has
been widely adopted~\cite{bradley2005basic}. Inspired by the first work of this
kind~\cite{yu1994rates}, many authors exploited a set of mixing concepts such as
$\alpha$-mixing~\cite{zhang2004statistical,steinwart2009fast}, $\beta$- and
$\phi$-mixing~\cite{mohri2008rademacher,mohri2010stability,kuznetsov2017generalization,ziemann2022single},
and $\C$-mixing~\cite{hang2017bernstein}. We refer readers
to~\cite{hang2016learning} for an overview of this line of research.

\paragraph{Notations}

We will use $\lesssim$ and $\gtrsim$ to denote the inequality up to a constant
factor and $\asymp$ the equality up to a constant factor. 

\begin{definition}[H\"{o}lder space]
	We denote the H\"{o}lder space of order $s\in\R$ with constant $M>0$ by
	$\C^s(\R^d, M)$, \emph{i.e.}
	$$
		\begin{aligned}
            &\C^s(\R^d, M) = 
            \bigg\{ f:\R^d\to\R \bigg|\\
            &\sum_{|\boldsymbol{\alpha}|<s}\left\|\partial^{\boldsymbol{\alpha}} f\right\|_{\infty}+\sum_{|\boldsymbol{\alpha}|=\lfloor s\rfloor} \sup _{\x \neq \y}
		\frac{\left|\partial^{\boldsymbol{\alpha}} f(\x)-\partial^{\boldsymbol{\alpha}} f(\y)\right|}{|\x-\y|^{s-\lfloor s\rfloor}} < M \bigg\}.
        \end{aligned}
	$$
\end{definition}

\section{Problem Setting}
\label{sec:probset}

Suppose we have access to a sequence of $N$ discrete position snapshots
$(\x_{k\tau})_{k=0}^N$ along a single trajectory $(\x_t)_{0\leq t\leq T}$, where
the time step $\tau = T/N$ and $(\x_t)_{t\geq 0}$ is the solution to the
following It\^o stochastic differential equation:
\begin{equation}
	\d \x_t = \b(\x_t) \d t + \Sigma(\x_t) \d \w_t,
	\label{eq:sde}
\end{equation}
where $\b :\R^\d\to \R^d$, $\Sigma :\R^d\to \R^{d\times r}$, and $(\w_t)_{t\geq
0}$ is an $r$-dimensional Wiener process. We refer the vector field $\b(\x)$ as
the drift vector, and define the diffusion tensor as $D(\cdot) =
\tfrac12\Sigma(\cdot)\Sigma(\cdot)^\top$. As noted in~\cite{lau2007state}, any
interpolation between the It\^o convention and other conventions for stochastic
calculus can be transformed into the It\^o convention by an additional term to
the drift vector, and therefore, we work with the It\^o convention throughout
this paper\footnote{The It\^o convention along with others represent different
methods to extend the Riemann integral to stochastic processes. Roughly
speaking, Ito uses the left endpoint of the interval for functional value in the
Riemann sum. We adopt the It\^o convention due to several martingale properties
it introduces which are mathematically convenient for statements and proofs.}.

\begin{remark}
    Our focus on the inhomogeneity in the space variable stems from the fact
    that when the SDE coefficients are time-dependent, it becomes very
    challenging to infer them from a singular observational trajectory,
    \emph{i.e.} with only one observation at each time point and we would leave
    this case with multiple trajectories for future work.
\end{remark}

For simplicity, we will be working on $\Omega = [0,1)^d$ with periodic
boundaries, \emph{i.e.} the $d$-dimensional torus $\widetilde{\Omega} =
\R^d/\Z^d$. Points on the torus $\widetilde{\Omega}$ are represented by
$\widetilde{\x}$, where $\widetilde{\cdot}$ denotes the canonical map and
$\x\in\R^d$ is a representative of the equivalence class $\widetilde{\x}$. The
Borel $\sigma$-algebra on $\widetilde{\Omega}$ coincides with the sub-$\sigma$
algebra of 1-periodic Borel sets of $\R^d$. We refer readers
to~\cite{papanicolau1978asymptotic} for further mathematical details of
homogenization with tori.  We further assume the drift and diffusion
coefficients in~\eqref{eq:sde} satisfy the following regularity assumptions:
\stepcounter{assumption}
\begin{subassumption}[Periodicity]
	$\b(\x)$, $\Sigma(\x)$, and $D(\x)$ are 1-periodic for all variables.
	\label{ass:per}
\end{subassumption}
\begin{remark}
    This assumption is primarily for simplicity, and has been adopted in many
    previous works on the statistical inference of SDE coefficients,
    \emph{e.g.}~\cite{nickl2020nonparametric}. This allows us to bypass the
    technicalities concerning boundary conditions, which might detract from our
    main contributions. 
\end{remark}
\begin{subassumption}[H\"{o}lder-smoothness]
	Each entry $b_i(\x),\Sigma_{ij}(\x), D_{ij}(\x)\in\C^{s}(\R^d, M)$ for some
	$s\geq 2$ and $M>0$.
	\label{ass:holder}
\end{subassumption}
\begin{subassumption}[Uniform ellipticity]
	It holds that $r\geq d$ and there exists a constant $c>0$ such that
	$D(\x)\succ c I$, \emph{i.e.} $\sum_{i,j=1}^d D_{ij}(\x)\xi_i\xi_j\geq c
	\|\xi\|^2$ for any $\xi\in\R^d$, holds uniformly for any $\x\in \R^d$.
	\label{ass:reg}
\end{subassumption}
\begin{remark}
    This uniform ellipticity is commonly assumed across the analysis of the
    Fokker-Planck equation. It guarantees the Fokker-Planck equation has a
    unique strong solution with regularity properties that are essential for the
    analysis of asymptotic behavior and numerical approximation of the solution.
    We refer readers to~\cite{stroock1997multidimensional,bogachev2022fokker}
    for more detailed discussions.
\end{remark}

Since $\b(\x)$ and $\Sigma(\x)$ are 1-periodic, the process $(\x_t)_{t\geq 0}$
in~\eqref{eq:sde} can thus be viewed as a process $(\widetilde{\x}_t)_{t\geq
0}:=(\widetilde{\x_t})_{t\geq 0}$ on the torus $\widetilde{\Omega}$. Denote the
transition kernel of the process $\x_t$ by $\P^t(\x,\cdot):=\P(\x_t \in \cdot|
\x_0 = \x)$, the transition kernel of the corresponding process
$\widetilde{\x}_t$ satisfies:
\begin{equation}
    \widetilde{\P}^t(\widetilde{\x},\cdot) = \sum_{\k = (k_1,\cdots,k_d)\in\Z^d}\P^t\left(\x,\cdot+\sum_{i=1}^d k_i \e_i\right),
    \label{eq:widetildeP}
\end{equation}
where $\e_i$ is the $i$-th standard basis vector in $\R^d$. When no confusion
arises, we will use $\widetilde{\x}$ to denote its representative in the
fundamental domain $\Omega$ in the following. 

\section{Spatially Inhomogeneous Diffusion Estimator}

In this section, we aim to build neural estimators of both the drift and
diffusion coefficients based on a sequence of $N$ discrete observations
$\left(\x_{k\tau}\right)_{k=0}^N$ along a single trajectory of the
SDE~\eqref{eq:sde}. A straightforward neural drift estimator allows us to
subsequently construct a simple neural estimator of the diffusion tensor. In
what follows, we introduce and prove the convergence of these neural estimators.
Without loss of generality, we assume $\tau\leq 1$ and $T\geq 1$, and denote the
$\sigma$-algebra generated by all possible sequences
$\left(\x_{k\tau}\right)_{k=0}^N$ as $\F_{\tau,N}(\b, D)$.

\subsection{Neural Estimators}
\label{sec:est}

We define $\L_T^{\b}(\hat \b)$ and $\L_T^{D}(\hat{D})$ as the objective function
for drift and diffusion estimation, respectively, by noticing that the ground
truth drift vector $\b$ can be represented as the minimizer of the following
objective function as the time step $\tau\to 0$ and the time horizon $T\to
\infty$:
\begin{equation}
    \L_T^{\b}(\hat \b; (\x_t)_{0\leq t \leq T}):=\dfrac{1}{T}\int_{0}^T\left\| {\hat\b}(\x_t) -  \dfrac{1}{\tau}\Delta \x_t \right\|_2^2\d t,
\label{eq:driftpop}
\end{equation}
where $\Delta \x_t = \x_{t+\tau}-\x_t$. With the ground truth drift vector $\b$,
the ground truth diffusion tensor can also be represented as the minimizer of
the following objective function as $\tau\to 0$ and $T\to \infty$: 
\begin{equation}
    \begin{aligned}
        &\L_{T}^D(\hat{D}; (\x_t)_{0\leq t \leq T}, \b):=\\&\dfrac{1}{T}\int_{0}^T \bigg\| \hat{D}(\x_t)- \dfrac{\left(\Delta\x_{t}-\b(\x_t)\tau\right)\left(\Delta\x_{t}-\b(\x_t)\tau\right)^\top}{2\tau}\bigg\|_F^2\d t,
    \end{aligned}
    \label{eq:diffusionpop}
\end{equation}
where $\|\cdot\|_F$ is the Frobenius norm of a matrix.

Based on the discussions in the last section, we will only estimate the value of
$\hat \b$ and $\hat D$ in the fundamental domain $\Omega$ and then extend it to
the whole space by periodicity. Therefore, using our data $(\x_{k\tau})_{k=0}^N$
as quadrature points, we approximate the objective function for drift
estimation~\eqref{eq:driftpop} as:
\begin{equation}
    \tilde{\L}_N^{\b}(\hat \b; (\x_{k\tau})_{k=0}^N):=\dfrac{1}{N}\sum_{k=0}^{N-1} \left\| \dfrac{\x_{(k+1)\tau}-\x_{k\tau}}{\tau} -\hat{\b}(\widetilde{\x}_{k\tau}) \right\|_2^2,
    \label{eq:driftemploss}
\end{equation}
and the objective function for diffusion estimation~\eqref{eq:diffusionpop} as
\begin{equation}
    \begin{aligned}
        &\tilde{\L}_N^{D}(\hat{D};(\x_{k\tau})_{k=0}^N, \b):=\dfrac{1}{N}\sum_{k=0}^{N-1} \\
        &\left\| \dfrac{\left(\Delta\x_{k\tau}-{\b}(\widetilde{\x}_{k\tau})\tau\right)\left(\Delta\x_{k\tau}-{\b}(\widetilde{\x}_{k\tau})\tau\right)^\top}{2\tau} -\hat{D}(\widetilde{\x}_{k\tau}) \right\|_F^2.
    \end{aligned}
    \label{eq:diffusionemploss}
\end{equation}
We will refer to $\tilde{\L}_N^{\b}(\hat \b; (\x_{k\tau})_{k=0}^N)$ and
$\tilde{\L}_N^{D}(\hat{D};(\x_{k\tau})_{k=0}^N, \b)$ as the estimated empirical
loss for drift and diffusion estimation, respectively.

\begin{algorithm}[!htb]
\caption{Diffusion inference within function class $\fG$}
\begin{algorithmic}[1]
\STATE Find the drift estimator $$\hat \b:= \argmin_{\bar \b \in \fG^d}
\tilde{\L}^{\b}_N(\bar \b; (\x_{k\tau})_{k=0}^N);$$ \STATE Find the diffusion
estimator $$\hat D:= \argmin_{\bar D \in \fG^{d\times d}}
\tilde{\L}^D_N(\bar{D};(\x_{k\tau})_{k=0}^N, \hat \b),$$where $\hat \b$ is the
drift estimator obtained in the first step as an approximation for the ground
truth $\b$.
\end{algorithmic}
\label{alg:alg}
\end{algorithm}

We then parametrize the drift vector and the diffusion tensor within a
\emph{hypothesis function class} $\fG$ and solve for the estimators by
optimizing the corresponding estimated empirical loss, as in
Algorithm~\ref{alg:alg}. Following foundational works
including~\cite{oono2019approximation,schmidt2020nonparametric,chen2022nonparametric},
we adopt sparse neural networks $\fN(L,\p,S,M)$ as our hypothesis function class
$\fG$, which is defined as follows. A neural network with depth $L$ and width
vector $\p = (p_0 , \cdots, p_{L+1} )$ has the following form $f:\R^{p_0}
\to\R^{p_{L+1}}$ with
\begin{equation}
		\x \mapsto f(\x) = W_L(\sigma(W_{L-1}(\cdots\sigma(W_0 \x - \w_1)\cdots) - \w_{L})),
    \label{eq:nn}
\end{equation}
where $W_i\in \R^{p_{i+1}\times p_i}$ are the weight matrices, $\w_i\in\R^{p_i}$
are the shift vectors, and $\sigma(\cdot)$ is the element-wise ReLU activation
function. We also bound all parameters in the neural network by unity as
in~\cite{schmidt2020nonparametric,suzuki2018adaptivity}.
\begin{definition}[Sparse neural network]
	Let $\fN(L,\p,S,M)$ be the function class of ReLU-activated neural networks
	with depth $L$ and width $\p$ that has at most $S$ non-zero entries with the
	function value uniformly bounded by $M$ and all parameters bounded by 1,
	\emph{i.e.}
	$$
		\begin{aligned}
			&\fN(L,\p,S,M)  = \bigg\{ f(\x) \text{ has the form of~\eqref{eq:nn}}\bigg| \\
			&\quad \quad \quad \sum_{i=0}^L \|W_i\|_0 + \sum_{i=1}^L \|\w_i\|_0\leq S,\|f\|_\infty\leq M\\
            &\quad \quad \quad \quad \max_{i=0,\cdots,L}\|W_i\|_\infty \vee \max_{i=1,\cdots,L}\|\w_i\|_\infty\leq 1\bigg\},
		\end{aligned}
	$$
	where $\|\cdot\|_0$ is the number of non-zero entries of a matrix (or a
	vector) and $\|\cdot\|_\infty$ is the maximum absolute value of a matrix (or
	a vector).
\end{definition}
Since we are using the neural network for nonparametric estimation in
$\Omega\subset\R^d$, we will assume $p_0 = d$ and $p_{L+1} = 1$ in the following
discussion. 

\subsection{Ergodicity}

Optimal convergence rates of neural network-based PDE solvers, as showcased
in~\cite{nickl2020convergence,lu2021machine,gu2023stationary}, are typically
established under the assumption of data independence. However, the presence of
time correlations in the observational data $\left(\x_{k\tau}\right)_{k=0}^N$
from a single trajectory significantly complicates the task of setting an upper
bound for the convergence of the neural estimators obtained by
Algorithm~\ref{alg:alg}. In this context, we fully explore the ergodicity of the
diffusion process, bound the ergodic approximation error by the $\upbeta$-mixing
coefficient, and show that the exponential ergodicity condition, which is
naturally satisfied by a wide range of diffusion processes, is sufficient for
the fast rate convergence of the proposed neural estimators.

We first introduce the definition of exponential ergodicity:
\begin{definition}[Exponential ergodicity~\cite{down1995exponential}]
	A diffusion process $(X_t)_{t\geq 0}$ with domain $\Omega$ is uniformly
	\emph{exponential ergodic} if there exists a unique stationary distribution
	$\mu$ that for any $x\in \Omega$,
	$$
		\|\P^t(x,\cdot) - \mu \|_{\TV}\leq M_{\mu}(x)\exp(-C_{\mu} t),
	$$
    where $M_{\mu}(x), C_{\mu}>0$.
\end{definition}

As a direct consequence of~\cite[Theorem 3.2]{papanicolau1978asymptotic} and the
compactness of the torus $\widetilde \Omega$, we have the following result:
\begin{proposition}[Exponential ergodicity of $(\widetilde{\x}_t)_{t\geq 0}$]
    The diffusion process $(\widetilde{\x}_t)_{t\geq 0}$, the image of
    $(\x_t)_{t\geq 0}$ in~\eqref{eq:sde} under the quotient map, is uniformly
    exponential ergodic with respect to a unique stationary distribution
    $\widetilde{\Pi}$ on the torus $\widetilde{\Omega}$ under
    Assumptions~\ref{ass:per}, \ref{ass:holder}, and \ref{ass:reg}. Especially,
    there exist constants $M_{\widetilde \Pi}, C_{\widetilde \Pi}>0$ that only
    depend on $c$, $\b$, and $D$, such that for any $\widetilde\x\in \Omega$,
	$$
		\|\widetilde \P^t(\widetilde\x,\cdot) - \widetilde\Pi \|_{\TV}\leq M_{\widetilde \Pi}\exp(-C_{\widetilde \Pi} t).
	$$
\label{prop:ergodic}
\end{proposition}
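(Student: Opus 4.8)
The plan is to reduce the statement to a Doeblin-type minorization on the compact torus and then invoke the standard coupling argument for uniform geometric ergodicity, tracking the dependence of all constants.

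\textbf{Step 1 (well-posedness and existence of an invariant measure).} By Assumption~\ref{ass:per} the coefficients $\b$ and $\Sigma$ descend to coefficients on $\widetilde\Omega=\R^d/\Z^d$, and by Assumptions~\ref{ass:holder} and \ref{ass:reg} the generator $\widetilde L f=\sum_{i,j}D_{ij}\partial_i\partial_j f+\sum_i b_i\partial_i f$ is uniformly elliptic with H\"older coefficients on the compact manifold $\widetilde\Omega$. Hence the martingale problem for $\widetilde L$ is well posed, the process $(\widetilde\x_t)_{t\geq 0}$ is the image of $(\x_t)_{t\geq 0}$ under~\eqref{eq:widetildeP}, and by~\cite[Theorem 3.2]{papanicolau1978asymptotic} the associated semigroup admits an invariant probability measure $\widetilde\Pi$ on $\widetilde\Omega$ possessing a smooth, strictly positive density with respect to the normalized Lebesgue (Haar) measure $\lambda$ on $\widetilde\Omega$.

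\textbf{Step 2 (uniform lower bound on the transition kernel).} The crux is to produce a fixed time $t_0>0$ — one may take $t_0=1$ — and a constant $\kappa\in(0,1)$, depending only on $d$, the ellipticity constant $c$, and the H\"older norms of $\b$ and $D$, such that $\widetilde\P^{t_0}(\widetilde\x,A)\geq \kappa\,\lambda(A)$ for every $\widetilde\x\in\widetilde\Omega$ and every Borel set $A\subseteq\widetilde\Omega$. This follows from Aronson-type two-sided Gaussian bounds for the transition density of a uniformly elliptic diffusion with bounded (here H\"older) coefficients: the bounds hold on $\R^d$ with constants controlled only by $d$, $c$, $\|\b\|_\infty$ and $\|D\|_\infty$, transfer to $\widetilde\Omega$ via the periodization~\eqref{eq:widetildeP}, and — using compactness of $\widetilde\Omega$ — the Gaussian lower bound at $t_0=1$ becomes a strictly positive uniform lower bound $\widetilde p^{\,1}(\widetilde\x,\widetilde\y)\geq\kappa>0$ on the density. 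Equivalently, one can argue that $(\widetilde\x_t)$ is strong Feller and irreducible (positivity of the density) and conclude from compactness that the whole state space $\widetilde\Omega$ is a small set.

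\textbf{Step 3 (coupling / Doeblin).} Given the minorization from Step 2, the classical Doeblin argument — or an explicit coupling that splits each time-$t_0$ law into its $\kappa\lambda$ part and the remaining mass — yields $\|\widetilde\P^{nt_0}(\widetilde\x,\cdot)-\widetilde\P^{nt_0}(\widetilde\y,\cdot)\|_{\TV}\leq(1-\kappa)^n$ for all $\widetilde\x,\widetilde\y$ and $n\in\mathbb{N}$. Integrating the second argument against the invariant measure $\widetilde\Pi$ and interpolating over $t\in[nt_0,(n+1)t_0)$ gives $\|\widetilde\P^t(\widetilde\x,\cdot)-\widetilde\Pi\|_{\TV}\leq(1-\kappa)^{\lfloor t/t_0\rfloor}$, so the claim holds with $M_{\widetilde\Pi}=(1-\kappa)^{-1}$ and $C_{\widetilde\Pi}=-t_0^{-1}\log(1-\kappa)$; since $t_0$ and $\kappa$ depend only on $d,c,\b,D$ through the Gaussian estimates, so do $M_{\widetilde\Pi}$ and $C_{\widetilde\Pi}$. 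The same contraction applied to any invariant measure forces it to equal $\widetilde\Pi$, giving uniqueness. The main obstacle is Step 2: securing a genuinely uniform (in both spatial arguments) positive lower bound on the transition density with constants depending only on the stated quantities, which is where the parabolic Harnack inequality / Aronson bounds on the torus (or the support theorem together with strong Feller) are invoked and where the bookkeeping of constant dependence must be checked; everything before and after is soft.
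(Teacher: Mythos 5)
Your argument is correct, but it is worth noting that the paper does not actually prove this proposition: it is stated as a direct consequence of \cite[Theorem 3.2]{papanicolau1978asymptotic} together with compactness of the torus, and no further details are given. What you have done is reconstruct a self-contained probabilistic proof of the cited fact. The route is genuinely different in flavor from the one behind the citation: the Bensoussan--Lions--Papanicolaou result is obtained by PDE methods (solving the stationary adjoint Fokker--Planck equation via the Fredholm alternative, with the maximum principle and Harnack inequality supplying positivity of the invariant density and the spectral gap), whereas you go through the Doeblin minorization and coupling, using two-sided Gaussian heat-kernel bounds to make the whole torus a small set at time $t_0=1$. Your approach has the advantage of making the dependence of $M_{\widetilde\Pi}$ and $C_{\widetilde\Pi}$ on $d$, $c$, $\|\b\|_\infty$, and $\|D\|_\infty$ completely explicit ($\kappa$ from the Gaussian lower bound, then $M_{\widetilde\Pi}=(1-\kappa)^{-1}$, $C_{\widetilde\Pi}=-\log(1-\kappa)$), which is exactly the statement the paper needs; the PDE route gives the same qualitative conclusion with less explicit constants. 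One small point to tighten in Step 2: Aronson's estimates in their classical form are for divergence-form operators, while the generator here, $\sum_{i,j}D_{ij}\partial_i\partial_j+\sum_i b_i\partial_i$, is in non-divergence form. Since Assumption~\ref{ass:holder} gives $D\in\C^s$ with $s\geq 2$, you can either rewrite the operator in divergence form by absorbing $\sum_j\partial_j D_{ij}$ into the drift, or invoke the parametrix construction for uniformly elliptic operators with H\"older coefficients, which yields the same two-sided Gaussian bounds; either way the minorization constant $\kappa$ retains the claimed dependence and your Steps 3 and the uniqueness argument go through unchanged.
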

See~\cite{kulik2017ergodic} for further discussions and required regularities
for this property beyond the torus setting.

The ergodicity of stochastic processes is closely related to the notion of
\emph{mixing conditions}, which quantifies the  ``asymptotic independence'' of
random sequences. One of the most utilized mixing conditions for stochastic
processes is the following $\upbeta$-mixing condition:
\begin{definition}[$\upbeta$-mixing condition~\cite{kuznetsov2017generalization}]
	The \emph{$\upbeta$-mixing coefficient} of a stochastic process
	$(X_t)_{t\geq 0}$ with respect to a probability measure $\mu$ is defined as
	$$ \upbeta(t; (X_t)_{t\geq 0}, \mu) := \sup_{s\geq
	0}\E_{\F_0^t}\left[\|\mu-\P_{t+s}^\infty(\cdot|\F_0^t)\|_{\TV}\right],$$
	where $\F_a^b$ is the $\sigma$-algebra generated by $(X_t)_{a\leq t\leq b}$,
	and $\P_a^b$ is the law of $(X_t)_{a\leq t\leq b}$. Especially, when $
	\upbeta(t; (X_t)_{t\geq 0}, \Pi) \leq M_\upbeta \exp(-C_\upbeta t)$ for some
	constants $M_\upbeta, C_\upbeta>0$, we say $X_t$ is \emph{geometrically
	$\upbeta$-mixing} with respect to $\mu$.
\end{definition}

By taking $\mu$ as the stationary distribution $\widetilde \Pi$ in the
definition above, the proposition follows:
\begin{proposition}[$\upbeta$-mixing condition of $(\widetilde{\x}_t)_{t\geq 0}$]
	$\upbeta(t;(\widetilde{\x}_t)_{t\geq 0}, \widetilde{\Pi})\leq
	M_{\widetilde{\Pi}}\exp(-C_{\widetilde{\Pi}} t)$, \emph{i.e.}
	$\widetilde{\x}_t$ is geometrically $\upbeta$-mixing with respect to
	$\widetilde{\Pi}$.
	\label{prop:beta}
\end{proposition}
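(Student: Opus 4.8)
The plan is to unwind the definition of the $\upbeta$-mixing coefficient, use the Markov property of $(\widetilde{\x}_t)_{t\geq 0}$ to reduce the conditional law of a future trajectory segment to the law of the process restarted from its time-$t$ marginal, and then invoke the uniform exponential ergodicity of Proposition~\ref{prop:ergodic}. I read the coefficient with $t$ playing the role of the lag and the supremum over $s\geq 0$ ranging over the current time, and I interpret the reference measure $\widetilde{\Pi}$ inside $\|\cdot\|_{\TV}$ as the path-space law of the process run in stationarity, i.e. the trajectory law whose one-time marginals all equal the stationary measure $\widetilde{\Pi}$ (this is well-defined and shift-invariant because $\widetilde{\Pi}$ is invariant for the time-homogeneous transition kernels $\widetilde{\P}^u$).

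First, fix $s\geq 0$. By the time-homogeneous Markov property, conditionally on $\F_0^s$ the restriction of the process to $[s+t,\infty)$ is Markov with marginal $\widetilde{\P}^t(\widetilde{\x}_s,\cdot)$ at time $s+t$; writing $\P_\nu$ for the path-space law of such a process with time-$(s+t)$ marginal $\nu$, this gives $\P_{s+t}^\infty(\cdot|\F_0^s)=\P_{\widetilde{\P}^t(\widetilde{\x}_s,\cdot)}$ and identifies the reference measure with $\P_{\widetilde{\Pi}}$. Next, since $\nu\mapsto\P_\nu=\int\P_x\,\nu(\d x)$ is affine and each $\P_x$ is a probability measure, splitting $\nu_1-\nu_2$ into its Jordan parts gives the contraction $\|\P_{\nu_1}-\P_{\nu_2}\|_{\TV}\leq\|\nu_1-\nu_2\|_{\TV}$; taking $\nu_1=\widetilde{\Pi}$ and $\nu_2=\widetilde{\P}^t(\widetilde{\x}_s,\cdot)$ then yields, pointwise in $\widetilde{\x}_s$,
$$
\big\|\widetilde{\Pi}-\P_{s+t}^\infty(\cdot|\F_0^s)\big\|_{\TV}\leq\big\|\widetilde{\P}^t(\widetilde{\x}_s,\cdot)-\widetilde{\Pi}\big\|_{\TV}\leq M_{\widetilde{\Pi}}\exp(-C_{\widetilde{\Pi}}t),
$$
where the last step is Proposition~\ref{prop:ergodic} and is uniform over $\widetilde{\x}_s\in\Omega$. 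Since this bound is deterministic, applying $\E_{\F_0^s}[\,\cdot\,]$ and then $\sup_{s\geq 0}$ leaves it untouched, so $\upbeta(t;(\widetilde{\x}_t)_{t\geq 0},\widetilde{\Pi})\leq M_{\widetilde{\Pi}}\exp(-C_{\widetilde{\Pi}}t)$, i.e. one may take $M_\upbeta=M_{\widetilde{\Pi}}$ and $C_\upbeta=C_{\widetilde{\Pi}}$.

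The step I expect to require the most care is the passage from total variation on path space to total variation of a single-time marginal: I need to check that, conditionally on $\F_0^s$, the law of the whole future segment is genuinely the mixture, over the value of $\widetilde{\x}_s$ (equivalently of $\widetilde{\x}_{s+t}$), of fixed-start trajectory laws — this is precisely where the Markov property enters — and that the stationary trajectory law is what plays the role of the reference measure $\mu=\widetilde{\Pi}$ in the definition. Everything after that is substitution into Proposition~\ref{prop:ergodic}, and the geometric rate is inherited with no loss in the constants. As a cross-check, one could instead cite the classical identity that for a Markov process the $\upbeta$-coefficient between $\F_0^s$ and $\F_{s+t}^\infty$ equals the one between the marginals $\widetilde{\x}_s$ and $\widetilde{\x}_{s+t}$, and then apply Proposition~\ref{prop:ergodic} directly.
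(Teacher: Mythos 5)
Your proof is correct and follows the same route the paper intends: the paper simply asserts that the proposition "follows by taking $\mu=\widetilde{\Pi}$ in the definition" together with Proposition~\ref{prop:ergodic}, and you supply exactly the details that assertion relies on — the Markov-property reduction of the conditional path law to $\P_{\widetilde{\P}^t(\widetilde{\x}_s,\cdot)}$ and the total-variation contraction of the affine map $\nu\mapsto\P_\nu$. Your reading of the (slightly garbled) definition, with $t$ as the lag and the supremum over the current time, is also the one the paper uses later when invoking $\upbeta(l\tau)$.
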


We will denote the pushforward of the invariant measure $\widetilde{\Pi}$ under
the following inverse of the canonical map
$\iota^{-1}:\widetilde{\Omega}\to\Omega$ also as $\widetilde\Pi$.

\subsection{Convergence Guarantee}
\label{sec:convergence}

In this section, we describe the main upper bound for the neural estimators in
Algorithm~\ref{alg:alg}. We also present a theoretical guarantee for drift and
diffusion estimation in Theorem~\ref{thm:drift} and~\ref{thm:diffusion},
respectively. Our main result shows that estimating the drift and diffusion
tensor can achieve the standard minimax optimal nonparametric function
estimation convergence rate, even with non-i.i.d. data. 

Due to the ergodic theorem~\cite[Theorem 5.3.3]{kulik2017ergodic} under the
exponential ergodicity condition and the property of It\^o process, the
\emph{bias} part of the objective functions $\L_T^{\b}(\hat \b; (\x_t)_{0\leq t
\leq T})$ and $\L_{T}^D(\hat{D}; (\x_t)_{0\leq t \leq T}, \b)$ for drift and
diffusion estimation as defined in~\eqref{eq:driftpop}
and~\eqref{eq:diffusionpop} converge to 
\begin{equation}
    \begin{aligned}
        \mathcal{L}^{\b}_{\widetilde\Pi}(\hat \b)&:=\mathbb{E}_{\widetilde\x\sim\widetilde\Pi}\left[\|\hat \b(\widetilde\x)-\b(\widetilde\x)\|_2^2\right]\\
        \text{and}\quad \mathcal{L}^D_{\widetilde\Pi}(\hat D)&:=\mathbb{E}_{\widetilde\x\sim\widetilde\Pi}\left[\|\hat D(\widetilde\x)-D(\widetilde\x)\|^2_F\right],
    \end{aligned}
    \label{eq:poploss}
\end{equation} 
as $\tau\to 0$ and $T\to \infty$, which we will refer to as the \emph{population
loss} for drift and diffusion estimation, respectively. Our convergence
guarantee is thus built on these population losses.

\begin{theorem}[Upper bound for drift estimation in $\fN(L,\p,S,M)$]
	Suppose the drift vector $\b\in\C^s(\Omega, M)$, and the hypothesis class
	$\fG = \fN(L,\p,S,M)$ with $$K\asymp T^{\frac{d}{2s+d}},\quad L \lesssim
	\log K,\quad \|\p\|_\infty\lesssim K,\quad S\lesssim K\log K.$$ Then with
	high probability the minimizer $\hat{\b}$ obtained by
	Algorithm~\ref{alg:alg} satisfies
	$$
		\E_{(\x_{k\tau})_{k=0}^N\sim\F_{\tau,N}(\b, D)}\left[\L^b_{\widetilde\Pi}(\hat{\b})\right]\lesssim  T^{-\frac{2s}{2s+d}}\log^3 T+\tau.
	$$
	\label{thm:drift}
\end{theorem}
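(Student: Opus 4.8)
The plan is to decompose the population loss $\L^b_{\widetilde\Pi}(\hat\b)$ into three pieces—a \emph{nonparametric regression error}, a \emph{discretization error} of order $\tau$, and an \emph{ergodic approximation error} controlled by the $\upbeta$-mixing coefficient—and then bound each. First I would rewrite the empirical loss $\tilde\L^{\b}_N(\hat\b)$ by introducing the conditional-expectation "signal" $\b_\tau(\widetilde\x_{k\tau}) := \tfrac1\tau\E[\Delta\x_{k\tau}\mid\x_{k\tau}]$, which differs from the true drift $\b(\widetilde\x_{k\tau})$ by $O(\tau)$ under Assumption~\ref{ass:holder} (Itô–Taylor expansion of $\E[\x_{(k+1)\tau}-\x_{k\tau}\mid\x_{k\tau}]$, using $\b,\Sigma\in\C^s$ with $s\ge2$). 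The residual $\xi_k := \tfrac1\tau\Delta\x_{k\tau}-\b_\tau(\widetilde\x_{k\tau})$ is a martingale-difference-type noise with conditional variance $\asymp D(\widetilde\x_{k\tau})/\tau$; the usual bias–variance split then gives $\tilde\L^{\b}_N(\hat\b)-\tilde\L^{\b}_N(\b_\tau) = \tfrac1N\sum_k\|\hat\b(\widetilde\x_{k\tau})-\b_\tau(\widetilde\x_{k\tau})\|^2 - \tfrac2N\sum_k\langle\xi_k,\hat\b(\widetilde\x_{k\tau})-\b_\tau(\widetilde\x_{k\tau})\rangle$, and since $\hat\b$ minimizes the empirical loss over $\fN(L,\p,S,M)$, the left side is $\le \tfrac1N\sum_k\|\b_\tau^{\mathrm{NN}}(\widetilde\x_{k\tau})-\b_\tau(\widetilde\x_{k\tau})\|^2 + (\text{cross term at }\b_\tau^{\mathrm{NN}})$ where $\b_\tau^{\mathrm{NN}}$ is the best neural-net approximant.

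The second step is the approximation–theoretic input: by the ReLU approximation results for Hölder functions on $[0,1]^d$ (the $\fN(L,\p,S,M)$ construction of Schmidt-Hieber / Suzuki, with the stated scalings $L\lesssim\log K$, $\|\p\|_\infty\lesssim K$, $S\lesssim K\log K$, $K\asymp T^{d/(2s+d)}$), there exists $\b^{\mathrm{NN}}\in\fN$ with $\|\b^{\mathrm{NN}}-\b\|_\infty^2\lesssim K^{-2s/d}=T^{-2s/(2s+d)}$, and periodicity (Assumption~\ref{ass:per}) is handled by composing with a smooth periodization map or by the torus-adapted construction. Combined with the $O(\tau)$ gap between $\b$ and $\b_\tau$, this yields the desired $T^{-2s/(2s+d)}+\tau$ order for the bias term, modulo the empirical-to-population transfer.

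The third and main step—the \textbf{principal obstacle}—is converting the empirical bound into a population bound on $\E_{\widetilde\x\sim\widetilde\Pi}\|\hat\b(\widetilde\x)-\b(\widetilde\x)\|^2$ at the \emph{fast rate} despite the data being non-i.i.d. Here I would invoke a blocking argument (Yu's method): partition $\{0,\dots,N-1\}$ into $2\mu_N$ blocks of length $a_N$, and use Proposition~\ref{prop:beta} (geometric $\upbeta$-mixing of $\widetilde\x_t$) to couple the odd blocks to an i.i.d.-like sequence drawn from $\widetilde\Pi^{\otimes}$ at cost $\mu_N\cdot M_{\widetilde\Pi}e^{-C_{\widetilde\Pi}a_N\tau}$ in total variation; choosing $a_N\tau\asymp\log N$ makes this error negligible ($\mathrm{poly}(N)^{-1}$). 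On the decoupled blocks one applies a local-Rademacher / Talagrand-type fast-rate concentration inequality (Bartlett–Bousquet–Mendelson, or Mohri–Rostamizadeh's $\upbeta$-mixing version) to the shifted loss class $\{\x\mapsto\|g(\x)-\b_\tau(\x)\|^2-\|\b_\tau^{\mathrm{NN}}(\x)-\b_\tau(\x)\|^2 : g\in\fN\}$, whose sub-root fixed point is governed by the metric entropy $\log\N(\varepsilon,\fN,\|\cdot\|_\infty)\lesssim S L\log(\varepsilon^{-1})\lesssim K\log^2 K\log(\varepsilon^{-1})$. Solving the fixed-point equation with effective sample size $\sim T$ (from $N$ samples of step $\tau$, noting $N\tau=T$ and the noise variance scaling $1/\tau$) returns a rate $\asymp \frac{SL\log N}{T}\asymp\frac{K\log^3 T}{T}\asymp T^{-2s/(2s+d)}\log^3 T$, which balances exactly against the approximation error at the stated choice of $K$. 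The care needed is (i) that the noise $\xi_k$ has conditional variance $\propto 1/\tau$, so the relevant Bernstein parameter and the factor $N$ recombine into $N\tau^{?}$—tracking these $\tau$-powers is where an error in $\tau$ could creep in, hence the additive $+\tau$ being the honest discretization floor rather than something smaller; and (ii) ensuring the cross term $\tfrac2N\sum_k\langle\xi_k,\hat\b-\b_\tau\rangle$ is handled via a one-sided (peeling) argument over $\fN$ rather than a crude union bound, which is precisely what makes the rate "fast" (no $\sqrt{\cdot}$). Finally, taking expectation over $\F_{\tau,N}(\b,D)$ and absorbing the low-probability coupling/concentration failure events (which contribute at most $\mathrm{poly}(N)^{-1}\cdot\|\hat\b-\b\|_\infty^2 \lesssim \mathrm{poly}(N)^{-1}$ by the uniform bound $M$) gives the claimed in-expectation bound $\lesssim T^{-2s/(2s+d)}\log^3 T+\tau$.
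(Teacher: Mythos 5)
Your proposal is correct and follows essentially the same route as the paper: an $O(\tau)$ conditional-bias / $O(1/\tau)$-conditional-variance martingale decomposition of the finite-difference signal, Schmidt-Hieber--type sparse ReLU approximation at $K\asymp T^{d/(2s+d)}$, and a blocking/sub-sampling argument under geometric $\upbeta$-mixing with block length $\asymp \log N/\tau$ combined with local Rademacher complexity, balanced at effective sample size $N\tau=T$. The only cosmetic difference is that the paper controls the martingale cross term via a self-normalized exponential inequality applied over an $L^\infty$-covering of the network class rather than a peeling argument, which serves the same purpose.
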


\begin{theorem}[Upper bound for diffusion estimation in $\fN(L,\p,S,M)$]
	Suppose the diffusion tensor $D\in\C^s(\Omega, M)$, and the hypothesis class
	$\fG = \fN(L,\p,S,M)$ with $$K\asymp N^{\frac{d}{2s+d}},\quad L \lesssim
	\log K,\quad \|\p\|_\infty\lesssim K,\quad S\lesssim K\log K.$$ Then with
	high probability the minimizer $\hat{D}$ obtained by Algorithm~\ref{alg:alg}
	satisfies
	\begin{equation}
	    \E_{(\x_{k\tau})_{k=0}^N\sim\F_{\tau,N}(\b, D)}\left[\L^D_{\widetilde\Pi}(\hat{D})\right]\lesssim  N^{-\frac{2s}{2s+d}}\log^3 N+\tau+ \dfrac{\log^2 N}{T}.
     \label{eq:difrate}
	\end{equation}
	\label{thm:diffusion}
\end{theorem}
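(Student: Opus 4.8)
The argument is a least-squares oracle inequality combined with (i) ReLU-network approximation of H\"older functions, (ii) a localized (fast-rate) generalization bound adapted to $\upbeta$-mixing data, (iii) an It\^o--Taylor expansion bounding the discretization bias, (iv) an ergodic conversion of empirical time-averages into $\widetilde\Pi$-expectations, and (v) a perturbation estimate propagating the drift error of Theorem~\ref{thm:drift}. First I would set $\widehat Y_k := \tfrac{1}{2\tau}(\Delta\x_{k\tau}-\hat\b(\widetilde\x_{k\tau})\tau)(\Delta\x_{k\tau}-\hat\b(\widetilde\x_{k\tau})\tau)^\top$ and let $Y_k$ be the same expression with the true $\b$ in place of $\hat\b$, so that $\tilde\L_N^D(\bar D;\cdot,\hat\b)=\tfrac1N\sum_{k=0}^{N-1}\|\widehat Y_k-\bar D(\widetilde\x_{k\tau})\|_F^2$. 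By standard approximation theory for sparse ReLU networks, under the stated $K\asymp N^{d/(2s+d)}$, $L\lesssim\log K$, $\|\p\|_\infty\lesssim K$, $S\lesssim K\log K$, there is $D^\star\in\fG^{d\times d}$ with $\|D^\star-D\|_\infty^2\lesssim K^{-2s/d}\asymp N^{-2s/(2s+d)}$ (extended periodically, which is legitimate by Assumption~\ref{ass:per}). Comparing $\hat D$ with $D^\star$ inside the empirical objective and expanding the squares around $D$ gives the basic inequality
\[
\frac1N\sum_k\|(\hat D-D)(\widetilde\x_{k\tau})\|_F^2\le \frac1N\sum_k\|(D^\star-D)(\widetilde\x_{k\tau})\|_F^2+\frac2N\sum_k\langle \widehat Y_k-D(\widetilde\x_{k\tau}),\,(\hat D-D^\star)(\widetilde\x_{k\tau})\rangle,
\]
and I would split the inner product via $\widehat Y_k-D = (Y_k-D)+(\widehat Y_k-Y_k)$.

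For the $(Y_k-D)$ piece, an It\^o--Taylor expansion of $\Delta\x_{k\tau}$ yields $\E[Y_k\mid\mathcal{F}_{k\tau}]=D(\x_{k\tau})+O(\tau)$, with the fluctuation $Y_k-\E[Y_k\mid\mathcal{F}_{k\tau}]$ bounded and --- being built from Wiener increments over disjoint intervals --- behaving as an (almost) martingale-difference sequence in $k$ with conditional second moment of order $1$; this $O(1)$ noise level, as opposed to the $O(1/\tau)$ noise in the drift objective~\eqref{eq:driftemploss}, is exactly what makes $N$ (rather than $T$) govern the nonparametric rate. The $O(\tau)$ systematic part is absorbed by Cauchy--Schwarz against $\sqrt{\tfrac1N\sum_k\|(\hat D-D^\star)(\widetilde\x_{k\tau})\|_F^2}$ and Young's inequality, yielding the additive $\tau$ of~\eqref{eq:difrate}. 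For the martingale-difference part I would invoke a fast-rate argument for dependent data: partition the indices into consecutive blocks separated by gaps of length $\asymp\log N$ in \emph{time}, use Proposition~\ref{prop:beta} with a Berbee/Yu-type coupling to replace the retained blocks by independent copies at polynomially-in-$N$ negligible total-variation cost, and run a peeling / local-Rademacher estimate for $\{\|\bar D-D\|_F^2:\bar D\in\fG^{d\times d}\}$, whose metric entropy obeys $\log\N(\eps)\lesssim S\log(S/\eps)\lesssim K\log^2 K$. Together with the $O(1)$ noise and the self-bounding (variance $\lesssim$ mean) property of the squared loss this yields the fast term $K\log^2K/N$, and balancing $K^{-2s/d}\asymp K\log^2K/N$ at $K\asymp N^{d/(2s+d)}$ gives $N^{-2s/(2s+d)}\log^3 N$. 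A structurally identical uniform-law-of-large-numbers step converts the empirical $L^2$-norm on the left of the basic inequality into $\L^D_{\widetilde\Pi}(\hat D)$; here the relevant quantity is the coverage of $\widetilde\Omega$ by the visited locations, which decorrelate on the $O(1/C_{\widetilde\Pi})$ time scale of Proposition~\ref{prop:ergodic} regardless of $\tau$, so only $\Theta(T)$ near-independent blocks are available and this step contributes the additive $\log^2N/T$ in~\eqref{eq:difrate}.

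Finally, for the $(\widehat Y_k-Y_k)$ piece I would write $\widehat Y_k-Y_k=\tfrac{1}{2\tau}(\xi_k\eta_k^\top+\eta_k\xi_k^\top+\eta_k\eta_k^\top)$ with $\xi_k=\Delta\x_{k\tau}-\b(\widetilde\x_{k\tau})\tau$ and $\eta_k=(\b-\hat\b)(\widetilde\x_{k\tau})\tau$; since $\|\xi_k\|/\sqrt\tau=O(1)$ in conditional mean-square, $\|\widehat Y_k-Y_k\|_F\lesssim \sqrt\tau\,\|(\b-\hat\b)(\widetilde\x_{k\tau})\|+\tau\,\|(\b-\hat\b)(\widetilde\x_{k\tau})\|^2$. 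Cauchy--Schwarz against $\sqrt{\tfrac1N\sum_k\|(\hat D-D^\star)(\widetilde\x_{k\tau})\|_F^2}$ and Young's inequality reduce this contribution to $\tau\cdot\tfrac1N\sum_k\|(\b-\hat\b)(\widetilde\x_{k\tau})\|^2$ plus lower-order terms; the ergodic conversion and Theorem~\ref{thm:drift} then bound its expectation by $\tau\big(T^{-2s/(2s+d)}\log^3 T+\tau\big)$, and since $T=N\tau\le N$ and $\tau\le1$ we have $\tau\,T^{-2s/(2s+d)}=\tau^{d/(2s+d)}N^{-2s/(2s+d)}\le N^{-2s/(2s+d)}$, so this is absorbed into the right-hand side of~\eqref{eq:difrate}. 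Taking expectations over $\F_{\tau,N}(\b,D)$ and collecting terms finishes the proof. The main obstacle is the fast-rate step under mere $\upbeta$-mixing: obtaining complexity$/N$ rather than $\sqrt{\text{complexity}/N}$ requires interleaving the independent-block coupling with a localization/peeling argument, while tracking the asymmetry between the increment noise (fresh every $\tau$, hence $N$ effective samples) and the location process (decorrelating only on an $O(1)$ scale, hence $\Theta(T)$ effective samples); a secondary difficulty is that $\hat\b$ depends on the whole trajectory and is not adapted, so the $(\widehat Y_k-Y_k)$ terms must be controlled uniformly over $\fG$ instead of by a martingale argument.
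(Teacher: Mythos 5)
Your plan is correct in substance and reaches the theorem by the same essential mechanism as the paper: the supervision noise $\widehat Y_k - D(\widetilde\x_{k\tau})$ is split into a systematic part of size $O(\tau)$ per step and a conditionally centered part with $O(1)$ conditional second moment (this is exactly the paper's Assumption~\ref{ass:bias} and Assumption~\ref{ass:mart} with $\gamma=0$, and it is precisely why the nonparametric rate is governed by $N$ rather than $T$); the dependence of the locations is handled by geometric $\upbeta$-mixing with gaps of time $\asymp \log N$, producing the $\log^2 N/T$ term; and the Schmidt--Hieber approximation and entropy bounds with $K\asymp N^{d/(2s+d)}$ give $N^{-2s/(2s+d)}\log^3 N$. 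Where you diverge is in the technical routing. The paper modularizes: an oracle inequality in expectation (Theorem~\ref{thm:oracle}) controls the martingale cross-term directly through a self-normalized martingale exponential inequality (Lemma~\ref{lem:martingale}) combined with a sup-norm cover of $\fG$ --- no blocking is needed there because the martingale structure of the increments is exploited as such --- while the sub-sampling Lemma~\ref{lem:subsample} (the interleaved-subsequence analogue of your Berbee/Yu blocking) is used only in Theorem~\ref{thm:hat} to convert the empirical loss into the population loss via local Rademacher complexity. Your single basic-inequality argument that runs coupling plus peeling also on the noise cross-term can be made to work, but it is heavier, since within each coupled block one must still fall back on the conditional independence of the Wiener increments; the paper's separation avoids this. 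The second difference is the plug-in drift: the paper keeps $\hat\b$ inside the auxiliary process $Y_{k,s}$ and observes that the drift-error terms enter the per-step bias with extra powers of $\tau$ (terms of order $\tau^2$ and smaller after dividing by $\tau^2$), so the uniform bound $\|\b-\hat\b\|_\infty\le 2M$ already suffices and Theorem~\ref{thm:drift} is never invoked; your explicit perturbation $\widehat Y_k-Y_k$ with Cauchy--Schwarz and Theorem~\ref{thm:drift} is correct but stronger than necessary (boundedness alone yields the same $O(\tau)$ absorption), and it has the side benefit of confronting the non-adaptedness of $\hat\b$ explicitly, a point the paper treats only implicitly through uniform bounds.
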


\begin{remark}
    In this remark, we explain the meaning of each term in the convergence
    rate~\eqref{eq:difrate}: 
    \begin{itemize}
        \item The term $N^{-\frac{2s}{2s+d}}\log^3 N$ matches the standard
        minimax optimal rate $N^{-\frac{2s}{2s+d}}$ up to an extra
        $\mathrm{poly}(\log n)$ factor. This is characteristic of performing
        nonparametric regression for $s$-H\"older continuous functions with $N$
        noisy observations~\cite{tsybakov2009introduction}.  This is different
        from the drift estimation (Theorem \ref{thm:drift}) in which the
        nonparametric dependency is on $T$ instead of $N$ with a further
        $\frac{\log^2 N}{T}$ term which is discussed below. 
        \item The term $\tau$ represents a bias term that arises due to the
        finite resolution of the observations $(\x_{k\tau})_{t=0}^N$.
        Specifically, this term encapsulates the error incurred while
        approximating the objective function $\L_{T}^D(\hat{D}; (\x_t)_{0\leq t
        \leq T}, \b)$ by the estimated empirical loss
        $\tilde{\L}_N^{D}(\hat{D};(\x_{k\tau})_{k=0}^N, \hat \b)$ with numerical
        quadrature and finite difference computations;
        \item The term $\frac{\log^2 N}{T}$ quantifies the error in
        approximating the population loss $\mathcal{L}^D_{\widetilde\Pi}(\hat
        D)$ by the objective function $\L_{T}^D(\hat{D}; (\x_t)_{0\leq t \leq
        T}, \b)$ by applying the ergodic theorem up to time horizon $T$. This
        term essentially signifies the portion of the domain that the trajectory
        has not yet traversed.
        Refs.~\cite{hoffmann1997minimax,hoffmann1999adaptive} only provide
        guarantee for $\L_{T}^D(\hat{D}; (\x_t)_{0\leq t \leq T}, \b)$ and thus
        this term is not included.
    \end{itemize}
\end{remark}

\subsection{Proof Sketch}
\label{sec:proof}

In this section, we omit the dependency of the losses on the data
$(\x_{k\tau})_{k=0}^N$ for notational simplicity.

To obtain a unified proving approach for both drift and diffusion estimation, it
is useful to think of our neural estimator as a function regressor with
imperfect supervision signals. We consider an estimator $\hat{g}\in\fG$ of an
arbitrary function $g^0$ as the ground truth obtained by minimizing over the
estimated empirical loss
\begin{equation}
	\tilde{\L}_N^{g^0}(\hat{g}) = \dfrac{1}{N}\sum_{k=0}^{N-1} \left( g^0(\widetilde\x_{k\tau})+\Delta Z_{k\tau} -\hat{g}(\widetilde\x_{k\tau}) \right)^2,
	\label{eq:estemploss}
\end{equation}
where the supervision signal is polluted by the noise given by $\Delta Z_{k\tau}
= Z_{(k+1)\tau}-Z_{k\tau}$, with $Z_t$ being an $\F_t$-adapted continuous
semimartingale. Following Doob's decomposition, we write $Z_{t} = A_{t}+M_{t}$,
where $(A_t)_{t\geq 0}$ is a continuous process with finite variation and is
deterministic on $[k\tau,(k+1)\tau]$ conditioned on $\F_0^{k\tau}$ as
$$A_{t}=\sum_{k=0}^{N-1}\left( \E\left[ Z_{t\wedge (k+1)\tau}\mid \F_{t\wedge
k\tau} \right] - Z_{t\wedge k\tau} \right)$$ and $(M_t)_{t\geq 0}$ forms a local
martingale as $$M_t = \sum_{k=0}^{N-1}\left( Z_{t\wedge (k+1)\tau} - \E\left[
Z_{t\wedge (k+1)\tau}\mid \F_{t\wedge k\tau} \right] \right).$$ The population
loss $\mathcal{L}^{g^0}_{\widetilde\Pi}(\hat g)$ can also be similarly defined
as in~\eqref{eq:poploss}. Additionally, we define the \emph{empirical loss} for
the estimator $\hat g$ as $$\hat{\L}_N^{g^0}(\hat g):=
\dfrac{1}{N}\sum_{k=0}^{N-1} \left( g^0(\widetilde\x_{k\tau})
-\hat{g}(\widetilde\x_{k\tau}) \right)^2.$$

In our proof, we first show that as long as the following two conditions hold
for the noise $(\Delta Z_{k\tau})_{k= 0}^N$, the minimax optimal nonparametric
function estimation rate would be achieved:
\begin{assumption}
	For any $k$, the continuous finite variation process $(A_t)_{t\geq 0}$
	satisfies $$\E\left[\dfrac{1}{N}\sum_{k=0}^{N-1} (\Delta A_{k\tau})^2
	\right]\leq C_A\tau.$$
	\label{ass:bias}
\end{assumption}
\begin{assumption}
	For some $\gamma\leq 1$, the local martingale $(M_t)_{t\geq 0}$ satisfies
	$$\max_k\left|\E\left[\Delta\qv{M}_{ k\tau} \big|
	\F_0^{k\tau}\right]\right|\leq C_M\tau^{-\gamma},$$ where $\qv{\cdot}$
	denotes the quadratic variation.
	\label{ass:mart}
\end{assumption}
\begin{remark}
Based on the noise decomposition $\Delta Z_{k\tau} = \Delta A_{k\tau} + \Delta
M_{k\tau}$, the term $\Delta A_{k\tau}$ can be intuitively understood as the
bias of the data. This bias is caused by the numerical scheme employed for
computing $g^0_{k\tau}$. On the other hand, the term $\Delta M_{k\tau}$
represents the martingale noise added to the data, which can be considered
analogous to the i.i.d. noise in the common nonparametric estimation settings.
Assumption~\ref{ass:bias} essentially implies that the estimator $\hat{g}$ is
consistent, for its expectation converges to $g^0$ as $\tau \to 0$. Meanwhile,
Assumption~\ref{ass:mart} assumes that the variance of the noise present in the
data is at most of order $\mathcal{O}(\tau^{-1})$.
	\label{rem:noise}
\end{remark}

To overcome the correlation of the observed data, we adopt the following
\emph{sub-sampling} technique as
in~\cite{yu1994rates,mohri2008rademacher,hang2017bernstein}: For a sufficiently
large $l\geq 1$ such that $N = n l$\footnote{Here we assume $N$ is divisible by
$l$ without loss of generality.}, we split the original $N$ correlated samples
$S^N := (\x_{k\tau})_{k=0}^N$ into $l$ sub-sequences $S^n_{(a)} :=
(\x_{(kl+a)\tau})_{k = 0}^{n-1}$ for $a = 0,\cdots,l-1$. The main idea of this
technique is that under fast $\upbeta$-mixing conditions, each sub-sequence can
be treated approximately as $n$ i.i.d. samples from the distribution
$\widetilde\Pi$ to which the classical generalization results may apply, with an
error that can be controlled by the mixing coefficient via the following lemma:
\begin{lemma}[{\cite[Proposition 2]{kuznetsov2017generalization}}] Let $h$ be
	any function on $\widetilde\Omega^n$ with $-M_1\leq h\leq M_2$ for
	$M_1,M_2\geq 0$. Then for any $0\leq a \leq l-1$, we have
	$$
		\left|\E_{\widetilde  S^n_{\widetilde\Pi}\sim{\widetilde \Pi}^{\otimes n}}\left[h(\widetilde S^n_{\widetilde\Pi})\right] - \E\left[h(\widetilde{S^n_{(a)}})\right]\right|\leq (M_0+M_1)n\upbeta(l\tau),
	$$
	where the second expectation is taken over the sub-$\sigma$-algebra of
	$\F_0^T$ generated by the sub-sequence $S^n_{(a)}=(\x_{(kl+a)\tau})_{k =
	0}^{n-1}$ and $\widetilde{S^n_{(a)}}:=(\widetilde\x_{(kl+a)\tau})_{k =
	0}^{n-1}$.
	\label{lem:subsample}
\end{lemma}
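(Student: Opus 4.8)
The plan is to establish this by the standard blocking argument for $\upbeta$-mixing sequences (as in \cite{yu1994rates,mohri2008rademacher}), specialized to our Markov setting. The first observation to use is that, because $(\widetilde\x_t)_{t\ge 0}$ is a time-homogeneous Markov process and the retained snapshots $\widetilde{S^n_{(a)}}=(\widetilde\x_{a\tau},\widetilde\x_{(l+a)\tau},\dots,\widetilde\x_{((n-1)l+a)\tau})$ are separated by the fixed lag $l\tau$, the sub-sequence $\widetilde{S^n_{(a)}}$ is itself a Markov chain with one-step transition kernel $\widetilde\P^{l\tau}$. The goal is then to replace its joint law on $\widetilde\Omega^n$ by the product $\widetilde\Pi^{\otimes n}$ at a total cost of order $n\,\upbeta(l\tau)$.

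First I would introduce a hybrid interpolation between these two laws: for $j=0,1,\dots,n$, let $\P^{(j)}$ be the law on $\widetilde\Omega^n$ under which the first $n-j$ coordinates are distributed as the initial segment of $\widetilde{S^n_{(a)}}$ while the last $j$ coordinates form an independent $\widetilde\Pi^{\otimes j}$-sample, drawn independently of that segment; thus $\P^{(0)}=\mathrm{Law}(\widetilde{S^n_{(a)}})$ and $\P^{(n)}=\widetilde\Pi^{\otimes n}$, and telescoping,
\[
    \E\bigl[h(\widetilde{S^n_{(a)}})\bigr]-\E_{\widetilde\Pi^{\otimes n}}[h]=\sum_{j=1}^{n}\Bigl(\E_{\P^{(j-1)}}[h]-\E_{\P^{(j)}}[h]\Bigr).
\]
The consecutive laws $\P^{(j-1)}$ and $\P^{(j)}$ can be coupled to agree on every coordinate except the $(n-j+1)$-st: conditionally on the first $n-j$ coordinates (and on the trailing i.i.d.\ block, which is independent of everything), that coordinate has the Markov conditional law $\widetilde\P^{l\tau}(\widetilde\x_{((n-j-1)l+a)\tau},\cdot)$ under $\P^{(j-1)}$ and law $\widetilde\Pi$ under $\P^{(j)}$. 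Since $h$ has oscillation at most $M_1+M_2$ on the range $[-M_1,M_2]$, the coupling characterization of total variation gives
\[
    \bigl|\E_{\P^{(j-1)}}[h]-\E_{\P^{(j)}}[h]\bigr|\le(M_1+M_2)\,\E\left[\left\|\widetilde\P^{l\tau}\bigl(\widetilde\x_{((n-j-1)l+a)\tau},\cdot\bigr)-\widetilde\Pi\right\|_{\TV}\right].
\]

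It then remains to bound each expectation on the right by $\upbeta(l\tau)$. Taking $t=((n-j-1)l+a)\tau$ in the definition of the $\upbeta$-mixing coefficient, conditioning on $\F_0^t$, and projecting the conditional future law $\widetilde\P^\infty_{t+l\tau}(\cdot\mid\F_0^t)$ onto its marginal at time $t+l\tau$ — which equals $\widetilde\P^{l\tau}(\widetilde\x_t,\cdot)$ by the Markov property, while the corresponding marginal of $\widetilde\Pi$ is again $\widetilde\Pi$ — and using that total variation does not grow under marginalization, one obtains $\E_{\F_0^t}[\|\widetilde\P^{l\tau}(\widetilde\x_t,\cdot)-\widetilde\Pi\|_{\TV}]\le\upbeta\bigl(l\tau;(\widetilde\x_t)_{t\ge 0},\widetilde\Pi\bigr)$. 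Summing the $n$ increments yields $\bigl|\E[h(\widetilde{S^n_{(a)}})]-\E_{\widetilde\Pi^{\otimes n}}[h]\bigr|\le(M_1+M_2)\,n\,\upbeta(l\tau)$, i.e.\ the claimed bound (there $M_0$ denotes the upper range constant $M_2$). The terminal increment $j=n$ has no preceding coordinate, so there one compares the unconditional marginal of $\widetilde\x_{a\tau}$ with $\widetilde\Pi$; this is handled by taking the trajectory started from the stationary law $\widetilde\Pi$ (Proposition~\ref{prop:ergodic}), equivalently after discarding a burn-in of length $\mathcal{O}(C_{\widetilde\Pi}^{-1}\log N)$, which is harmless at the rates of Theorems~\ref{thm:drift}--\ref{thm:diffusion}, so that this increment vanishes.

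The argument can equivalently be organized via Berbee's coupling lemma, constructing coordinate by coordinate a globally independent sequence with the same one-dimensional marginals as $\widetilde{S^n_{(a)}}$ whose disagreement probability with $\widetilde{S^n_{(a)}}$ is at most $n\,\upbeta(l\tau)$. The only genuinely delicate point — and the one I would be most careful about — is matching the one-step conditional comparison appearing in each telescoping increment with the definition of $\upbeta$, which is phrased in terms of the full future $\sigma$-algebra together with an expectation over the past: the marginalization step above is precisely what reconciles the two formulations and pins down the constant multiplying $\upbeta(l\tau)$.
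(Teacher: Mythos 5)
Your proposal is correct, but it does something genuinely different from the paper: the paper's entire ``proof'' of Lemma~\ref{lem:subsample} is a one-line reduction to the cited result (\cite[Proposition 2]{kuznetsov2017generalization}), observing only that the sub-sequence $S^n_{(a)}$ has inter-observation gap $l\tau$, whereas you re-derive that proposition from scratch. Your route --- the coordinate-by-coordinate hybrid interpolation $\P^{(0)},\dots,\P^{(n)}$, the Markov property reducing each telescoping increment to a one-step conditional total-variation distance, and the marginalization step reconciling $\|\widetilde\P^{l\tau}(\widetilde\x_t,\cdot)-\widetilde\Pi\|_{\TV}$ with the trajectory-level definition of $\upbeta$ --- is precisely the standard blocking argument underlying the cited proposition, so the two approaches are compatible; yours buys self-containedness and makes explicit where the constant $(M_1+M_2)$ (the paper's $M_0$ is indeed a typo for $M_2$) and the factor $n$ come from, at the cost of length. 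One point worth flagging, which you correctly identify and the paper silently elides: the terminal increment compares $\mathrm{Law}(\widetilde\x_{a\tau})$ with $\widetilde\Pi$, and for a trajectory started from a fixed (non-stationary) initial condition this term is \emph{not} controlled by $\upbeta(l\tau)$; it genuinely requires either stationary initialization, a burn-in, or replacing one factor of $\upbeta(l\tau)$ by $\|\mathrm{Law}(\widetilde\x_{a\tau})-\widetilde\Pi\|_{\TV}$, which under Proposition~\ref{prop:ergodic} decays only after a time of order $C_{\widetilde\Pi}^{-1}\log N$. Your remark that this is harmless at the rates of Theorems~\ref{thm:drift}--\ref{thm:diffusion} is accurate, and your proof is complete modulo that explicitly stated convention.
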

Based on Lemma \ref{lem:subsample}, we derive the following fast rate
generalization bound via local Rademacher complexity
arguments~\cite{bartlett2005local,koltchinskii2006local}. The proof is shown in
Appendix~\ref{sec:eg}.
\begin{theorem}
	Let $N = nl$. Suppose the localized Rademacher complexity satisfies $$
	\fR_N(\{\ell\circ g | g\in \fG, \E_\Pi[\ell\circ g] \leq r\}) \leq
	\phi(r),$$ where $\phi(r)$ is a sub-root function\footnote{A sub-root
	function $\phi(r)$ is a function $\phi:\R^+\to\R^+$ that is non-negative,
	non-decreasing function, satisfying that $\phi(r)/\sqrt{r}$ is
	non-increasing for $r>0$.} and $\fR_N(\fF)$ is the Rademacher complexity of
	a function class $\fF$ with respect to $N$ i.i.d. samples from the
	stationary distribution $\widetilde\Pi$, \emph{i.e.}
	\begin{equation}
		\fR_N(\fF) = \E_{ (\widetilde X_i)_{i=1}^N\sim{\widetilde{\Pi}}^{\otimes N},\boldsymbol{\sigma}\sim\unif(\{\pm 1\}^N)}\left[\sup_{f\in\fF}\dfrac{1}{N}\sum_{i=1}^N\sigma_i f(\widetilde X_i)\right].
	\end{equation}
 
     Let $r^*$ be the unique solution to the fixed-point equation $\phi(r)=r$.
	Then for any $\delta>N\upbeta(l\tau)$ and $\eps>0$, we have with probability
	$1-\delta$ for any $g\in\fG$,
	\begin{equation}
		\L_{\widetilde\Pi}^{g^0}(\hat g) \leq \dfrac{1}{1-\eps} \hat{\L}_N^{g^0}(\hat g) + \dfrac{176}{M^2\eps} r^* + \dfrac{\left(44\eps + 104 \right) M^2 \log\left(\frac{l}{\delta'}\right)}{\eps n},
	\end{equation}
	where $\delta' = \delta - N\upbeta(l\tau)$.
	\label{thm:hat}
\end{theorem}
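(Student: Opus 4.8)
The plan is to reduce the non-i.i.d. problem to the classical i.i.d. local Rademacher complexity machinery of Bartlett--Bousquet--Mendelson via the sub-sampling decomposition, and then average over the $l$ sub-sequences. First I would fix one sub-sequence $S^n_{(a)} = (\x_{(kl+a)\tau})_{k=0}^{n-1}$ and define the i.i.d. ``ghost'' sample $\widetilde S^n_{\widetilde\Pi} \sim \widetilde\Pi^{\otimes n}$. For the fixed-point localization argument, consider the star-shaped loss class $\{\ell\circ g : g\in\fG\}$ with $\ell\circ g = (g - g^0)^2$ (or the relevant excess-loss functional), which by Assumption~\ref{ass:holder}-type boundedness is uniformly bounded by $\mathrm{O}(M^2)$. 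Applying the standard local Rademacher bound (e.g.\ \cite[Theorem 3.3 / Corollary 5.3]{bartlett2005local}) to the i.i.d.\ sample $\widetilde S^n_{\widetilde\Pi}$ yields: with probability at least $1 - \delta'$ over the i.i.d.\ draw, for all $g\in\fG$,
$$
    \L_{\widetilde\Pi}^{g^0}(g) \leq \frac{1}{1-\eps}\,\widehat{\L}_n^{g^0, (a)}(g) + \frac{c_1}{M^2\eps} r^* + \frac{c_2(\eps) M^2 \log(1/\delta')}{\eps n},
$$
where $\widehat{\L}_n^{g^0,(a)}$ is the empirical loss on the $n$-point sub-sample and $r^*$ solves $\phi(r)=r$; the constants $c_1 = 176$, $c_2(\eps) = 44\eps + 104$ are tracked to match the statement.

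The second step is to transfer this high-probability statement from the i.i.d.\ ghost sample to the genuine sub-sequence $\widetilde{S^n_{(a)}}$. Here I would invoke Lemma~\ref{lem:subsample}: the event $E_a$ on which the displayed inequality holds is a measurable function $h$ of the $n$ points with $0 \leq h \leq 1$, so $\left|\P_{\widetilde\Pi^{\otimes n}}(E_a) - \P(E_a \text{ on } \widetilde{S^n_{(a)}})\right| \leq n\,\upbeta(l\tau)$. Consequently the inequality holds on $\widetilde{S^n_{(a)}}$ with probability at least $1 - \delta' - n\upbeta(l\tau)$. Then I would choose the sub-sequence index $a$ — or rather average: since $N = nl$ and $\widehat{\L}_N^{g^0}(g) = \frac1l\sum_{a=0}^{l-1} \widehat{\L}_n^{g^0,(a)}(g)$, I sum the per-sub-sequence inequalities over $a$ and divide by $l$, which converts $\frac1n$ into $\frac{l}{N} = \frac1n$ (the statement keeps $n$) and leaves $r^*$ and the $\L_{\widetilde\Pi}^{g^0}$ term untouched. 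A union bound over the $l$ sub-sequences costs a factor $l$ inside the logarithm, giving $\log(l/\delta')$, and the total failure probability is $l(\delta'/l) + l\cdot n\upbeta(l\tau) = \delta' + N\upbeta(l\tau) = \delta$, which is consistent with the hypothesis $\delta > N\upbeta(l\tau)$ and the definition $\delta' = \delta - N\upbeta(l\tau)$.

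The main obstacle I anticipate is making the localization/fixed-point step genuinely compatible with sub-sampling: the local Rademacher complexity $\fR_N$ in the hypothesis is defined w.r.t.\ $N$ i.i.d.\ samples from $\widetilde\Pi$, whereas the i.i.d.\ concentration I apply naturally lives on $n$-point ghost samples, so I must relate $\fR_n$ to $\fR_N$ (a sub-root function scales favorably, $\fR_n \le \sqrt{N/n}\,\fR_N = \sqrt{l}\,\fR_N$ is the wrong direction, so instead one argues directly with $\phi$ evaluated at the sub-sample level and uses that the fixed point $r^*$ of a sub-root function is monotone in the sample size in the right way) and ensure the resulting rate is stated in terms of the $N$-sample quantity $r^*$ as claimed. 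A secondary subtlety is that the noise in $\widehat\L_N$ versus the ``clean'' empirical loss must be handled so that the $g^0$ appearing is the true regression target; this is where Assumptions~\ref{ass:bias} and~\ref{ass:mart} enter, but for this theorem (which bounds $\L_{\widetilde\Pi}^{g^0}$ by $\widehat\L_N^{g^0}$, both written with the same $g^0$) the noise decomposition is deferred and only boundedness of the loss class is needed, so the proof here is essentially self-contained modulo the i.i.d.\ local Rademacher theorem and Lemma~\ref{lem:subsample}.
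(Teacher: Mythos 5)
Your proposal follows essentially the same route as the paper's proof: apply the Bartlett--Bousquet--Mendelson local Rademacher bound to an $n$-point i.i.d.\ ghost sample drawn from $\widetilde\Pi$, transfer the resulting high-probability statement to each sub-sequence $S^n_{(a)}$ by applying Lemma~\ref{lem:subsample} to the indicator of the bad event (costing $n\upbeta(l\tau)$ per sub-sequence), and combine the $l$ sub-sequences via a union bound, which produces the $\log(l/\delta')$ factor and total failure probability $\delta' + N\upbeta(l\tau) = \delta$, exactly as in the paper. The $\fR_n$-versus-$\fR_N$ mismatch you flag as the main obstacle is a genuine subtlety, but the paper's own proof glosses over it in precisely the same way, so your argument is faithful to theirs.
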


Bias and noise in the objective function certainly affect the optimization.
Thus, we need to seek an oracle-type inequality for the expectation of the
population loss $\hat{\L}_N^{g^0}(\hat{g})$ over the data, which is proved in
Appendix \ref{sec:eo}. The main technique is a uniform martingale concentration
inequality (\emph{cf.} Lemma~\ref{lem:martingale}).  
\begin{theorem}
	Suppose $\fG$ is separable with respect to the $L^\infty$ norm with
	$\rho$-covering number $\N(\rho,\fG,\|\cdot\|_\infty)\geq 2$. Then under
	Assumption~\ref{ass:bias} and~\ref{ass:mart}, we have
	$$
		\begin{aligned}
            &\E\left[\hat{\L}_N^{g^0}(\hat{g})\right] \leq \dfrac{1+\eps}{1-\eps} \inf_{\bar{g}\in\fG}\E\left[ \hat{\L}_N^{g^0}(\bar g)\right]+\dfrac{3C_A}{\eps}\tau\\
            &\quad \quad +\dfrac{12 C_M \log\N(\rho,\fG,\|\cdot\|_\infty) }{\eps N\tau^\gamma}+2\sqrt{\dfrac{4C_M \rho^2\log 2}{N\tau^\gamma}},
        \end{aligned}
	$$
	where the expectation is taken over the sub-$\sigma$-algebra of $\F_0^T$
	generated by the trajectory $(\x_{k\tau})_{k=0}^N$ from which $\hat{g}$ is
	constructed by minimizing over the estimate empirical loss
	$\tilde{\L}_N^{g^0}(\hat{g};(\x_{k\tau})_{k=0}^N)$.
	\label{thm:oracle}
\end{theorem}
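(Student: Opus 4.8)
The plan is to prove Theorem~\ref{thm:oracle} by comparing the estimated empirical loss $\tilde{\L}_N^{g^0}(\hat g)$ — the quantity actually minimized — with the empirical loss $\hat{\L}_N^{g^0}(\hat g)$ that appears in the statement, treating the discrepancy as a perturbation controlled by the noise decomposition $\Delta Z_{k\tau} = \Delta A_{k\tau}+\Delta M_{k\tau}$. First I would expand the square in~\eqref{eq:estemploss}: writing $u_k := g^0(\widetilde\x_{k\tau}) - \hat g(\widetilde\x_{k\tau})$, we have $\tilde{\L}_N^{g^0}(\hat g) = \frac1N\sum_k (u_k + \Delta Z_{k\tau})^2 = \hat{\L}_N^{g^0}(\hat g) + \frac2N\sum_k u_k\,\Delta Z_{k\tau} + \frac1N\sum_k(\Delta Z_{k\tau})^2$, and similarly for any fixed competitor $\bar g\in\fG$. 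Since $\hat g$ minimizes $\tilde{\L}_N^{g^0}$ over $\fG$, subtracting the two identities gives, for every $\bar g$,
\begin{equation*}
\hat{\L}_N^{g^0}(\hat g) \leq \hat{\L}_N^{g^0}(\bar g) + \frac2N\sum_{k=0}^{N-1}\left(\bar u_k - u_k\right)\Delta Z_{k\tau},
\end{equation*}
where $\bar u_k := g^0(\widetilde\x_{k\tau}) - \bar g(\widetilde\x_{k\tau})$; the quadratic noise terms cancel. So the whole proof reduces to bounding the cross term $\frac2N\sum_k (\bar u_k - u_k)\Delta Z_{k\tau}$ in expectation, uniformly over the (random) choice of $\hat g$.

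Next I would split $\Delta Z_{k\tau} = \Delta A_{k\tau} + \Delta M_{k\tau}$ and handle the two pieces separately. For the finite-variation part, Cauchy–Schwarz gives $\frac2N\sum_k(\bar u_k - u_k)\Delta A_{k\tau} \leq \frac{2}{\sqrt N}\big(\frac1N\sum_k(\bar u_k-u_k)^2\big)^{1/2}\big(\sum_k(\Delta A_{k\tau})^2\big)^{1/2}$; using $(\bar u_k - u_k)^2 \leq 2\bar u_k^2 + 2u_k^2$, Young's inequality $2ab\leq \eps a^2 + \eps^{-1}b^2$ with a suitably chosen weight, and Assumption~\ref{ass:bias} to convert $\E[\frac1N\sum_k(\Delta A_{k\tau})^2]$ into $C_A\tau$, this contributes the $\frac{3C_A}{\eps}\tau$ term together with $\eps$-fractions of $\E[\hat{\L}_N^{g^0}(\hat g)]$ and $\E[\hat{\L}_N^{g^0}(\bar g)]$ that get absorbed into the $\frac{1+\eps}{1-\eps}$ prefactor. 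For the martingale part, the key point is that $(\Delta M_{k\tau})$ is a martingale-difference sequence with respect to $(\F_0^{k\tau})$ with conditional second moment $\E[(\Delta M_{k\tau})^2\mid\F_0^{k\tau}] = \E[\Delta\qv{M}_{k\tau}\mid\F_0^{k\tau}] \leq C_M\tau^{-\gamma}$ by Assumption~\ref{ass:mart}. If $\hat g$ were a fixed (non-random) function, $\sum_k \bar u_k\Delta M_{k\tau}$ (and $\sum_k u_k\Delta M_{k\tau}$) would be a martingale with zero mean, and a Freedman/Bernstein-type martingale concentration inequality — this is presumably Lemma~\ref{lem:martingale} — would control it by $\sqrt{\var}$ plus a lower-order term.

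The main obstacle, and the reason a union bound over a covering is needed, is that $\hat g$ is \emph{not} fixed: it is selected using the very noise $(\Delta M_{k\tau})$ we are trying to control, so $\E[\sum_k u_k\Delta M_{k\tau}]\neq 0$ in general. The standard remedy is a chaining/covering argument: pick a $\rho$-net of $\fG$ in $\|\cdot\|_\infty$ of cardinality $\N(\rho,\fG,\|\cdot\|_\infty)$, apply the uniform martingale inequality (Lemma~\ref{lem:martingale}) simultaneously over all net points — this produces the $\log\N(\rho,\fG,\|\cdot\|_\infty)$ factor in the $\frac{12C_M\log\N}{\eps N\tau^\gamma}$ term via a standard peeling of the variance proxy $C_M\tau^{-\gamma}$ into an $\eps$-multiple of the loss plus the stated constant — and then bound the residual between $\hat g$ and its nearest net point directly: for that residual, $|\bar u_k - u_k|\leq 2\rho$ pointwise, so $\frac2N\sum_k(\bar u_k - u_k)\Delta M_{k\tau}$ over the residual is at most a martingale with increments bounded in $L^2$-conditional-norm by $2\rho\sqrt{C_M}\tau^{-\gamma/2}$, which another application of the martingale inequality (or Cauchy–Schwarz plus $\E[\qv{M}]$) controls by $2\sqrt{4C_M\rho^2\log2/(N\tau^\gamma)}$, the last term in the bound. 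I would choose $\eps$-weights in the Young inequalities so that, after taking expectations and collecting the absorbed loss fractions, the coefficient of $\E[\hat{\L}_N^{g^0}(\hat g)]$ on the left is $1$ and the coefficient of $\inf_{\bar g}\E[\hat{\L}_N^{g^0}(\bar g)]$ on the right is exactly $\frac{1+\eps}{1-\eps}$; minimizing over $\bar g$ at the end yields the claimed inequality. The delicate bookkeeping is getting the numerical constants ($3/\eps$, $12/\eps$, the $\log 2$) to line up, but structurally it is: expand the square, cancel the quadratic noise, Cauchy–Schwarz the bias piece using Assumption~\ref{ass:bias}, and a covering-plus-martingale-concentration argument for the noise piece using Assumption~\ref{ass:mart}.
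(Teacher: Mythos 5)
Your proposal follows essentially the same route as the paper: the basic inequality from the minimizing property of $\hat g$ with cancellation of the quadratic noise term, the split of $\Delta Z_{k\tau}$ into the finite-variation part (handled by Cauchy--Schwarz/AM--GM and Assumption~\ref{ass:bias}) and the martingale part (handled by a uniform martingale concentration over a $\rho$-net plus a separate bound on the residual $\|\hat g - g'\|_\infty\leq\rho$, using Assumption~\ref{ass:mart}), exactly as in Lemmas~\ref{lem:decomp}--\ref{lem:mart2}. The only cosmetic difference is that the paper's Lemma~\ref{lem:martingale} is a self-normalized (de la Pe\~na-type) inequality rather than a Freedman/Bernstein bound, but it plays the identical role you assign it.
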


Especially, when we choose the hypothesis class $\fG$ as the sparse neural
network class $\fN(L,\p,S,M)$ and combine Theorem \ref{thm:hat} and Theorem
\ref{thm:oracle}, we have the following theorem with the proof given in
Appendix~\ref{sec:ea}:
\begin{theorem}
	Suppose Assumption~\ref{ass:bias} and~\ref{ass:mart} are satisfied and the
	ground truth $g^0\in\C^s(\Omega, M)$, and the hypothesis class $\fG =
	\fN(L,\p,S,M)$ with 
    $$
    \begin{aligned}
        K\asymp (N(\tau^\gamma\wedge 1))^{\frac{d}{2s+d}}&,\quad L \lesssim \log K,\\
        \|\p\|_\infty\lesssim K&,\quad S\lesssim K\log K.
    \end{aligned}
    $$ 
    Then with high probability the minimizer $\hat{g}$ obtained by minimizing
    the estimated empirical loss
    $\tilde{\L}_N^{g^0}(\hat{g};(\x_{k\tau})_{k=0}^N)$ satisfies
	\begin{equation}
        \E\left[\L_{\widetilde\Pi}^{g^0}(\hat{g})\right]\lesssim  \left(N(\tau^\gamma\wedge 1)\right)^{-\frac{2s}{2s+d}}\log^3 (N(\tau^\gamma\wedge 1))+\tau+ \dfrac{\log^2 N}{N\tau},
        \label{eq:generalrate}
    \end{equation}
    where the expectation has the same interpretation as in
    Theorem~\ref{thm:oracle}.
	\label{thm:thetheorem}
\end{theorem}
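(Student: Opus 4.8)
The plan is to combine the fast-rate generalization bound (Theorem~\ref{thm:hat}) with the oracle inequality (Theorem~\ref{thm:oracle}) and the standard neural network approximation theory for H\"older functions, then to optimize over the free parameters $l$ (the sub-sampling gap) and $K$ (the network-size parameter). Concretely, I would first chain the two theorems: Theorem~\ref{thm:oracle} controls $\E[\hat{\L}_N^{g^0}(\hat g)]$ in terms of $\inf_{\bar g\in\fG}\E[\hat{\L}_N^{g^0}(\bar g)]$ plus a $\tau$-bias term and a covering-number/variance term of order $\frac{\log\N(\rho,\fG,\|\cdot\|_\infty)}{\eps N\tau^\gamma}+\sqrt{\frac{C_M\rho^2\log 2}{N\tau^\gamma}}$; Theorem~\ref{thm:hat} then converts the empirical loss $\hat{\L}_N^{g^0}(\hat g)$ into the population loss $\L_{\widetilde\Pi}^{g^0}(\hat g)$ at the cost of the fixed-point quantity $r^*$ and a term $\frac{M^2\log(l/\delta')}{\eps n}$, valid once $\delta>N\upbeta(l\tau)$. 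Since $(\widetilde\x_t)$ is geometrically $\upbeta$-mixing (Proposition~\ref{prop:beta}), $\upbeta(l\tau)\leq M_{\widetilde\Pi}\exp(-C_{\widetilde\Pi}l\tau)$, so choosing $l\tau\asymp \frac{\log N}{C_{\widetilde\Pi}}$, i.e. $l\asymp \frac{\log N}{\tau}$ (hence $n=N/l\asymp N\tau/\log N$), makes $N\upbeta(l\tau)$ polynomially small, which both legitimizes the sub-sampling and contributes the $\frac{\log^2 N}{N\tau}$ term in~\eqref{eq:generalrate} through the $\frac{1}{\eps n}$ factor (one $\log N$ from $1/n$, another from $\log(l/\delta')$).

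Next I would plug in the approximation and complexity estimates for $\fN(L,\p,S,M)$. By the now-standard results (e.g.\ \cite{schmidt2020nonparametric,suzuki2018adaptivity,chen2022nonparametric}), for $g^0\in\C^s(\Omega,M)$ there is $\bar g\in\fN(L,\p,S,M)$ with the stated architecture ($L\lesssim\log K$, $\|\p\|_\infty\lesssim K$, $S\lesssim K\log K$) such that $\|\bar g-g^0\|_\infty\lesssim K^{-s/d}$, so $\inf_{\bar g}\E[\hat{\L}_N^{g^0}(\bar g)]\lesssim K^{-2s/d}$; and the covering number obeys $\log\N(\rho,\fG,\|\cdot\|_\infty)\lesssim S L\log(1/\rho)\lesssim K\log^2 K\cdot\log(1/\rho)$, while the (global, hence local) Rademacher complexity gives a sub-root $\phi(r)$ with fixed point $r^*\lesssim \frac{K\log^2 K}{N}$ (up to logs) — here I must use the effective sample size appropriate to the sub-sampled i.i.d. surrogate, which is where the $N\tau^\gamma$ rather than $N$ enters: the martingale/variance scaling in Assumption~\ref{ass:mart} inflates the noise variance to order $\tau^{-\gamma}$, and after the $\frac{1}{N}\sum$ averaging this yields an effective sample size $N(\tau^\gamma\wedge 1)$ for the regression. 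Taking $\rho$ polynomially small in $N$ (it only contributes logs) and balancing $K^{-2s/d}$ against $\frac{K\log^{O(1)}K}{N(\tau^\gamma\wedge 1)}$ gives the optimal choice $K\asymp (N(\tau^\gamma\wedge 1))^{d/(2s+d)}$ and the leading term $(N(\tau^\gamma\wedge 1))^{-2s/(2s+d)}\log^3(N(\tau^\gamma\wedge 1))$; the residual $\tau$-bias from Theorem~\ref{thm:oracle} survives as the additive $+\tau$, and the mixing/budget term survives as $+\frac{\log^2 N}{N\tau}$. Finally I would take $\eps$ to be a small absolute constant (so that $\frac{1+\eps}{1-\eps}$ and $\frac{1}{1-\eps}$ are bounded), turning the "with probability $1-\delta$" statement into the "with high probability" claim by choosing $\delta$ polynomially small, and pass to the expectation over the data using boundedness of all quantities by $O(M^2)$ together with the $N\upbeta(l\tau)$-small exceptional event.

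The main obstacle I anticipate is the bookkeeping of the \emph{effective sample size} $N(\tau^\gamma\wedge 1)$ and making the two theorems compose cleanly: Theorem~\ref{thm:hat} is stated for $N$ i.i.d.\ surrogate samples drawn from $\widetilde\Pi$ and measures complexity in terms of $r^*$ for that sample size, whereas Theorem~\ref{thm:oracle} naturally produces the $\tau^{-\gamma}$-inflated variance; reconciling these so that a single quantity $N(\tau^\gamma\wedge 1)$ governs the rate — and verifying that the localization argument still goes through with the inflated noise and the sub-sampled design — is the delicate part. A secondary subtlety is that $\hat\b$ (used as a plug-in when $g^0$ corresponds to the diffusion target) is itself an estimator, but within the scope of Theorem~\ref{thm:thetheorem} we treat $g^0$ as a fixed ground truth and Assumptions~\ref{ass:bias}--\ref{ass:mart} on the noise are taken as hypotheses, so this plug-in issue is deferred to the instantiations in Theorems~\ref{thm:drift} and~\ref{thm:diffusion}. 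Everything else — the approximation rate, the covering-number bound, the choice $l\tau\asymp\log N$, and the $\eps$-optimization — is routine and I would not belabor it.
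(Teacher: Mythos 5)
Your overall architecture is the same as the paper's: chain Theorem~\ref{thm:oracle} (oracle inequality with the $\tau$-bias and the $\frac{\log\N}{N\tau^\gamma}$ martingale term) with Theorem~\ref{thm:hat} (fast-rate bound converting $\hat{\L}_N^{g^0}$ into $\L_{\widetilde\Pi}^{g^0}$), invoke the Schmidt--Hieber approximation bound $\|\bar g-g^0\|_\infty\lesssim K^{-s/d}$ and covering-number bound for $\fN(L,\p,S,M)$, take $l\tau\asymp\log N$ so that $N\upbeta(l\tau)$ is polynomially small (producing $\frac{\log^2 N}{N\tau}$ through $\frac{\log(l/\delta')}{n}$), take $\rho$ polynomially small, and balance $K^{-2s/d}$ against $K\,\mathrm{polylog}/(N(\tau^\gamma\wedge 1))$. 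All of that matches Appendix~\ref{sec:ea}.

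The genuine gap is your treatment of the fixed point $r^*$. You assert that the ``(global, hence local)'' Rademacher complexity yields $r^*\lesssim \frac{K\log^2 K}{N}$ up to logs, but bounding the localized complexity by the global one cannot give this: for a class whose log-covering number scales like $S L\log(\cdot)\asymp K\,\mathrm{polylog}$, the \emph{global} Rademacher complexity via Dudley is of order $\sqrt{K\,\mathrm{polylog}/N}$, and since that bound is constant in $r$ its fixed point is again $\sqrt{K\,\mathrm{polylog}/N}$ — the slow rate. Plugging that into Theorem~\ref{thm:hat} would give a final rate of roughly $(N(\tau^\gamma\wedge 1))^{-s/(2s+d)}$, not the claimed minimax rate, so this step as written would fail. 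What is actually needed is the content of Lemma~\ref{lem:localrad}: a localized Dudley integral truncated at the empirical radius $\hat r$, a symmetrization step giving $\E[\hat r]\leq 2\phi(r)+r$, and the resulting self-bounding inequality for $\phi$, which together produce a genuinely sub-root $\phi(r)\asymp \sqrt{r\,K\,\mathrm{polylog}/N}+K\,\mathrm{polylog}/N$ with fixed point $r^*\lesssim \frac{K\log^3 K+K\log K\log N}{N}$. Relatedly, your worry about reconciling an ``effective sample size'' $N\tau^\gamma$ inside the localization is resolved more simply in the paper: $r^*$ is computed at scale $N$ (the i.i.d.\ surrogate size in Theorem~\ref{thm:hat}), the $\tau^{-\gamma}$ inflation enters \emph{only} through the covering-number term of Theorem~\ref{thm:oracle}, and $N(\tau^\gamma\wedge 1)$ emerges merely because that term dominates the $r^*$ term when $\tau^\gamma\le 1$; no modification of the localization argument is required.
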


With Theorem~\ref{thm:thetheorem}, the detailed proofs of
Theorem~\ref{thm:drift} and~\ref{thm:diffusion} are given in
Appendix~\ref{sec:proofdrif} and~\ref{sec:proofdiff}, respectively.

\begin{figure*}[!htb]
    \centering
    \begin{subfigure}{0.4\textwidth}
        \includegraphics[width=\textwidth]{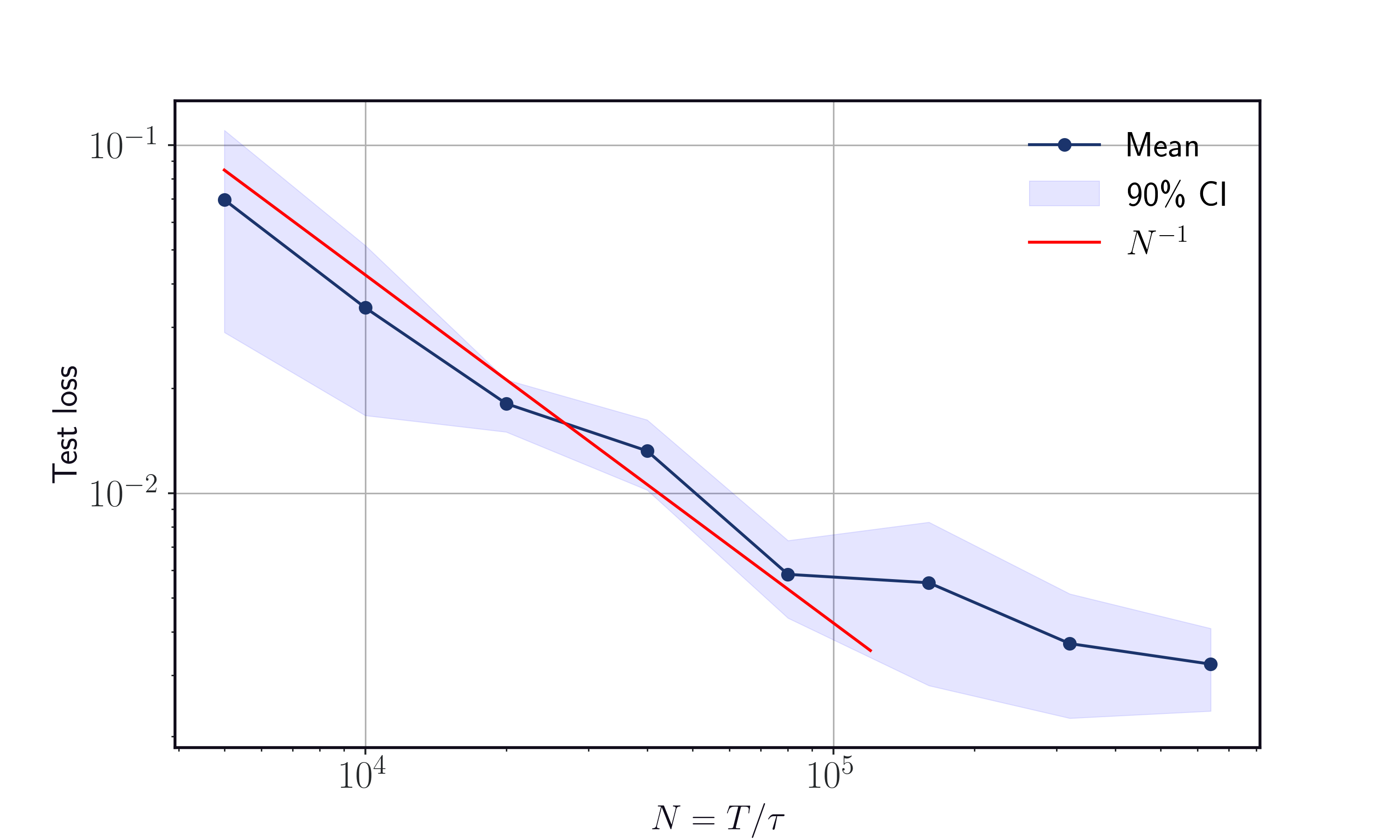}
        \caption{$\tau = 10^{-3}$ with varying $N$ and $T$}
        \label{fig:T}
    \end{subfigure}
    \begin{subfigure}{0.4\textwidth}
        \includegraphics[width=\textwidth]{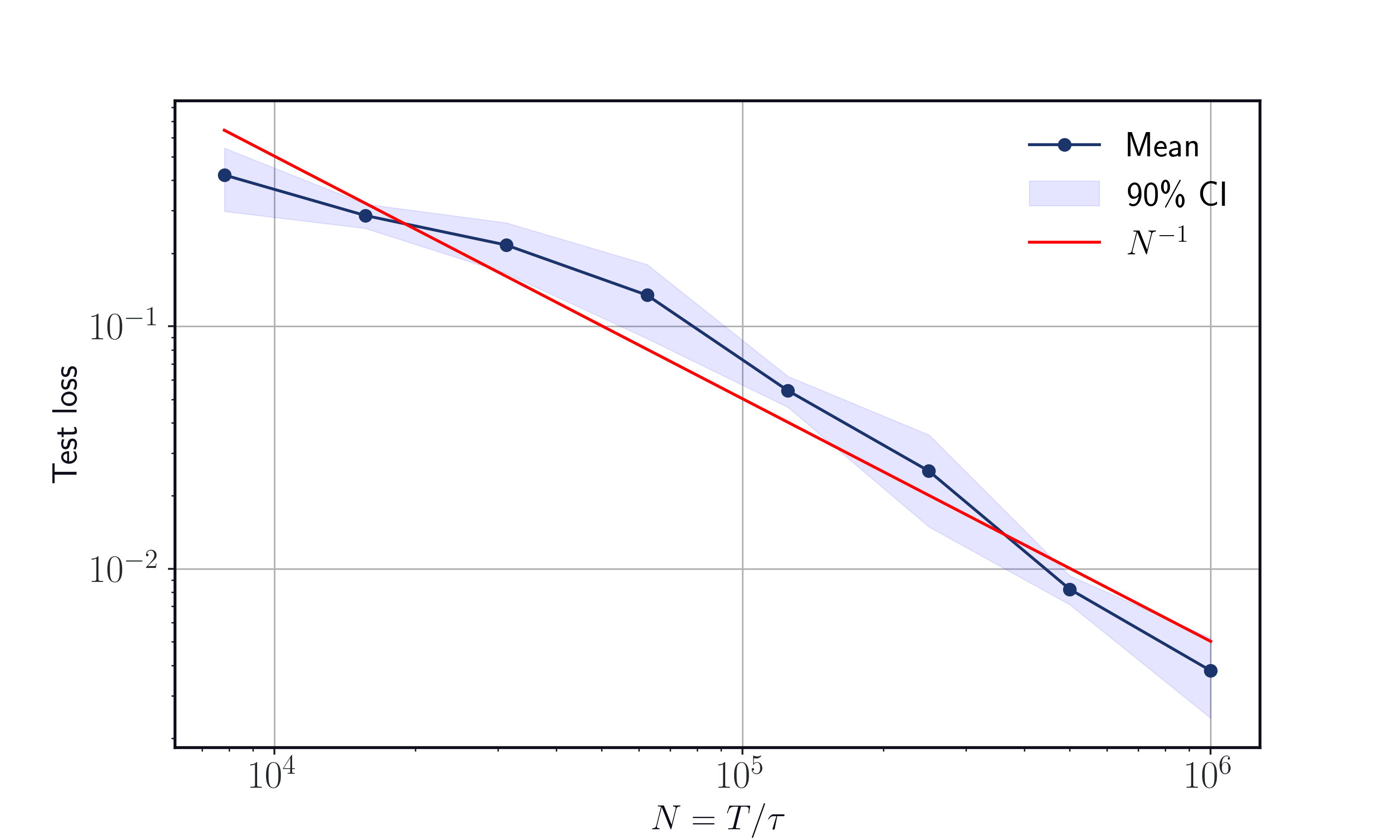}
        \caption{$T = 10^3$ with varying $N$ and $\tau$}
        \label{fig:tau}
    \end{subfigure}
    \caption{Numerical results of the proposed neural diffusion estimator are consistent with the scaling expected from the theoretical bound. We probe this by varying $N$ and $T=N\tau$ with fixed lag time $\tau$ and also by varying $N$ and $\tau=T/N$ with fixed time horizon $T$.} 
    \label{fig:conv}
\end{figure*}

\section{Experiments}

In this section, we present numerical results on a two-dimensional example, to
illustrate the accordance between our theoretical convergence rates and those of
our proposed neural diffusion estimator. Consider the following SDE in $\R^2$:
\begin{equation}
    \d \x_t = f(\x_t)\nabla f(\x_t)\d t + f(\x_t) \d \w_t,
    \label{eq:example}
\end{equation}
where $$f(\x) = 1+\frac{1}{2}\cos(2\pi(x_1+x_2)),$$ \emph{i.e.} $\b(\x) =
f(\x)\nabla f(\x)$ and $D(\x) = \frac{f(\x)^2}{2}I$, where $I$ is the $2\times
2$ identity matrix. Then it is straightforward to verify that this diffusion
process satisfies Assumption~\ref{ass:per},~\ref{ass:holder}, and~\ref{ass:reg}
with smoothness $s=\infty$. Our goal is to estimate the value of the function
$f(\x)$ within $\Omega = [0,1)^2$. We employ Algorithm~\ref{alg:alg} for
estimating both $\b(\x)$ and $D(\x)$ with separate neural networks and treat
them entirely independently in the inference task. One may also prove that the
stationary distribution $\widetilde{\Pi}$ of this diffusion process is given by
the Lebesgue measure on the two-dimensional torus, which makes evaluating errors
easier and more precise. 

To impose the periodic boundary, we introduce an explicit regularization term to
our training loss
$$
\L_{\rm per}(\hat{g}) = \E_{(\x,\y)\sim \unif(\partial\Omega^2), \widetilde{\x} = \widetilde{\y}}\left[ \left(\hat{g}(\x) - \hat{g}(\y)\right)^2 \right],
$$
approximated by $\hat{\L}_{\rm per}(\hat{g})$ with 1000 pairs of random samples
empirically. The final training loss is thus $\tilde{\L}^N(\hat{g}) + \lambda
\hat \L_{\rm per}(\hat{g})$, where $\lambda$ is a hyperparameter and $\hat{g}$
can be either $\hat\b$ or $\hat D$.

We first generate data using the Euler-Maruyama method with a time step $\tau_0
= 2\times10^{-5}$ up to $T_0 = 10^4$, and then sub-sample data at varying time
steps $\tau$ and time horizons $T$ for each experiment instance from this common
trajectory. We use a ResNet as our neural network structure with two residual
blocks, each containing a fully-connected layer with a hidden dimension of 1000.
Test data are generated by randomly selecting $5\times 10^4$ samples from
another sufficiently long trajectory, which are shared by all experiment
instances. The training process is executed on one Tesla V100 GPU.

According to our theoretical result (Theorem~\ref{thm:diffusion}), the
convergence rate of this implementation should be approximately of order
$N^{-1}+\tau + T^{-1}$ up to $\log$ terms. We thus consider two schemes in our
experiment. The first involves a fixed time step $\tau=10^{-3}$ with an expected
rate of $\tau + N^{-1}$, and the other maintains a fixed $T=10^3$ with an
expected rate of $N^{-1} + T^{-1}$. Each of the aforementioned instances is
carried out five times. Figure~\ref{fig:conv} presents the mean values along
with their corresponding confidence intervals from these runs. Additionally,
reference lines indicating the expected convergence rate $N^{-1}$ are shown in
red. Both schemes roughly exhibit the exponential decay phenomenon, aligning
with our theoretical expectations. As depicted in Figure~\ref{fig:T}, the
decreasing rate of the test error decelerates as $N$ exceeds a certain
threshold. This can be attributed to the fact that when $N$ and $T$ are
sufficiently large, the bias term $\tau$ arising from the discretization becomes
the dominant factor in the error. 

\section{Conclusion}

The ubiquity of correlated data in processes modeled with
spatially-inhomogeneous diffusions has created substantial barriers to analysis.
In this paper, we construct and analyze a neural network-based numerical
algorithm for estimating multidimensional spatially-inhomogeneous diffusion
processes based on discretely-observed data obtained from a single trajectory.
Utilizing $\upbeta$-mixing conditions and local Rademacher complexity arguments,
we establish the convergence rate for our neural diffusion estimator. Our upper
bound has recovered the minimax optimal nonparametric function estimation rate
in the common i.i.d. setting, even with correlated data. We expect our proof
techniques serve as a model for general exponential ergodic diffusion processes
beyond the toroidal setting considered here. Numerical experiments validate our
theoretical findings and demonstrate the potential of applying the neural
diffusion estimators across various contexts with provable accuracy guarantees.
Extending our results to typical biophysical settings, \emph{e.g.} compact
domains with reflective boundaries and motion blur due to measurement error,
could help establish more rigorous error estimates for physical inference
problems. 

\bibliography{references}

\appendix

\newpage

\onecolumn
\section{Detailed Proofs of Section~\ref{sec:proof}}
\label{sec:proof2}

Besides $\hat{\L}_N^{g^0}(\hat g)$ defined in the main text, we will use the
following notation to denote the empirical loss over a sample set $S$ another
than the sequence $S^N = (\x_{k\tau})_{k=0}^N$, $$\hat{\L}_{|S|}^{g^0}(\hat g;
S):= \dfrac{1}{|S|}\sum_{X_k\in S}\left( g^0(\widetilde X_k) -\hat{g}(\widetilde
X_k) \right)^2,$$ where applying $\widetilde{\cdot}$ is to make sure each
$\widetilde X_k\in\widetilde\Omega$. We will denote the squared loss $(g-g^0)^2$
by $\ell\circ g$, and adopt the shorthand notation $f_{k\tau} =
f(\widetilde{\x}(k\tau))$ for any function $f:\R^d\to \R$ throughout this
section for simplicity, when the sequence $S^N = (\x_{k\tau})_{k=0}^N$ is clear
from the context. 

\subsection{Proof of Theorem~\ref{thm:hat}}
\label{sec:eg}

In the following, we use $\widetilde  S^n_{\widetilde\Pi}$ to denote a sample
set consisting of $n$ i.i.d. samples drawn from the distribution $\widetilde
\Pi$, \emph{i.e.} $\widetilde  S^n_{\widetilde\Pi}=\{\widetilde X_1,\widetilde
X_2,\cdots,\widetilde X_n\}\sim{\widetilde \Pi}^{\otimes n}$.

To achieve the fast-rate generalization bound, we will make use of the following
local Rademacher complexity argument:
\begin{lemma}[{\cite[Theorem 3.3]{bartlett2005local}}] Let $\fF$ be a function
	class and for each $f\in\fF$, $M_1\leq f\leq M_2$ and $\var_{\widetilde \Pi}
	[f]\leq B \E_{\widetilde \Pi}[f]$ hold. Suppose $ \fR_N(\{f\in \fF |
	\E_{\widetilde \Pi}[f] \leq r\}) \leq \phi(r)$, where $\phi(r)$ is a
	sub-root function with a fixed point $r^*$. Then with probability at least
	$1-\delta$, we have for any $f\in \fF$
	$$
		\E_{\widetilde \Pi}[f]\leq \dfrac{1}{1-\eps}\dfrac{1}{n}\sum_{\widetilde X_k\in \widetilde S_{\widetilde \Pi}^n}f(\widetilde X_k)+\dfrac{704}{B\eps}r^*+\dfrac{\left(11(M_2-M_1)\eps+26B\right)\log\frac{1}{\delta}}{\eps n}.
	$$
	\label{lem:local}
\end{lemma}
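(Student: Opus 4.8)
\textbf{Proof proposal for Lemma~\ref{lem:local} (local Rademacher complexity bound).}

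The plan is to reproduce the proof of Theorem 3.3 in \cite{bartlett2005local}, since the statement is a direct restatement of that result with the parameter substitutions $M_1 \leq f \leq M_2$, $B$ in place of the generic variance-to-mean ratio, and the explicit constants $704$, $11$, $26$. The argument proceeds by a peeling (or ``stratification'') device over shells $\{f \in \fF : \E_{\widetilde\Pi}[f] \in (x^{k-1}, x^k]\}$ for a geometric grid, combined with Talagrand's concentration inequality for the supremum of an empirical process and the sub-root property of $\phi$, which ensures the fixed point $r^*$ is well-defined and controls the scale at which the empirical and population means decouple.

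First I would set up the key functional: for a threshold $r$, define the rescaled class $\fF_r = \{ (r/\E_{\widetilde\Pi}[f])\, f : f \in \fF,\ \E_{\widetilde\Pi}[f] > r\} \cup \{f \in \fF : \E_{\widetilde\Pi}[f] \leq r\}$, so that every element has population mean at most $r$ and the localized Rademacher complexity assumption $\fR_N(\{f \in \fF : \E_{\widetilde\Pi}[f]\leq r\}) \leq \phi(r)$ applies uniformly after rescaling. Then I would apply Talagrand's inequality to $\sup_{f \in \fF_r}(\E_{\widetilde\Pi}[f] - \frac1n\sum_k f(\widetilde X_k))$: the expectation of this supremum is bounded (via symmetrization and the contraction/comparison for the centered class) by $2\phi(r)$, the variance term is bounded using $\var_{\widetilde\Pi}[f] \leq B\,\E_{\widetilde\Pi}[f] \leq Br$, and the range term uses $M_2 - M_1$. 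Choosing $r = r^*$ (the fixed point, so $\phi(r^*) = r^*$) balances these contributions. The sub-root property is exactly what guarantees that for any $f$ with $\E_{\widetilde\Pi}[f] \geq r^*$ one has $\phi(\E_{\widetilde\Pi}[f])/\E_{\widetilde\Pi}[f] \leq \phi(r^*)/r^* = 1$, which is the lever that turns the uniform deviation bound into the multiplicative (fast-rate) form $\E_{\widetilde\Pi}[f] \leq \frac{1}{1-\eps}\hat{\E}[f] + \frac{c_1}{B\eps}r^* + \frac{(c_2(M_2-M_1)\eps + c_3 B)\log(1/\delta)}{\eps n}$. Tracking the constants carefully through Talagrand's inequality (Bousquet's version) and the $\eps$-weighted Young-type inequality $ab \leq \tfrac{\eps}{2}a^2 + \tfrac{1}{2\eps}b^2$ used to split the variance term yields the stated $704$, $11$, and $26$.

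The main obstacle I expect is the constant bookkeeping: the value $704 = 704$ and the coefficients $11, 26$ arise from a specific chain of inequalities (symmetrization loses a factor $2$, the contraction lemma for $x \mapsto x^2$ on a bounded domain loses a Lipschitz factor, Bousquet's inequality contributes both a $\sqrt{\cdot}$ variance term and a linear range term, and the peeling over the geometric grid with ratio chosen near $1$ inflates constants), and reproducing them exactly requires following \cite{bartlett2005local} line by line rather than re-deriving from scratch. A secondary subtlety is verifying that the hypotheses as we have stated them — in particular that the localized complexity bound $\fR_N(\{f : \E_{\widetilde\Pi}[f]\leq r\}) \leq \phi(r)$ holds for the \emph{un-centered} class and that $\phi$ is genuinely sub-root — match the hypotheses under which \cite[Theorem 3.3]{bartlett2005local} is proved; if there is a mismatch (e.g., the cited theorem is stated for star-hulls or for centered classes), I would insert the standard reduction (passing to the star-hull $\{\alpha f : \alpha \in [0,1], f \in \fF\}$, whose localized complexity is still controlled by a sub-root function) before invoking it. Since the statement is explicitly attributed as a quotation of \cite[Theorem 3.3]{bartlett2005local}, the cleanest route in the appendix is to state that it is a verbatim specialization and defer the constant-level verification to that reference.
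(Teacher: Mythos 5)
Your proposal is consistent with the paper, which gives no proof of this lemma at all: it is invoked purely as a quotation of \cite[Theorem 3.3]{bartlett2005local}, and your observation that the stated bound is exactly that theorem under the substitution $K = 1/\eps$ (so that $K/(K-1) = 1/(1-\eps)$, $c_1 K/B = 704/(B\eps)$, and the deviation term becomes $(11(M_2-M_1)\eps + 26B)\log(1/\delta)/(\eps n)$) is all that is needed. Your sketch of the underlying argument (rescaled/normalized localized class, Bousquet's form of Talagrand's inequality, sub-root fixed point, star-hull reduction if hypotheses need aligning) matches how the cited result is actually proved, so deferring the constant-level verification to the reference is precisely what the paper does.
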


\begin{remark}
    Compared with Rademacher complexity, the arguments of local Rademacher
    complexity enables adaptation to the local quadratic geometry of the loss
    function and recovers the fast rate generalization
    bound~\cite{bartlett2005local}. One of our contributions is showing how to
    adapt the local Rademacher arguments to the non-i.i.d data along the
    diffusion process and providing near-optimal bounds for diffusion
    estimation.
\end{remark}

For completeness, we also provide the proof of Lemma~\ref{lem:subsample}:
\begin{proof}[Proof of Lemma~\ref{lem:subsample}]
    Notice that each sub-sequence $S^n_{(a)} = (\x_{(kl+a)})_{k=0}^{n-1}$ is a
    sequence separated with time interval $l\tau$. Then the result follows
    directly from~\cite[Proposition 2]{kuznetsov2017generalization} by setting
    the distribution $\Pi$ and sub-sample $\boldsymbol Z^{(j)}$ as $\widetilde
    \Pi$ and $S^n_{(a)}$, respectively.
\end{proof}

\begin{remark}
    Lemma 5 serves as a crucial tool for addressing the challenges presented by
    non-i.i.d. data points. To give an intuitive explanation, consider the way
    we segment the initial observations, represented by $ S^N =
    (\boldsymbol{x}_{k\tau})_{k=0}^N$. We divide them into $l$ sub-sequences,
    denoted as $S^n_{(a)} = (\boldsymbol{x}_{(kl+a)\tau})_{k=0}^{n-1}$, for $a =
    0,\cdots, l-1$. In each of these sub-sequences, the observational gaps are
    $l\tau$, in contrast to the $\tau$ gap found in the initial sequence. By
    employing $\upbeta$-mixing and strategically choosing a sufficiently large
    value for $l$, we can effectively treat each sub-sequence as though it
    follows an i.i.d. distribution. The accuracy of this approximation is then
    determined by the mixing coefficient, as quantified in Lemma 5. With this
    setup, we can apply local Rademacher arguments to the approximated i.i.d.
    sequences, paving the way for fast convergence rates.
\end{remark}

With Lemma~\ref{lem:subsample} and Lemma~\ref{lem:local}, we are ready to
present the proof of Theorem~\ref{thm:hat}.
\begin{proof}[Proof of Theorem~\ref{thm:hat}]
	First, it is straightforward to check by Assumption~\ref{ass:holder} that
	for any $\widetilde \x\in \widetilde \Omega$,
	\begin{equation}
		|\ell\circ g^*(\widetilde \x) - \E_{\widetilde \Pi}[\ell\circ g^*(\widetilde \x)]|\leq 4M^2,
	\end{equation}
	and
	\begin{equation}
		\var_{\widetilde \Pi}\left[\ell\circ g^* \right]\leq \E_{\widetilde \Pi}\left[ \left(\ell\circ g^* \right)^2\right] = \E_{\widetilde \Pi}\left[ \left( g^*-g^0 \right)^4\right]\leq 4M^2 \E_{\widetilde \Pi}[\ell\circ g^*],
	\end{equation}
	Applying Lemma~\ref{lem:local} for $f = \ell\circ \hat{g}$ with $B = 4M^2$,
	$M_1 = 0$, and $M_2 = 4M^2$, yields that with probability at least
	$1-\delta'/l$, we have
	$$
		\L_{\widetilde \Pi}^{g^0}(\hat g)\leq  \dfrac{1}{1-\eps} \hat{\L}_n^{g^0}(\hat{g};\widetilde S^n_{\widetilde \Pi}) + \dfrac{176}{M^2\eps} r^* + \dfrac{\left(44 + \frac{104}{\eps} \right) M^2 \log\left(\frac{l}{\delta'}\right)}{n}.
	$$

	Now we split the empirical loss $ \hat{\L}_N^{g^0}(\hat{g})$ into the mean
	of $l$ empirical losses $ \hat{\L}_N^{g^0}(\hat{g}; S^n_{(a)})$, where
	$S^n_{(a)}$ are the sub-sequences of $S^N=(\x_{k\tau})_{k=0}^N$ obtained by
	sub-sampling:
	$$
		\begin{aligned}
			\L_{\widetilde \Pi}^{g^0}(\hat g)- \dfrac{1}{1-\eps} \hat{\L}_N^{g^0}(\hat{g}) & = \dfrac{1}{N}\sum_{k=0}^{N-1}\left(\L_{\widetilde \Pi}^{g^0}(\hat g)- \dfrac{1}{1-\eps}\ell\circ \hat{g}(\widetilde \x_{k\tau})\right) \\
        & = \dfrac{1}{l}\sum_{a=0}^{l-1}\left(\L_{\widetilde \Pi}^{g^0}(\hat g)- \dfrac{1}{1-\eps} \dfrac{1}{n}\sum_{k=0}^{n-1} \ell\circ \hat{g}(\widetilde \x_{(kl+a)\tau})\right) \\
			& = \dfrac{1}{l}\sum_{a = 0}^{l-1} \left(\L_{\widetilde \Pi}^{g^0}(\hat g)- \dfrac{1}{1-\eps}\hat{\L}_n^{g^0}(\hat{g};S^n_{(a)})\right),
		\end{aligned}
	$$
    and for each summand, we apply Lemma~\ref{lem:subsample} to the indicator
    function of the event $\left\{\L_{\widetilde \Pi}^{g^0}(\hat g)-
    \frac{1}{1-\eps} \hat{\L}_n^{g^0}(\hat{g};S^n_{(a)})>\alpha\right\}$ and
    obtain
    $$
    \P\left(\L_{\widetilde \Pi}^{g^0}(\hat g)- \dfrac{1}{1-\eps} \hat{\L}_n^{g^0}(\hat{g};S^n_{(a)})>\alpha\right)\leq \P\left(\L_{\widetilde \Pi}^{g^0}(\hat g)- \dfrac{1}{1-\eps} \hat{\L}_n^{g^0}(\hat{g};\widetilde S^n_{\widetilde \Pi})>\alpha\right) +n\upbeta(l\tau).
    $$

    Then let $\alpha =\frac{176}{M^2\eps} r^* + \frac{\left(44 +
    \frac{104}{\eps} \right) M^2 \log\left(\frac{l}{\delta'}\right)}{n}$, we
    have by the union bound
	$$
		\begin{aligned}
			\P\left(\L_{\widetilde \Pi}^{g^0}(\hat g)- \dfrac{1}{1-\eps} \hat{\L}_N^{g^0}(\hat{g})> \alpha\right)
			 & \leq \P\left( \dfrac{1}{l}\sum_{a = 0}^{l-1} \left(\L_{\widetilde \Pi}^{g^0}(\hat g)- \dfrac{1}{1-\eps} \hat{\L}_n^{g^0}(\hat{g};S^n_{(a)})\right)> \alpha\right)    \\
			 & \leq \sum_{a = 0}^{l-1}\left(\P\left(\L_{\widetilde \Pi}^{g^0}(\hat g)- \dfrac{1}{1-\eps} \hat{\L}_n^{g^0}(\hat{g};S^n_{(a)})>\alpha\right) \right)                  \\
			 & \leq \sum_{a = 0}^{l-1}\left(\P\left(\L_{\widetilde \Pi}^{g^0}(\hat g)- \dfrac{1}{1-\eps} \hat{\L}_n^{g^0}(\hat{g};\widetilde S^n_{\widetilde \Pi})>\alpha\right) +n\upbeta(l\tau) \right) \\
			 & \leq l\P\left(\L_{\widetilde \Pi}^{g^0}(\hat g)- \dfrac{1}{1-\eps} \hat{\L}_N^{g^0}(\hat{g};\widetilde S^n_{\widetilde \Pi})> \alpha\right) + N\upbeta(l\tau)                              \\
			 & \leq \delta' + N\upbeta(l\tau) = \delta,
		\end{aligned}
	$$
	and the result follows.
\end{proof}

\subsection{Proof of Theorem~\ref{thm:oracle}}
\label{sec:eo}

In this section, the expectiation $\E$ should be referred to as being taken over
the sub-$\sigma$-algebra of $\F_0^T$ generated by the trajectory
$(\x_{k\tau})_{k=0}^N$ from which $\hat{g}$ is constructed by minimizing over
the estimate empirical loss $\tilde{\L}_N^{g^0}(\hat{g};(\x_{k\tau})_{k=0}^N)$.

We first consider the following decomposition of the expectation of the
empirical loss $\hat{\L}_N^{g^0}(\hat{g})$:
\begin{lemma} For an arbitrary $\bar{g}\in \fG$, we have
	\begin{equation}
		\E\left[\hat{\L}_N^{g^0}(\hat{g})\right] \leq \E\left[\hat{\L}_N^{g^0}(\bar{g})\right] + \dfrac{2}{N}\E\left[\sum_{k=0}^{N-1} \Delta A_{k\tau} \left(\hat{g}_{k\tau}-\bar{g}_{k\tau}\right)\right] + \dfrac{2}{N}\E\left[\sum_{k=0}^{N-1} \Delta M_{k\tau} \left(\hat{g}_{k\tau}-g^0_{k\tau}\right)\right].
		\label{eq:decomp}
	\end{equation}
	\label{lem:decomp}
\end{lemma}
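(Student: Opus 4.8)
The plan is to run the standard least-squares ``basic inequality'' argument, being careful about what is and is not adapted. Since $\hat{g}$ is by construction a minimizer of $\tilde{\L}_N^{g^0}(\cdot)$ over $\fG$ and $\bar g\in\fG$, we have $\tilde{\L}_N^{g^0}(\hat{g})\leq \tilde{\L}_N^{g^0}(\bar g)$. Writing the $k$-th summand of $\tilde{\L}_N^{g^0}(\hat g)$ as $\big(\Delta Z_{k\tau}-(\hat{g}_{k\tau}-g^0_{k\tau})\big)^2$ and the corresponding summand for $\bar g$ as $\big(\Delta Z_{k\tau}-(\bar g_{k\tau}-g^0_{k\tau})\big)^2$, I would expand both squares, cancel the common term $\frac1N\sum_{k=0}^{N-1}\Delta Z_{k\tau}^2$ appearing on the two sides, and rearrange. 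This gives the basic inequality
$$\hat{\L}_N^{g^0}(\hat{g}) \leq \hat{\L}_N^{g^0}(\bar g) + \dfrac{2}{N}\sum_{k=0}^{N-1}\Delta Z_{k\tau}\big(\hat{g}_{k\tau}-\bar g_{k\tau}\big),$$
which holds pathwise, before any expectation is taken.

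Next I would insert the Doob decomposition $\Delta Z_{k\tau}=\Delta A_{k\tau}+\Delta M_{k\tau}$ into the cross term. The $\Delta A_{k\tau}$ part is left untouched, producing the first cross term in the claim. For the martingale part I would split $\hat{g}_{k\tau}-\bar g_{k\tau} = (\hat{g}_{k\tau}-g^0_{k\tau}) - (\bar g_{k\tau}-g^0_{k\tau})$, so that
$$\dfrac{2}{N}\sum_{k=0}^{N-1}\Delta M_{k\tau}\big(\hat{g}_{k\tau}-\bar g_{k\tau}\big) = \dfrac{2}{N}\sum_{k=0}^{N-1}\Delta M_{k\tau}\big(\hat{g}_{k\tau}-g^0_{k\tau}\big) - \dfrac{2}{N}\sum_{k=0}^{N-1}\Delta M_{k\tau}\big(\bar g_{k\tau}-g^0_{k\tau}\big).$$
I then take expectations over the trajectory and argue that the last sum vanishes term by term: $\bar g$ is a fixed element of $\fG$ not depending on the data and $g^0$ is deterministic, so $\bar g_{k\tau}-g^0_{k\tau}$ is $\F_{k\tau}$-measurable; since $\E[\Delta M_{k\tau}\mid\F_{k\tau}]=0$ by the very definition of $M_t$ in Doob's decomposition, the tower property gives $\E[\Delta M_{k\tau}(\bar g_{k\tau}-g^0_{k\tau})]=0$ for every $k$. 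Collecting the surviving terms yields exactly \eqref{eq:decomp}.

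The crucial point — and the reason the martingale cross term is kept with $g^0$ rather than $\bar g$ inside — is that $\hat g$ depends on the entire trajectory, so $\hat g_{k\tau}-g^0_{k\tau}$ is \emph{not} $\F_{k\tau}$-measurable and this term genuinely does not vanish; it is precisely the object that will later be controlled by a uniform-over-$\fG$ martingale concentration inequality in the proof of Theorem~\ref{thm:oracle}. The only real technical nuisance I anticipate is that $M_t$ is a priori only a \emph{local} martingale, so to legitimately condition and apply the tower property one should either combine the uniform boundedness of $g^0$ and of functions in $\fG=\fN(L,\p,S,M)$ with Assumption~\ref{ass:mart} to upgrade to genuine integrability, or argue along a localizing sequence of stopping times and pass to the limit by dominated convergence.
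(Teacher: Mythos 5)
Your proposal is correct and follows essentially the same route as the paper's proof: the pathwise basic inequality from minimality of $\hat g$, cancellation of the $\frac1N\sum(\Delta Z_{k\tau})^2$ term, the Doob split $\Delta Z_{k\tau}=\Delta A_{k\tau}+\Delta M_{k\tau}$, and killing the $\E\big[\sum_k\Delta M_{k\tau}(\bar g_{k\tau}-g^0_{k\tau})\big]$ term via the tower property since $\bar g$ is data-independent, leaving exactly the two cross terms of \eqref{eq:decomp}. Your closing remark on upgrading the local-martingale conditioning step (via boundedness of $g^0$ and $\fG$ plus Assumption~\ref{ass:mart}, or a localizing sequence) is a point the paper passes over silently, but it does not change the argument.
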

\begin{proof}
	By the definition of the estimator $\hat{g}$, we have
	$\tilde{\L}_N^{g^0}(\hat{g};(\x_{k\tau})_{k=0}^N)\leq
	\tilde{\L}_N^{g^0}(\bar{g};(\x_{k\tau})_{k=0}^N)$ for every possible
	sequence $(\x_{k\tau})_{k=0}^N$. Recall that
	$$
    \tilde{\L}_N^{g^0}(g;(\x_{k\tau})_{k=0}^N) = \dfrac{1}{N}\sum_{k=0}^{N-1} \left( g^0_{k\tau}+\Delta Z_{k\tau} -g_{k\tau} \right)^2 = \hat{\L}_N^{g^0}(g)+\dfrac{2}{N} \sum_{k=0}^{N-1}\Delta Z_{k\tau} (g^0_{k\tau} - g_{k\tau})+\dfrac{1}{N} \sum_{k=0}^{N-1}(\Delta Z_{k\tau})^2,
	$$
	we have by taking expectation to both sides,
	$$
		\begin{aligned}
			& \E\left[\hat{\L}_N^{g^0}(\hat{g})\right]-\E\left[\hat{\L}_N^{g^0}(\bar{g})\right]\\
			=    & \E\left[\tilde{\L}_N^{g^0}(\hat{g};(\x_{k\tau})_{k=0}^N)\right] + \E\left[\dfrac{2}{N} \sum_{k=0}^{N-1}\Delta Z_{k\tau} \left( \hat{g}_{k\tau} - g^0_{k\tau} \right)\right]-  \E\left[\dfrac{1}{N} \sum_{k=0}^{N-1} (\Delta Z_{k\tau})^2 \right]  \\
			-    & \E\left[\tilde{\L}_N^{g^0}(\bar{g};(\x_{k\tau})_{k=0}^N)\right] - \E\left[\dfrac{2}{N} \sum_{k=0}^{N-1} \Delta Z_{k\tau} \left( \bar{g}_{k\tau} - g^0_{k\tau} \right)\right] + \E\left[\dfrac{1}{N} \sum_{k=0}^{N-1} (\Delta Z_{k\tau})^2 \right] \\
			\leq & \dfrac{2}{N}\E\left[\sum_{k=0}^{N-1} \Delta Z_{k\tau} \left( \hat{g}_{k\tau} - \bar{g}_{k\tau} \right)\right],
		\end{aligned}
	$$
    where we used the fact that the noise $(\Delta Z_{k\tau})_{k=0}^{N-1}$ only
    depends on the ground truth $g^0$ and the sequence $(\x_{k\tau})_{k=0}^N$.

	Since $(M_{k\tau})_{k=0}^N$ forms a martingale, we have $$
	\E\left[\sum_{k=0}^{N-1} \Delta  M_{k\tau} \left( \bar{g}_{k\tau} -
	g^0_{k\tau} \right)\right] = \E\left[\sum_{k=0}^{N-1} \E\left[\Delta
	M_{k\tau} \left( \bar{g}_{k\tau} - g^0_{k\tau} \right)|
	\F_0^{k\tau}\right]\right] = 0,$$ and thus
	$$
		\begin{aligned}
			  & \E\left[\sum_{k=0}^{N-1}  \Delta Z_{k\tau}  \left( \hat{g}_{k\tau} - \bar{g}_{k\tau} \right)\right] \\
			= & \E\left[\sum_{k=0}^{N-1} \Delta A_{k\tau} \left( \hat{g}_{k\tau} - \bar{g}_{k\tau} \right)\right] + \E\left[\sum_{k=0}^{N-1} \Delta M_{k\tau} \left( \hat{g}_{k\tau} - \bar{g}_{k\tau} \right)\right] +  \E\left[\sum_{k=0}^{N-1} \Delta  M_{k\tau} \left( \bar{g}_{k\tau} - g^0_{k\tau} \right)\right] \\
			= & \E\left[\sum_{k=0}^{N-1} \Delta A_{k\tau} \left( \hat{g}_{k\tau} - \bar{g}_{k\tau} \right)\right] + \E\left[\sum_{k=0}^{N-1} \Delta M_{k\tau} \left( \hat{g}_{k\tau} - g^0_{k\tau} \right)\right],
		\end{aligned}
	$$
	and the result follows.
\end{proof}

Following Remark~\ref{rem:noise} in the main text, we will refer to the second
term in the RHS of~\eqref{eq:decomp} as the bias term and the last as the
martingale noise term.

For the bias term, we have the following simple bound:
\begin{lemma}
	For any $\eps>0$, we have
	$$\E\left[\dfrac{1}{N}\sum_{k=0}^{N-1} \Delta A_{k\tau}
	\left(\hat{g}_{k\tau}-\bar{g}_{k\tau}\right)\right]\leq
	\dfrac{\eps}{4}\E\left[\hat{\L}_N^{g^0}(\hat{g})\right]+\dfrac{\eps}{2}\E\left[\hat{\L}_N^{g^0}(\bar{g})\right]+\dfrac{3}{2\eps}\E\left[\dfrac{1}{N}\sum_{k=0}^{N-1}
	(\Delta A_{k\tau})^2 \right].$$
	\label{lem:bias}
\end{lemma}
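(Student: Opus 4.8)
The plan is to reduce the statement to a purely pointwise (deterministic) application of Young's inequality, $xy \le \frac{\delta}{2}x^2 + \frac{1}{2\delta}y^2$, after rewriting the cross term in terms of the two residuals $\hat g_{k\tau} - g^0_{k\tau}$ and $\bar g_{k\tau} - g^0_{k\tau}$, whose averaged squares are precisely $\hat{\L}_N^{g^0}(\hat g)$ and $\hat{\L}_N^{g^0}(\bar g)$ by definition of the empirical loss. Since the bound is pointwise in $k$ and in the sample path, no measurability or martingale considerations are needed here; the expectation simply passes through at the end.

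Concretely, first I would write $\hat g_{k\tau} - \bar g_{k\tau} = (\hat g_{k\tau} - g^0_{k\tau}) - (\bar g_{k\tau} - g^0_{k\tau})$, so that
$$\Delta A_{k\tau}\left(\hat g_{k\tau} - \bar g_{k\tau}\right) = \Delta A_{k\tau}\left(\hat g_{k\tau} - g^0_{k\tau}\right) - \Delta A_{k\tau}\left(\bar g_{k\tau} - g^0_{k\tau}\right).$$
Next I would bound the first product by Young's inequality with parameter $\eps/2$, giving $\Delta A_{k\tau}(\hat g_{k\tau} - g^0_{k\tau}) \le \frac{\eps}{4}(\hat g_{k\tau} - g^0_{k\tau})^2 + \frac{1}{\eps}(\Delta A_{k\tau})^2$, and the second product (after bounding by its absolute value) by Young's inequality with parameter $\eps$, giving $-\Delta A_{k\tau}(\bar g_{k\tau} - g^0_{k\tau}) \le \frac{\eps}{2}(\bar g_{k\tau} - g^0_{k\tau})^2 + \frac{1}{2\eps}(\Delta A_{k\tau})^2$. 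Summing these two inequalities over $k = 0,\dots,N-1$, dividing by $N$, and taking expectation, I would then identify $\frac1N\sum_k (\hat g_{k\tau} - g^0_{k\tau})^2 = \hat{\L}_N^{g^0}(\hat g)$ and $\frac1N\sum_k (\bar g_{k\tau} - g^0_{k\tau})^2 = \hat{\L}_N^{g^0}(\bar g)$, while the two noise contributions combine as $\frac1\eps + \frac1{2\eps} = \frac{3}{2\eps}$, which is exactly the claimed constant.

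The main point to get right — there is no genuine obstacle — is the \emph{asymmetric} allocation of the Young's-inequality parameters: using $\eps/2$ against $\hat g$ and $\eps$ against $\bar g$ is what produces the coefficients $\frac{\eps}{4}$ and $\frac{\eps}{2}$ in the statement while keeping the total noise factor at $\frac{3}{2\eps}$; a symmetric choice would not match. Note also that Assumption~\ref{ass:bias} is not invoked in this lemma itself — it only enters later, when the residual term $\E[\frac1N\sum_k(\Delta A_{k\tau})^2]$ is finally replaced by $C_A\tau$ — so this lemma holds verbatim for any continuous finite-variation $(A_t)_{t\ge 0}$. Compared with Lemma~\ref{lem:decomp} and the martingale-noise term handled via Lemma~\ref{lem:martingale}, this is the easy half of the oracle-inequality argument.
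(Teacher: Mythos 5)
Your proof is correct and is essentially identical to the paper's argument: the same decomposition $\hat g_{k\tau}-\bar g_{k\tau}=(\hat g_{k\tau}-g^0_{k\tau})+(g^0_{k\tau}-\bar g_{k\tau})$ followed by the same asymmetric Young/AM--GM splits producing coefficients $\tfrac{\eps}{4}$, $\tfrac{\eps}{2}$, and $\tfrac{1}{\eps}+\tfrac{1}{2\eps}=\tfrac{3}{2\eps}$. Your side remark is also accurate: Assumption~\ref{ass:bias} is not needed for the lemma itself and only enters later when the term $\E\bigl[\tfrac1N\sum_{k}(\Delta A_{k\tau})^2\bigr]$ is bounded by $C_A\tau$.
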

\begin{proof}
	$$
		\begin{aligned}
			     & \E\left[\dfrac{1}{N}\sum_{k=0}^{N-1} \Delta A_{k\tau} \left(\hat{g}_{k\tau}-\bar{g}_{k\tau}\right)\right]                                                                                                                                               \\
			=    & \E\left[\dfrac{1}{N}\sum_{k=0}^{N-1} \left(\Delta A_{k\tau} \left(\hat{g}_{k\tau}-g^0_{k\tau}\right)+ \Delta A_{k\tau}\left(g^0_{k\tau}-\bar{g}_{k\tau}\right)\right)\right]                                                                            \\
			\leq & \E\left[\dfrac{1}{N}\sum_{k=0}^{N-1} \left(\dfrac{1}{\eps}(\Delta A_{k\tau})^2 +\dfrac{\eps}{4}\left(\hat{g}_{k\tau}-g^0_{k\tau}\right)^2+\dfrac{1}{2\eps}(\Delta A_{k\tau})^2 +\dfrac{\eps}{2}\left(g^0_{k\tau}-\bar{g}_{k\tau}\right)^2\right)\right] \\
			=    & \dfrac{\eps}{4}\E\left[\hat{\L}_N^{g^0}(\hat{g})\right]+\dfrac{\eps}{2}\E\left[\hat{\L}_N^{g^0}(\bar{g})\right]+\dfrac{3}{2\eps}\E\left[\dfrac{1}{N}\sum_{k=0}^{N-1} (\Delta A_{k\tau})^2 \right],
		\end{aligned}
	$$
	where the inequality is by AM-GM and the last equality is due to
	Assumption~\ref{ass:bias}.
\end{proof}

The following proof of the martingale noise term bound is inspired by the proof
of~\cite[Lemma 4]{schmidt2020nonparametric}, where i.i.d. Gaussian noise is
considered for nonparametric regression.
\begin{definition}[$\eps$-covering]
	We define the $\eps$-covering $\fF_\eps$ of a function class $\fF$ with
	respect to the norm $\|\cdot\|$ as the subset of $\fF$ with the smallest
	possible cardinality such that for any $f\in\fF$, there exists an $f'\in
	\fF_\eps$ satisfying $\|f'-f\|\leq \eps$. The cardinality of $\fF_\eps$ is
	denoted by $\N(\epsilon,\fF,\|\cdot\|)$.
\end{definition}
For a fixed $\rho>0$, let $\fG_\rho$ be the $\rho$-net of the function class
$\fG$ with respect to the norm $\|\cdot\|_\infty$, where $\rho>0$. Without loss
of generality, we assume that $\N(\rho,\fG,\|\cdot\|_\infty)\geq 2$.

We now introduce the following exponential inequality for local martingales.
\begin{lemma}
	Let $L_t$ be a continuous local martingale and $\qv{L}_t$ be the
	corresponding quadratic variation of $L_t$. If
	$\mathbb{E}\left[\sqrt{\langle L\rangle_t}\right]>0$, then for any $t, y>0$,
	$$
		\mathbb{E}\left[\dfrac{y}{\sqrt{\langle L\rangle_t+y^2}} \exp \left(\frac{L_t^2}{2\left(\langle L\rangle_t+y^2\right)}\right)\right] \leq 1.
	$$
	Moreover,  then
	$$
		\mathbb{E}\left[\exp \left(\dfrac{L_t^2}{4\left(\langle L\rangle_t+\mathbb{E}\left[\langle L\rangle_t\right]\right)}\right)\right] \leq \sqrt{2} .
	$$
	\label{lem:martingale}
\end{lemma}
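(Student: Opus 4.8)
### Proof Proposal for Lemma~\ref{lem:martingale}

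The plan is to prove the two inequalities in sequence, deriving the second as a consequence of the first via Jensen's inequality. For the first inequality, I would exploit the fact that the exponential of a stochastic integral, suitably normalized, is a supermartingale — this is the standard route to exponential martingale inequalities. Concretely, for a continuous local martingale $L_t$ and any $\lambda\in\R$, the process $\exp(\lambda L_t - \tfrac{\lambda^2}{2}\qv{L}_t)$ is a nonnegative local martingale, hence a supermartingale, so its expectation is at most $1$. The trick to handle the random normalization $\qv{L}_t + y^2$ inside the exponent is to integrate over $\lambda$ against a Gaussian weight: writing $\frac{y}{\sqrt{\qv{L}_t+y^2}}\exp\!\big(\frac{L_t^2}{2(\qv{L}_t+y^2)}\big)$ as $\int_\R \exp(\lambda L_t - \tfrac{\lambda^2}{2}(\qv{L}_t+y^2))\,\frac{\d\lambda}{\sqrt{2\pi}}\cdot\sqrt{2\pi}\cdot(\text{normalization})$ — more precisely, using the Gaussian integral identity $\int_\R e^{\lambda u - \frac{\lambda^2}{2}v}\,\d\lambda = \sqrt{2\pi/v}\,e^{u^2/(2v)}$ with $u = L_t$ and $v = \qv{L}_t + y^2$. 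Then by Tonelli's theorem (the integrand is nonnegative), $\E\big[\frac{y}{\sqrt{\qv{L}_t+y^2}}e^{L_t^2/(2(\qv{L}_t+y^2))}\big] = \frac{y}{\sqrt{2\pi}}\int_\R \E\big[e^{\lambda L_t - \frac{\lambda^2}{2}(\qv{L}_t+y^2)}\big]\,\d\lambda \leq \frac{y}{\sqrt{2\pi}}\int_\R e^{-\lambda^2 y^2/2}\,\d\lambda = 1$, where the inequality uses the supermartingale property $\E[\exp(\lambda L_t - \tfrac{\lambda^2}{2}\qv{L}_t)]\leq 1$ together with pulling out the deterministic factor $e^{-\lambda^2 y^2/2}$.

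For the second inequality, I would specialize $y$ to the deterministic constant $y = \sqrt{\E[\qv{L}_t]}$ (which is well-defined and positive by the hypothesis $\E[\sqrt{\qv{L}_t}]>0$, after noting $\E[\qv{L}_t]\geq \E[\sqrt{\qv L_t}]^2/\ldots$ or simply that $\qv L_t$ is not a.s.\ zero). With this choice, $\qv{L}_t + y^2 \leq 2\max(\qv{L}_t, y^2)$ is not quite what is needed; instead observe directly that from the first inequality,
$$
\E\!\left[\frac{y}{\sqrt{\qv{L}_t+y^2}}\exp\!\left(\frac{L_t^2}{2(\qv{L}_t+y^2)}\right)\right]\leq 1,
$$
and I want to remove the $\frac{y}{\sqrt{\qv{L}_t+y^2}}$ prefactor at the cost of weakening the exponent. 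Apply Cauchy--Schwarz: $\E[\exp(\tfrac{L_t^2}{4(\qv L_t+y^2)})] = \E\big[\big(\tfrac{y}{\sqrt{\qv L_t+y^2}}\big)^{1/2} e^{L_t^2/(4(\qv L_t + y^2))} \cdot \big(\tfrac{\sqrt{\qv L_t + y^2}}{y}\big)^{1/2}\big] \leq \E\big[\tfrac{y}{\sqrt{\qv L_t+y^2}}e^{L_t^2/(2(\qv L_t+y^2))}\big]^{1/2}\cdot \E\big[\tfrac{\sqrt{\qv L_t+y^2}}{y}\big]^{1/2} \leq 1\cdot \big(\E[\tfrac{\sqrt{\qv L_t+y^2}}{y}]\big)^{1/2}$. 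It remains to bound $\E[\sqrt{\qv L_t + y^2}]/y \leq \sqrt{\E[\qv L_t + y^2]}/y = \sqrt{2\E[\qv L_t]}/\sqrt{\E[\qv L_t]} = \sqrt 2$ by Jensen (concavity of $\sqrt{\cdot}$) and the choice $y^2 = \E[\qv L_t]$. This yields $\E[\exp(\tfrac{L_t^2}{4(\qv L_t + \E[\qv L_t])})]\leq 2^{1/4}\leq\sqrt 2$, and in fact gives the slightly stronger bound $2^{1/4}$.

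The main obstacle I anticipate is justifying the interchange of expectation and the $\d\lambda$-integral rigorously and, relatedly, dealing with the localization: $\exp(\lambda L_t - \tfrac{\lambda^2}{2}\qv L_t)$ is only a local martingale, not a true martingale, so the bound $\E[\exp(\lambda L_t - \tfrac{\lambda^2}{2}\qv L_t)]\leq 1$ requires Fatou's lemma applied along a localizing sequence of stopping times (it is a nonnegative local martingale, hence a supermartingale, hence has decreasing expectation bounded by its value $1$ at time $0$). The Tonelli interchange is clean because everything in sight is nonnegative. A secondary subtlety is the edge case $\E[\qv L_t] = \infty$, in which the second inequality should be interpreted appropriately or excluded by an implicit integrability assumption; I would note this briefly. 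Beyond these points the argument is a routine application of the Gaussian-integral trick for exponential martingale inequalities, so I would keep the writeup short.
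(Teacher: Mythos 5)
Your argument is correct, and it is essentially a self-contained reconstruction of what the paper handles by citation: the paper simply invokes the self-normalized inequality of de la Pe\~na et al.\ (their Theorem 2.1 with $A=L_t$, $B=\sqrt{\qv{L}_t}$, together with their Lemma 1.2 for $\E[\exp(\lambda L_t-\tfrac{\lambda^2}{2}\qv{L}_t)]\leq 1$), and then deduces the second bound from $\bigl(\E[\sqrt{\qv{L}_t}]\bigr)^2\leq\E[\qv{L}_t]$. Your Gaussian-mixture (method-of-mixtures) derivation of the first inequality is exactly the standard proof of that cited theorem, so the first half buys self-containedness at no cost; the localization via Fatou that you flag is the right fix for the local-martingale issue (note both you and the paper implicitly use $L_0=0$, which is the case in all applications here, e.g.\ $L[g-g^0]_0=0$). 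For the second inequality you diverge slightly: instead of taking $y=\E[\sqrt{\qv{L}_t}]$ as in the cited result and then comparing via Jensen, you take $y^2=\E[\qv{L}_t]$ directly and run Cauchy--Schwarz plus Jensen on $\E[\sqrt{\qv{L}_t}+y^2\,]^{1/2}$ — wait, more precisely on $\E[\sqrt{\qv{L}_t+y^2}]\leq\sqrt{2\E[\qv{L}_t]}$ — which yields the sharper constant $2^{1/4}\leq\sqrt{2}$ and also makes the hypothesis $\E[\sqrt{\qv{L}_t}]>0$ needed only to ensure $y>0$ (with the degenerate cases $\E[\qv{L}_t]\in\{0,\infty\}$ handled trivially, as you note). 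Your Cauchy--Schwarz step is, incidentally, the same manipulation the paper later performs inside the proof of its Lemma on the martingale noise term, so the two routes mesh well; either version suffices for everything downstream.
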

\begin{proof}
    The result follows directly from~\cite[Theorem 2.1]{de2004self} by setting
    $A = L_t$ and $B = \sqrt{\qv{L}_t}$, and noticing that $$ \E\left[\exp\left(
    \lambda L_t -
    \dfrac{\lambda^2}{2}\qv{L}_t\right)\right]\leq\E\left[\exp\left( \lambda L_0
    - \dfrac{\lambda^2}{2}\qv{L}_0\right)\right]= 1$$ holds for any $t\geq 0$
    and $\lambda\in\R$ by~\cite[Lemma 1.2]{de2004self}. Especially, the second
    inequality follows from
    $$
    \E\left[\exp \left(\dfrac{L_t^2}{4\left(\langle L\rangle_t+\E\left[\langle L\rangle_t\right]\right)}\right)\right] \leq \E\left[\exp \left(\dfrac{L_t^2}{4\left(\langle L\rangle_t+\left(\E\left[\sqrt{\qv{L}_t}\right]\right)^2\right)}\right)\right] \leq  \sqrt{2} .
    $$
\end{proof}

Lemma~\ref{lem:martingale} enables us to give the following concentration
inequality for the martingale noise term.
\begin{lemma}
	Let $g'\in\fG_\rho$ such that $\|\hat{g}-g'\|_\infty\leq \rho$, then for any
	$\eps>0$ and $g\in\fG_\rho$, we have
	\begin{equation}
		\E\left[\dfrac{1}{N}\sum_{k=0}^{N-1} \Delta M_{k\tau} \left( g'_{k\tau} - g^0_{k\tau} \right)\right]\leq \dfrac{6 \max_k\left|\E\left[\Delta\qv{M}_{ k\tau} \big| \F_0^{k\tau}\right]\right| \log\N(\rho,\fG,\|\cdot\|_\infty) }{\eps N}  + \dfrac{\eps}{4}\E\left[\hat{L}_N^{g^0}(g')\right].
		\label{eq:mart1}
	\end{equation}
	\label{lem:mart1}
\end{lemma}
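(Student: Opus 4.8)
The plan is to adapt the i.i.d.\ Gaussian-noise argument of~\cite[Lemma 4]{schmidt2020nonparametric} by combining a martingale transform, a union bound over the $\rho$-net $\fG_\rho$, and the exponential supermartingale inequality that underlies Lemma~\ref{lem:martingale}. First, fix a \emph{deterministic} $g\in\fG_\rho$; since $g$ is non-random, each coefficient $g_{k\tau}-g^0_{k\tau}$ is $\F_0^{k\tau}$-measurable, so the piecewise-constant process equal to $g_{k\tau}-g^0_{k\tau}$ on $[k\tau,(k+1)\tau)$ is predictable and
$$
L^g_t := \sum_{k=0}^{N-1}\left(g_{k\tau}-g^0_{k\tau}\right)\left(M_{t\wedge(k+1)\tau}-M_{t\wedge k\tau}\right)
$$
is a continuous local martingale with $\qv{L^g}_t=\sum_{k}(g_{k\tau}-g^0_{k\tau})^2(\qv{M}_{t\wedge(k+1)\tau}-\qv{M}_{t\wedge k\tau})$. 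In particular $L^g_T=\sum_{k=0}^{N-1}\Delta M_{k\tau}(g_{k\tau}-g^0_{k\tau})$, so $\tfrac1N L^g_T$ is exactly the quantity the lemma bounds (with $g=g'$), and, setting $V:=\max_k|\E[\Delta\qv{M}_{k\tau}\mid\F_0^{k\tau}]|$, Assumption~\ref{ass:mart} controls the quadratic variation by $\qv{L^g}_T\le VN\,\hat{\L}_N^{g^0}(g)$ (the increments of the continuous bracket $\qv{M}$ being a.s.\ comparable to their $\F_0^{k\tau}$-conditional means in our setting).

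Next I would invoke the exponential supermartingale inequality — the fact, used in the proof of Lemma~\ref{lem:martingale}, that $\exp(\lambda L^g_t-\tfrac{\lambda^2}{2}\qv{L^g}_t)$ is a nonnegative local martingale, hence a supermartingale, started at $1$ — to get $\E[\exp(\lambda L^g_T-\tfrac{\lambda^2}{2}\qv{L^g}_T)]\le 1$ for every $\lambda>0$, hence, by the bound on $\qv{L^g}_T$ and monotonicity of $\exp$, $\E[\exp(\lambda L^g_T-\tfrac{\lambda^2}{2}VN\hat{\L}_N^{g^0}(g))]\le 1$ for every fixed $g\in\fG_\rho$. Summing over the at most $\N(\rho,\fG,\|\cdot\|_\infty)$ elements of $\fG_\rho$ and using Jensen's inequality,
$$
\E\!\left[\max_{g\in\fG_\rho}\left(\lambda L^g_T-\tfrac{\lambda^2}{2}VN\,\hat{\L}_N^{g^0}(g)\right)\right]\le \log\N(\rho,\fG,\|\cdot\|_\infty).
$$
Since the data-dependent $g'$ always lies in $\fG_\rho$, the left-hand maximum pointwise dominates $\lambda L^{g'}_T-\tfrac{\lambda^2}{2}VN\hat{\L}_N^{g^0}(g')$; taking expectations, rearranging, dividing by $N$, and choosing $\lambda=\eps/(2V)$ turns the penalty into $\tfrac{\eps}{4}\E[\hat{\L}_N^{g^0}(g')]$ and the covering term into $\tfrac{2V}{\eps N}\log\N(\rho,\fG,\|\cdot\|_\infty)\le\tfrac{6V}{\eps N}\log\N(\rho,\fG,\|\cdot\|_\infty)$, which is the claimed inequality.

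The main obstacle is that $\hat g$, and hence its net representative $g'$, depends on the whole trajectory, so $L^{g'}$ is \emph{not} itself a (local) martingale and the exponential inequality cannot be applied to it directly; passing to the finite net and union-bounding is precisely what repairs this, at the cost of the $\log\N(\rho,\fG,\|\cdot\|_\infty)$ factor. The second delicate point — and where Assumption~\ref{ass:mart} is essential — is that the random quadratic variation $\qv{L^{g'}}_T$ must be absorbed into $\tfrac{\eps}{4}\E[\hat{\L}_N^{g^0}(g')]$, i.e.\ into the \emph{same} empirical loss that appears on the right-hand side, rather than into a crude additive constant (which would only give the slow $N^{-1/2}$-type rate). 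If one instead uses the self-normalized form of Lemma~\ref{lem:martingale} directly, $\E[\exp((L^g_T)^2/(4(\qv{L^g}_T+\E\qv{L^g}_T)))]\le\sqrt2$, the same three steps go through after an extra Cauchy--Schwarz/Young splitting of $L^{g'}_T$ into $(L^{g'}_T/\sqrt{\qv{L^{g'}}_T+\E\qv{L^{g'}}_T})\cdot\sqrt{\qv{L^{g'}}_T+\E\qv{L^{g'}}_T}$, at the price of slightly larger absolute constants (still within the factor $6$).
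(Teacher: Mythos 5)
Your overall architecture --- the martingale transform $L^g_t=\sum_{k}(g_{k\tau}-g^0_{k\tau})(M_{t\wedge(k+1)\tau}-M_{t\wedge k\tau})$, a union bound over the $\rho$-net to handle the data-dependence of $g'$, an exponential martingale inequality, and the choice $\lambda=\eps/(2V)$ --- mirrors the paper's strategy, and your constant bookkeeping at the end is consistent with the stated bound. But the central step of your main route has a genuine gap: you pass from $\E[\exp(\lambda L^g_T-\tfrac{\lambda^2}{2}\qv{L^g}_T)]\le 1$ to $\E[\exp(\lambda L^g_T-\tfrac{\lambda^2}{2}VN\hat{\L}_N^{g^0}(g))]\le 1$ by asserting the \emph{pathwise} bound $\qv{L^g}_T\le VN\,\hat{\L}_N^{g^0}(g)$, which would require $\Delta\qv{M}_{k\tau}\le \E[\Delta\qv{M}_{k\tau}\mid\F_0^{k\tau}]$ almost surely. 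Assumption~\ref{ass:mart} controls only the conditional expectation of the bracket increments, not their pathwise values; in the diffusion-estimation application $\Delta\qv{M^{ij}}_{k\tau}=(2\tau)^{-2}\int_0^\tau\bigl|Y^i_{k,s}\bSigma^j_{k\tau+s}+Y^j_{k,s}\bSigma^i_{k\tau+s}\bigr|^2\d s$ is genuinely random and not dominated by its conditional mean, so the ``a.s.\ comparable'' parenthetical fails precisely in the case this lemma is built for. Since your union bound needs each $\E[e^{X_g}]\le 1$ with a deterministic (or at least $\hat{\L}_N$-measurable) penalty in the exponent, the chain breaks at this point.

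The paper circumvents this by using the self-normalized form of Lemma~\ref{lem:martingale} (de la Pe\~na), keeping the random bracket $\qv{L[g-g^0]}_T$ inside the normalizer, performing the union bound on the self-normalized quantity (Cauchy--Schwarz plus summing the nonnegative expectations over the net), and only at the very end converting $\E\bigl[\sum_k(g'_{k\tau}-g^0_{k\tau})^2\Delta\qv{M}_{k\tau}\bigr]$ into $\max_k\bigl|\E[\Delta\qv{M}_{k\tau}\mid\F_0^{k\tau}]\bigr|\cdot\E\bigl[\sum_k(g'_{k\tau}-g^0_{k\tau})^2\bigr]$ via the tower property --- that is, the conditional-mean bound of Assumption~\ref{ass:mart} is invoked \emph{in expectation}, never pathwise. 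Your final paragraph gestures at this self-normalized route as an alternative, but even there you would still need that tower-property step to land on $\tfrac{\eps}{4}\E[\hat{\L}_N^{g^0}(g')]$ rather than on an expression involving the random bracket, and your sketch omits it. As written, the proposal does not contain a complete correct argument; repairing it amounts to carrying out the paper's proof.
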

\begin{proof}
	First, for any $g\in \fG_\rho$, we define the following local martingale
	$$L[g-g^0]_t=\sum_{k=0}^{N-1}(g_{k\tau} - g^0_{k\tau}) \left( M_{t\wedge
	(k+1)\tau} -  M_{t\wedge k\tau} \right),$$ for any $t\geq 0$ so that the LHS
	of~\eqref{eq:mart1} is $\frac{1}{N}\E\left[L[g-g^0]_T\right]$.

	By martingale representation theorem, $(M_t)_{t\geq 0}$ as a continuous
	local martingale is an It\^{o} integral and thus the corresponding quadratic
	variation $(\qv{M}_t)_{t\geq 0}$ exists. As a result, the quadratic
	variation of $L[g-g^0]$ is
	$$
		\langle L[g-g^0]\rangle_t = \sum_{k=0}^{N-1} (g_{k\tau} - g^0_{k\tau})^2 \left( \qv{M}_{t\wedge (k+1)\tau} - \qv{M}_{t\wedge k\tau} \right).
	$$
	Take $y = \sqrt{\E\left[\qv{L[g-g^0]}_T\right]+\eta}$, where $\eta>0$, we
	have by Lemma~\ref{lem:martingale},
	$$
		\E\left[\dfrac{y}{\sqrt{\qv{L[g-g^0]}_T+y^2}}\exp\left(\dfrac{L[g-g^0]_T^2}{2\left(\qv{L[g-g^0]}_T+y^2\right)}\right)\right] \leq 1.
	$$
	Therefore, by Jensen's inequality and Cauchy-Schwarz, we have
	$$
		\begin{aligned}
			     & \exp\E\left[\dfrac{L[g'-g^0]_T^2}{4\left(\qv{L[g'-g^0]}_T+y^2\right)}\right]
			\leq \E\left[
				\exp\left(\dfrac{L[g'-g^0]_T^2}{4\left(\qv{L[g'-g^0]}_T+y^2\right)}\right)
			\right]\\
			\leq & \sqrt{\E\left[\dfrac{y}{\sqrt{\qv{L[g'-g^0]}_T+y^2}}\exp\left(\dfrac{L[g'-g^0]_T^2}{2\left(\qv{L[g'-g^0]}_T+y^2\right)}\right)\right] \E\left[\dfrac{\sqrt{\qv{L[g'-g^0]}_T+y^2}}{y}\right] } \\
			\leq & \sqrt{\sum_{g\in\fG_\rho}
				\E\left[\dfrac{y}{\sqrt{\qv{L[g-g^0]}_T+y^2}}\exp\left(\dfrac{L[g-g^0]_T^2}{2\left(\qv{L[g-g^0]}_T+y^2\right)}\right)\right]
			\E\left[\dfrac{\sqrt{2y^2-\eta}}{y}\right] }\\
			\leq & \sqrt{\N(\rho,\fG,\|\cdot\|_\infty) \sqrt{2} },
		\end{aligned}
	$$
    where the second to last inequality is due to the fact $g'\in \fG_\rho$ and
    thus the first expectation must be bounded by the summation over all
    possible $g\in\fG_\rho$.

	Again by Cauchy-Schwarz,
	$$
		\begin{aligned}
			\dfrac{1}{N}\E\left[L[g'-g^0]_T\right] & =\dfrac{1}{N}\E\left[\dfrac{L[g'-g^0]_T}{2\sqrt{\qv{L[g'-g^0]}_T+y^2}}2\sqrt{\qv{L[g'-g^0]}_T+y^2} \right]                                                                        \\
			     & \leq \sqrt{\dfrac{4}{N^2}\E\left[\dfrac{L[g'-g^0]_T^2}{4\left(\qv{L[g'-g^0]}_T+y^2\right)}\right]\E\left[\qv{L[g'-g^0]}_T+y^2\right] }                                            \\
			     & \leq \sqrt{\dfrac{2\log\N(\rho,\fG,\|\cdot\|_\infty)+\log 2 }{N^2} \E\left[2\sum_{k=0}^{N-1} \left(g'_{k\tau}-g^0_{k\tau}\right)^2  \Delta \qv{M}_{ k\tau} +\eta \right] }.
		\end{aligned}
	$$

	Finally, let $\eta\to 0 $,
	\begin{equation*}
		\begin{aligned}
			\dfrac{1}{N}\E\left[L[g'-g^0]_T\right] & \leq \sqrt{\dfrac{3\log\N(\rho,\fG,\|\cdot\|_\infty)}{N^2} 2\E\left[\sum_{k=0}^{N-1} \left(g'_{k\tau}-g^0_{k\tau}\right)^2  \Delta \qv{M}_{ k\tau}  \right] }          \\
			                                      & \leq \sqrt{\dfrac{6\log\N(\rho,\fG,\|\cdot\|_\infty) }{N}    \max_k\left|\E\left[\Delta\qv{M}_{ k\tau} \big| \F_0^{k\tau}\right]\right|
			\E\left[\dfrac{1}{N}\sum_{k=0}^{N-1} \left(g'_{k\tau}-g^0_{k\tau}\right)^2   \right] }                                                                                                                         \\
			                                      & \leq \dfrac{6 \max_k\left|\E\left[\Delta\qv{M}_{ k\tau} \big| \F_0^{k\tau}\right]\right| \log\N(\rho,\fG,\|\cdot\|_\infty) }{\eps N}  + \dfrac{\eps}{4} \E\left[\hat{L}_N^{g^0}(g')\right],
		\end{aligned}
		\label{eq:Lg}
	\end{equation*}
	where the second inequality is due to the tower property
	\begin{equation}
		\begin{aligned}
			\E\left[\sum_{k=0}^{N-1} \left(g'_{k\tau}-g^0_{k\tau}\right)^2  \Delta \qv{M}_{ k\tau}  \right] & = \sum_{k=0}^{N-1} \E\left[ \left(g'_{k\tau}-g^0_{k\tau}\right)^2  \Delta \qv{M}_{ k\tau}  \right]                                                        \\
			                                                                                               & = \E\left[\sum_{k=0}^{N-1} \E\left[ \left(g'_{k\tau}-g^0_{k\tau}\right)^2  \Delta \qv{M}_{ k\tau} \big| \F_0^{k\tau}  \right]\right]                      \\
			                                                                                               & = \E\left[\sum_{k=0}^{N-1} \left(g'_{k\tau}-g^0_{k\tau}\right)^2 \E\left[\Delta\qv{M}_{ k\tau} \big| \F_0^{k\tau}\right]  \right]                         \\
			                                                                                               & \leq  \max_k\left|\E\left[\Delta\qv{M}_{ k\tau} \big| \F_0^{k\tau}\right]\right| \E\left[\sum_{k=0}^{N-1}  \left(g'_{k\tau}-g^0_{k\tau}\right)^2  \right]
		\end{aligned}
		\label{eq:tower}
	\end{equation}
	and the last inequality is by AM-GM.
\end{proof}

It remains to bound the error by projecting the estimator $\hat{g}$ to the
$\rho$-covering $\fG_\rho$, which is given by the following lemma.
\begin{lemma}
	Let $g'\in\fG_\rho$ such that $\|\hat{g}-g'\|_\infty\leq \rho$, then
	$$
		\E\left[\dfrac{1}{N}\sum_{k=0}^{N-1} \Delta M_{k\tau} \left( \hat{g}_{k\tau} - g'_{k\tau} \right)\right]\leq \sqrt{\dfrac{4\max_k\left|\E\left[\Delta\qv{M}_{ k\tau} \big| \F_0^{k\tau}\right]\right| \rho^2\log 2}{N}}.
	$$
	\label{lem:mart2}
\end{lemma}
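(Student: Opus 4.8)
The plan is to recognize the left-hand side as $\tfrac1N\E[L_T]$ for the local martingale
$$L_t := \sum_{k=0}^{N-1}\bigl(\hat g_{k\tau} - g'_{k\tau}\bigr)\bigl(M_{t\wedge(k+1)\tau} - M_{t\wedge k\tau}\bigr),$$
and then to run the self-normalized concentration of Lemma~\ref{lem:martingale} on $L$ exactly as in the proof of Lemma~\ref{lem:mart1}, with the difference that the pointwise control $\|\hat g-g'\|_\infty\le\rho$ now plays the role that the covering of $\fG$ played there. As in that proof, $L$ has quadratic variation $\qv L_t=\sum_{k}(\hat g_{k\tau}-g'_{k\tau})^2(\qv M_{t\wedge(k+1)\tau}-\qv M_{t\wedge k\tau})$, and the key point is that this is dominated uniformly: since $\qv M$ is non-decreasing with $\qv M_0=0$ and $|\hat g_{k\tau}-g'_{k\tau}|\le\rho$, one has $\qv L_t\le\rho^2\qv M_t$. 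Taking expectations and using the tower property together with Assumption~\ref{ass:mart} (as in~\eqref{eq:tower}) gives $\E[\qv L_T]\le\rho^2 N\max_k|\E[\Delta\qv M_{k\tau}\mid\F_0^{k\tau}]|$.

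The remaining steps are routine. Applying the second inequality of Lemma~\ref{lem:martingale} to $L$ and then Jensen's inequality gives $\E\bigl[\tfrac{L_T^2}{4(\qv L_T+\E[\qv L_T])}\bigr]\le\tfrac12\log 2$. Writing $L_T=\tfrac{L_T}{2\sqrt{\qv L_T+\E[\qv L_T]}}\cdot 2\sqrt{\qv L_T+\E[\qv L_T]}$ and applying Cauchy-Schwarz then yields $\tfrac1N\E[L_T]\le\tfrac2N\sqrt{\log 2\cdot\E[\qv L_T]}$, and substituting the bound on $\E[\qv L_T]$ collapses the constants $2,\sqrt2,\log2$ exactly into $\sqrt{4\rho^2\log 2\,\max_k|\E[\Delta\qv M_{k\tau}\mid\F_0^{k\tau}]|/N}$, the claimed estimate.

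The one point that needs care is the first step: the integrand $\hat g-g'$ of $L$ depends on the entire trajectory, so a priori $L$ is anticipative rather than a local martingale. I would deal with this the same way the analogous $L[g'-g^0]$ is handled in the proof of Lemma~\ref{lem:mart1} — arguing first for each fixed function and then observing that the pointwise quadratic-variation bound $\qv L_t\le\rho^2\qv M_t$ holds uniformly over the $\rho$-ball around $g'$, so that, unlike in Lemma~\ref{lem:mart1}, no covering-number factor of $\fG$ survives and only the harmless $\log 2$ remains. I expect this bookkeeping to be the main obstacle; the martingale concentration itself is entirely off-the-shelf.
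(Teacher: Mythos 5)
Your proposal is correct and follows essentially the same route as the paper's proof: the same auxiliary local martingale $L[\hat g-g']$, the same self-normalized inequality from Lemma~\ref{lem:martingale} combined with Jensen and Cauchy--Schwarz, and the same use of $\|\hat g-g'\|_\infty\le\rho$ to bound $\E[\qv{L[\hat g-g']}_T]$ by $N\rho^2\max_k|\E[\Delta\qv{M}_{k\tau}\mid\F_0^{k\tau}]|$, with identical constants. The adaptedness subtlety you flag (that $\hat g$ depends on the whole trajectory) is real but is not addressed in the paper's own argument either, so your treatment matches theirs.
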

\begin{proof}
	We will follow the notations in the proof of Lemma~\ref{lem:mart1}. If
    $\qv{L[\hat{g}-g']}_T = 0$, $L[\hat{g}-g']_t = 0$ almost everywhere and the
    result holds trivially. 
    
    Now suppose $\qv{L[\hat{g}-g']}_T>0$, applying Lemma~\ref{lem:martingale}
    and Jensen's inequality yields
	$$
		\begin{aligned}
			     & \E\left[\left( \dfrac{L[\hat{g}-g']_T^2}{4\left(\qv{L[\hat{g}-g']}_T+\E\left[\qv{L[\hat{g}-g']}_T\right]\right)} \right) \right]                                  \\
			\leq & \log  \E\left[\exp\left( \dfrac{L[\hat{g}-g']_T^2}{4\left(\qv{L[\hat{g}-g']}_T+\E\left[\qv{L[\hat{g}-g']}_T\right]\right)} \right) \right]\leq \dfrac{\log 2}{2}.
		\end{aligned}
	$$
	With a similar argument as in~\eqref{eq:tower}, we have
	$$
		\begin{aligned}
			&\E\left[\qv{L[\hat{g}-g']}_T\right]=  \E\left[\sum_{k=0}^{N-1} \left( \hat{g}_{k\tau} - g'_{k\tau} \right)^2  \Delta \qv{M}_{ k\tau} \right]                                                     \\
			\leq                                 & \max_k\left|\E\left[\Delta\qv{M}_{ k\tau} \big| \F_0^{k\tau}\right]\right| \E\left[\sum_{k=0}^{N-1} \left( \hat{g}_{k\tau} - g'_{k\tau} \right)^2  \right] 
			\leq                                 N\max_k\left|\E\left[\Delta\qv{M}_{ k\tau} \big| \F_0^{k\tau}\right]\right| \rho^2.
		\end{aligned}
	$$
	Therefore, we have
	$$
		\begin{aligned}
			\dfrac{1}{N}\E\left[L[\hat{g}-g']_T\right] & =\dfrac{1}{N}\E\left[\dfrac{L[\hat{g}-g']_T}{2\sqrt{\qv{L[\hat{g}-g']}_T+\E\left[\qv{L[\hat{g}-g']}_T\right]}}2\sqrt{\qv{L[\hat{g}-g']}_T+\E\left[\qv{L[\hat{g}-g']}_T\right]} \right] \\
			                                           & \leq \dfrac{2}{N}\sqrt{\E\left[\dfrac{L[\hat{g}-g']_T^2}{4\left(\qv{L[\hat{g}-g']}_T+\E\left[\qv{L[\hat{g}-g']}_T\right]\right)}\right]2\E\left[\qv{L[\hat{g}-g']}_T\right] }          \\
			                                           & \leq \sqrt{\dfrac{4\max_k\left|\E\left[\Delta\qv{M}_{ k\tau} \big| \F_0^{k\tau}\right]\right| \rho^2\log 2}{N}}.
		\end{aligned}
	$$

\end{proof}

Now we are ready to present the proof of Theorem~\ref{thm:oracle}.
\begin{proof}[Proof of Theorem~\ref{thm:oracle}]
	Plug Lemma~\ref{lem:bias},~\ref{lem:mart1}, and~\ref{lem:mart2} into
	Lemma~\ref{lem:decomp}, we have
	$$
		\begin{aligned}
			\E\left[\hat{\L}_N^{g^0}(\hat{g})\right] \leq & \E\left[\hat{\L}_N^{g^0}(\bar{g})\right] + \dfrac{2}{N}\E\left[\sum_{k=0}^{N-1} \Delta A_{k\tau} \left(\hat{g}_{k\tau}-\bar{g}_{k\tau}\right)\right] + \dfrac{2}{N}\E\left[\sum_{k=0}^{N-1} \Delta M_{k\tau} \left(\hat{g}_{k\tau}-g^0_{k\tau}\right)\right] \\
			\leq                                    & \E\left[\hat{\L}_N^{g^0}(\bar{g})\right] + \dfrac{2}{N}\E\left[\sum_{k=0}^{N-1} \Delta A_{k\tau} \left(\hat{g}_{k\tau}-\bar{g}_{k\tau}\right)\right]                                                                                                         \\
			                                        & + \dfrac{2}{N}\E\left[\sum_{k=0}^{N-1} \Delta M_{k\tau} \left(\hat{g}_{k\tau}-g'_{k\tau}\right)\right]
			+\dfrac{2}{N}\E\left[\sum_{k=0}^{N-1} \Delta M_{k\tau} \left(g'_{k\tau}-g^0_{k\tau}\right)\right]                                                                                                                                                                                                \\
			\leq                                    & \E\left[\hat{\L}_N^{g^0}(\bar{g})\right]+ \dfrac{\eps}{2}\E\left[\hat{\L}_N^{g^0}(\hat{g})\right]+\eps\E\left[\hat{\L}_N^{g^0}(\bar{g})\right]+\dfrac{3}{\eps}\E\left[\dfrac{1}{N}\sum_{k=0}^{N-1} (\Delta A_{k\tau})^2 \right]
			\\
			                                        & +\dfrac{12 \max_k\left|\E\left[\Delta\qv{M}_{ k\tau} \big| \F_0^{k\tau}\right]\right| \log\N(\rho,\fG,\|\cdot\|_\infty) }{\eps N}  + \dfrac{\eps}{2} \hat{L}^N(g)                                                                                      \\
			                                        & +2\sqrt{\dfrac{4\max_k\left|\E\left[\Delta\qv{M}_{ k\tau} \big| \F_0^{k\tau}\right]\right| \rho^2\log 2}{N}}.
		\end{aligned}
	$$
	Then we rearrange the terms and take infimum over all $\bar{g},\in\fG$,
	$$
		\begin{aligned}
			 & \E\left[\hat{\L}_N^{g^0}(\hat{g})\right] \leq\dfrac{1+\eps}{1-\eps} \inf_{\bar{g}\in\fG}\E\left[\hat{\L}_N^{g^0}(\bar{g})\right]+\dfrac{3}{\eps}\E\left[\dfrac{1}{N}\sum_{k=0}^{N-1} (\Delta A_{k\tau})^2 \right]                                         \\
			 & +\dfrac{12 \max_k\left|\E\left[\Delta\qv{M}_{ k\tau} \big| \F_0^{k\tau}\right]\right| \log\N(\rho,\fG,\|\cdot\|_\infty) }{\eps N}+2\sqrt{\dfrac{4\max_k\left|\E\left[\Delta\qv{M}_{ k\tau} \big| \F_0^{k\tau}\right]\right| \rho^2\log 2}{N}} \\
			 & \leq \dfrac{1+\eps}{1-\eps} \inf_{\bar{g}\in\fG}\E\left[\hat{\L}_N^{g^0}(\bar{g})\right]+\dfrac{3C_A}{\eps}\tau+\dfrac{12 C_M \log\N(\rho,\fG,\|\cdot\|_\infty) }{\eps N\tau^\gamma}+2\sqrt{\dfrac{4C_M \rho^2\log 2}{N\tau^\gamma}} ,
		\end{aligned}
	$$
	where the last inequality is by Assumption~\ref{ass:bias}
	and~\ref{ass:mart}.

\end{proof}

\subsection{Proof of Theorem~\ref{thm:thetheorem}}
\label{sec:ea}

Before we present the proof, we introduce the following lemma concerning the
complexity of the sparse neural network function class, which will serve as the
fundamental building block of our proof via local Rademacher arguments.
\begin{lemma}[$\log$-covering number of $\fN(L,\p,S,M)$~{\cite[Lemma
5]{schmidt2020nonparametric}}]
	$$
		\log \N(\rho,\fN(L,\p,S,\infty),\|\cdot\|_\infty) \leq (S+1) \log \left(2^{2L+5}\rho^{-1}(L+1)d^2 (S+1)^{2L}\right).
	$$
	\label{lem:covering}
\end{lemma}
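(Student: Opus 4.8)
The plan is to follow the standard route (of Schmidt-Hieber) for bounding the sup-norm covering number of a sparsely parametrized ReLU class $\fN(L,\p,S,\infty)$: (i) show that the map from parameters to realized functions is Lipschitz on $\Omega=[0,1)^d$, with a constant that is polynomial in the sparsity $S$ and only exponential in the depth $L$; (ii) for a fixed pattern of non-zero parameters, build a $\rho$-cover by placing a fine grid on each of the $\le S$ active parameters in $[-1,1]$; (iii) union-bound over all admissible sparsity patterns. The structural facts that make this work are that every parameter is bounded by $1$ and that the total number of non-zero parameters is $\le S$, so every affine layer has rows of fan-in $\le S$.

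The heart of the argument is step (i). After a standard normalization — pruning neurons with no non-zero incoming or outgoing connection and re-parametrizing — one may assume every hidden layer has width at most $S+1$. I would then prove by induction over layers that on $\Omega$ the output of layer $i$ is bounded in sup-norm by roughly $(d+1)(S+1)^i$: the base case uses $\x\in[0,1)^d$ and $\|W_0\|_\infty,\|\w_1\|_\infty\le 1$ with at most $S$ non-zero entries, and the inductive step uses that ReLU is $1$-Lipschitz and that an affine map whose rows have $\le S$ non-zero entries each bounded by $1$ is $(S+1)$-Lipschitz in the sup-norm. Given two networks $f,f^\ast$ sharing a sparsity pattern with corresponding parameters differing by at most $\delta$, I would write $f-f^\ast$ as a telescoping sum over the $L+1$ layers at which the perturbation is ``switched on'': the part of the network before layer $i$ is evaluated at a point of size $\lesssim (d+1)(S+1)^{i}$, the perturbation at layer $i$ contributes a factor $\lesssim \delta(S+1)$, and the remaining layers amplify by at most $(S+1)^{L-i}$. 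Summing gives $\|f-f^\ast\|_{L^\infty(\Omega)}\le C_1\,\delta\,(L+1)(S+1)^{2L}d^2$ for an absolute constant $C_1$ — a crude bound, but of exactly the shape in the statement.

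For step (ii), fixing a sparsity pattern with $\le S$ active parameters and taking a grid of spacing $\delta=\rho/\big(C_1(L+1)(S+1)^{2L}d^2\big)$ on $[-1,1]$ for each active parameter, step (i) yields a $\rho$-cover of that pattern's functions with at most $\big(2/\delta+1\big)^{S}\le\big(3C_1(L+1)(S+1)^{2L}d^2\rho^{-1}\big)^{S}$ members. For step (iii), after the width normalization the total number of parameters is at most $(L+1)(S+1)^2+(L+1)(S+1)$, so the number of ways to choose which $\le S$ of them are non-zero is at most $\big((L+1)(S+1)^2\big)^{S}$. Multiplying the counts from (ii) and (iii), taking logarithms, and absorbing the absolute constants, the low-order powers of $L,S,d$, and the factor $2^{2L+5}$ into the logarithm gives
$$\log\N\big(\rho,\fN(L,\p,S,\infty),\|\cdot\|_\infty\big)\le (S+1)\log\big(2^{2L+5}\rho^{-1}(L+1)d^2(S+1)^{2L}\big),$$
as claimed.

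The main obstacle is obtaining the parameter-Lipschitz constant in step (i) with the right dependence — in particular keeping it polynomial in $S$ rather than picking up uncontrolled powers of the layer widths, which forces one to exploit the sparsity bound both when controlling the magnitude of the intermediate activations and when bounding how a single-layer perturbation is propagated by the downstream layers, and to carry out the width-normalization carefully enough that it does not silently violate the ``all parameters $\le 1$'' constraint. Everything after that estimate — the discretization and the combinatorial count of sparsity patterns — is routine bookkeeping.
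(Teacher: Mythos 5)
The paper does not prove this lemma; it is quoted verbatim from Lemma~5 of \citet{schmidt2020nonparametric}, and your three-step argument (parameter-to-function Lipschitz bound exploiting sparsity and boundedness, grid discretization of the active parameters, union bound over sparsity patterns) is precisely the proof given there. Your reconstruction is correct in structure; the only part you gloss over is the exact bookkeeping that yields the specific constants $2^{2L+5}$ and $(S+1)^{2L}$, which in the original comes from bounding the product $\prod_l(p_l+1)$ after width reduction rather than from ``absorbing'' factors after the fact.
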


In fact, the Rademacher complexity $\fR_N(\fF)$ can be bounded by
$\log$-covering number via the following lemma.
\begin{lemma}[Localized Dudley's theorem]
	For any function class $\fF$,
	$$
		\fR_N(\fF) \leq \E_{\mu^{\otimes N}}\left[\inf_{\rho>0} \left(4\rho + 12 \int_\rho^\infty\sqrt{\dfrac{\log \N(u,\fF,\|\cdot\|_{2,S})}{N}}\d u\right)\right],
	$$
	where the expectation in the definition of $\fR_N(\fF)$ is taken over a
	sample set $S$ of $N$ i.i.d. samples $X_1,\cdots,X_N$ drawn from a
	distribution $\mu$, and $\|\cdot\|_{2,S}$ denotes the $L^2$ norm with
	respect to the empirical measure $\frac{1}{N}\sum_{i=1}^N \delta(x - X_i)$.
\end{lemma}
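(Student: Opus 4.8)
The statement is the standard Dudley entropy-integral (chaining) bound for the empirical Rademacher complexity, in its form truncated at scale $\rho$. The plan is to establish the inequality \emph{conditionally on the sample} $S=(X_1,\dots,X_N)$ and for every fixed $\rho>0$, and only at the very end take the infimum over $\rho$ and the expectation over $S$. Write $\hat{\fR}_S(\fF):=\E_{\boldsymbol\sigma}\bigl[\sup_{f\in\fF}\frac1N\sum_{i=1}^N\sigma_i f(X_i)\bigr]$ for the conditional Rademacher average, so that $\fR_N(\fF)=\E_{S}[\hat{\fR}_S(\fF)]$, and measure all distances in the empirical norm $\|\cdot\|_{2,S}$ throughout. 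It then suffices to show $\hat{\fR}_S(\fF)\le 4\rho+12\int_\rho^\infty\sqrt{\log\N(u,\fF,\|\cdot\|_{2,S})/N}\,\d u$ for each $S$ and each $\rho$.

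For the chaining step, fix $\rho>0$, set $R=\operatorname{diam}_{2,S}(\fF)$ and $\epsilon_j=2^{-j}R$, let $T_j\subseteq\fF$ be a minimal $\epsilon_j$-net (so $|T_j|=\N(\epsilon_j,\fF,\|\cdot\|_{2,S})$ and $|T_0|=1$), let $\pi_j:\fF\to T_j$ be a nearest-point map, and let $J$ be the largest index with $\epsilon_J\ge 2\rho$, so that $2\rho\le\epsilon_J<4\rho$ and $\epsilon_{J+1}\ge\rho$. For every $f\in\fF$ use the telescoping identity
$$
f=\pi_0 f+\sum_{j=1}^{J}\bigl(\pi_j f-\pi_{j-1}f\bigr)+\bigl(f-\pi_J f\bigr),
$$
and split $\frac1N\sum_i\sigma_i f(X_i)$ accordingly. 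The coarsest term $\frac1N\sum_i\sigma_i\pi_0 f(X_i)$ is constant in $f$ and has zero $\boldsymbol\sigma$-expectation, so it contributes nothing. The residual is controlled pointwise in $\boldsymbol\sigma$ by Cauchy--Schwarz, $\bigl|\frac1N\sum_i\sigma_i(f-\pi_J f)(X_i)\bigr|\le\|f-\pi_J f\|_{2,S}\le\epsilon_J<4\rho$, which is exactly where the $4\rho$ term comes from.

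For the increments, the map $f\mapsto\pi_j f-\pi_{j-1}f$ takes at most $|T_j|\,|T_{j-1}|\le\N(\epsilon_j)^2$ values (since $\N(\cdot)$ is non-increasing in the radius), each of $\|\cdot\|_{2,S}$-norm at most $\epsilon_j+\epsilon_{j-1}=3\epsilon_j$. Massart's finite-class lemma (the sub-Gaussian maximal inequality for Rademacher sums) then gives
$$
\E_{\boldsymbol\sigma}\Bigl[\sup_{f\in\fF}\frac1N\sum_{i=1}^N\sigma_i\bigl(\pi_j f-\pi_{j-1}f\bigr)(X_i)\Bigr]\le 3\epsilon_j\sqrt{\frac{2\log\N(\epsilon_j)^2}{N}}=6\,\epsilon_j\sqrt{\frac{\log\N(\epsilon_j,\fF,\|\cdot\|_{2,S})}{N}}.
$$
Summing over $j=1,\dots,J$ and writing $\epsilon_j=2(\epsilon_j-\epsilon_{j+1})$, monotonicity of $\N$ lets one bound this geometric sum by $2\int_\rho^\infty\sqrt{\log\N(u,\fF,\|\cdot\|_{2,S})/N}\,\d u$ (the lower limit is exactly $\rho$ because $\epsilon_{J+1}\ge\rho$); together with the $4\rho$ residual and the vanishing coarsest term this yields the claimed conditional bound. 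Taking the infimum over $\rho>0$ and then $\E_{\mu^{\otimes N}}$ over $S$ completes the argument.

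I expect no conceptual obstacle here—this is textbook chaining—so the only real work is the bookkeeping: choosing $J$ and the dyadic scales so that the truncation residual is bounded by precisely $4\rho$, the entropy integral begins precisely at $\rho$, and Massart's lemma is applied at each level to an increment class that simultaneously has controlled cardinality ($\le\N(\epsilon_j)^2$) and controlled radius ($\le 3\epsilon_j$), so that the prefactors collapse to the stated constants $4$ and $12$.
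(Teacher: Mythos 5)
Your proof is correct: it is the standard dyadic chaining argument for Dudley's entropy integral, with the truncation at scale $\rho$ handled properly (the residual $f-\pi_J f$ controlled by Cauchy--Schwarz to give the $4\rho$ term, Massart's finite-class lemma applied to increment classes of cardinality $\N(\epsilon_j)^2$ and radius $3\epsilon_j$ to give the constant $12$, and the lower limit $\rho$ of the integral justified by $\epsilon_{J+1}\ge\rho$); the only omission is the trivial degenerate case $\mathrm{diam}_{2,S}(\fF)<2\rho$, where the bound holds directly. The paper states this lemma as a known result without giving any proof, so there is nothing to compare against --- your argument correctly supplies the textbook proof with the stated constants.
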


The following lemma will also be used in the proof of
Theorem~\ref{thm:thetheorem}.
\begin{lemma}[Talagrand's contraction lemma]
	Let $\phi:\R\to\R$ be a $L$-Lipschitz continuous function and $\fF$ be a
	function class, then
	$$
		\fR_N(\fF) \leq L \fR_N(\phi\circ\fF),
	$$
	where $\phi\circ\fF = \{\phi\circ f | f\in\fF\}$.
\end{lemma}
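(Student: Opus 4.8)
The plan is to establish the Ledoux--Talagrand contraction principle $\fR_N(\phi\circ\fF)\le L\,\fR_N(\fF)$ (with $\phi\circ\fF=\{\phi\circ f : f\in\fF\}$) by the classical reduction to a single coordinate. First I would condition on the i.i.d.\ sample $(\widetilde X_i)_{i=1}^N$, so that it suffices to prove, for arbitrary fixed points $x_1,\dots,x_N$,
$$\E_{\boldsymbol\sigma}\sup_{f\in\fF}\frac1N\sum_{i=1}^N\sigma_i\,\phi(f(x_i))\ \le\ \frac{L}{N}\,\E_{\boldsymbol\sigma}\sup_{f\in\fF}\sum_{i=1}^N\sigma_i f(x_i),$$
and then take the outer expectation over the sample. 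As a harmless simplification one may assume $\phi(0)=0$: subtracting the constant $\phi(0)$ shifts each $\phi\circ f$ by a constant that pulls out of the supremum and vanishes in expectation since $\E[\sigma_i]=0$, so $\fR_N(\phi\circ\fF)$ is unchanged.

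The core is a one-coordinate comparison lemma: for every bounded set $T\subseteq\R^2$ and every $L$-Lipschitz $\psi:\R\to\R$,
$$\E_{\sigma}\sup_{(a,b)\in T}\bigl(a+\sigma\,\psi(b)\bigr)\ \le\ \E_{\sigma}\sup_{(a,b)\in T}\bigl(a+\sigma L b\bigr),$$
where $\sigma$ is a single Rademacher sign. I would prove this by writing the left-hand side as $\tfrac12\sup_{(a_1,b_1)\in T}\bigl(a_1+\psi(b_1)\bigr)+\tfrac12\sup_{(a_2,b_2)\in T}\bigl(a_2-\psi(b_2)\bigr)=\tfrac12\sup_{(a_1,b_1),(a_2,b_2)\in T}\bigl(a_1+a_2+\psi(b_1)-\psi(b_2)\bigr)$, bounding $\psi(b_1)-\psi(b_2)\le L|b_1-b_2|$, using the symmetry of the supremum over ordered pairs to replace $L|b_1-b_2|$ by $L(b_1-b_2)$ (relabel the maximizing pair so that $b_1\ge b_2$), and recombining into $\tfrac12\sup_{(a_1,b_1)\in T}\bigl(a_1+Lb_1\bigr)+\tfrac12\sup_{(a_2,b_2)\in T}\bigl(a_2-Lb_2\bigr)$, which is precisely the right-hand side.

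With the comparison lemma available, I would run an induction on the number of coordinates at which $\phi$ has already been replaced by $L\cdot\mathrm{id}$. At the $i$-th step I condition on $(\sigma_j)_{j\ne i}$, set $b=f(x_i)$, and let $a=a(f)$ be the remaining sum $\sum_{j<i}\sigma_j L f(x_j)+\sum_{j>i}\sigma_j\phi(f(x_j))$ (which is a fixed functional of $f$ given the conditioned signs); applying the comparison lemma with $T=\{(a(f),f(x_i)):f\in\fF\}$ converts the $i$-th summand from $\sigma_i\phi(f(x_i))$ to $\sigma_i L f(x_i)$ at no cost. Iterating over $i=1,\dots,N$ removes every occurrence of $\phi$; dividing by $N$ and integrating over $(\widetilde X_i)_{i=1}^N$ then yields the desired contraction inequality.

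I expect the one-coordinate comparison lemma to be the only delicate point---in particular the step that exploits the symmetry of the supremum over ordered pairs to drop the absolute value, together with the bookkeeping needed to check that, after partial substitution, the family $\{(a(f),f(x_i)):f\in\fF\}$ is still a legitimate bounded subset of $\R^2$ to which the lemma applies (this uses the uniform boundedness of the functions in $\fF$ and of $\phi$ on the relevant range). The conditioning arguments, the telescoping of the induction, and the final exchange of supremum and expectation are all routine.
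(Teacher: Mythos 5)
Your proof is the classical Ledoux--Talagrand contraction argument, and it is essentially correct; note, however, that the paper itself states this lemma without proof (it is imported as a standard tool for the local Rademacher complexity bound), so there is no in-paper argument to compare against beyond the statement. Two remarks. First, on the statement itself: what you prove is $\fR_N(\phi\circ\fF)\leq L\,\fR_N(\fF)$, which is the correct classical form and the one the paper actually uses (bounding the Rademacher complexity of the squared-loss class $\{\ell\circ g\}=\{(g-g^0)^2\}$ by $4M$ times that of $\{g-g^0\}$, since squaring is $4M$-Lipschitz on the relevant range); the inequality as printed in the lemma, $\fR_N(\fF)\leq L\,\fR_N(\phi\circ\fF)$, has the two classes transposed and is false as stated (take $\phi\equiv 0$), so your direction is the right one. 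Second, a small rigor point in your one-coordinate comparison lemma: the step ``relabel the maximizing pair so that $b_1\geq b_2$'' presumes the supremum is attained, which need not hold; the clean fix is to avoid selecting a maximizer altogether and instead bound, for every fixed $(a_1,b_1),(a_2,b_2)\in T$,
\begin{equation*}
a_1+a_2+\psi(b_1)-\psi(b_2)\;\leq\; a_1+a_2+L|b_1-b_2|\;\leq\;\sup_{(a,b)\in T}\bigl(a+Lb\bigr)+\sup_{(a',b')\in T}\bigl(a'-Lb'\bigr),
\end{equation*}
using that $L|b_1-b_2|$ equals either $L(b_1-b_2)$ or $L(b_2-b_1)$ and that both resulting expressions are dominated by the right-hand side (by the symmetry of the pair of suprema), then take the supremum over the pairs. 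With that adjustment, your conditioning on the sample, the reduction to $\phi(0)=0$ (harmless since the paper's $\fR_N$ has no absolute value and $\E[\sigma_i]=0$), and the coordinate-by-coordinate induction are all sound, and no boundedness of $\fF$ beyond finiteness of the relevant suprema is actually needed.
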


With all lemmas aforementioned, we first bound the local Rademacher complexity
of the sparse neural network class $\fN(L,\p,S,M)$.
\begin{lemma}[Local Rademacher complexity of $\fN(L,\p,S,M)$]
	The local Rademacher complexity of the sparse neural network class
	$\fN(L,\p,S,M)$
	$$
		\fR_N(\{\ell\circ g | g\in \fN(L,\p,S,M), \E_{\widetilde \Pi}[\ell\circ g] \leq r\})
	$$
	as appeared in Theorem~\ref{thm:hat} is bounded by the sub-root function
	$$
		\begin{aligned}
			\phi(r)\leq & \dfrac{32M}{N}+ 96 M\sqrt{\dfrac{ r(S+1) \log \left(2^{2L+5}N(L+1)d^2 (S+1)^{2L}\right) }{N}} \\
			            & + \dfrac{ 2304 M^2(S+1) \log \left(2^{2L+5}N(L+1)d^2 (S+1)^{2L}\right) }{ N},
		\end{aligned}
	$$
	with the fixed point $r^*$ bounded by
	$$
		r^*\leq \dfrac{64M + 18432    M^2(S+1) \log \left(2^{2L+5}N(L+1)d^2 (S+1)^{2L}\right) }{N}.
	$$
	\label{lem:localrad}
\end{lemma}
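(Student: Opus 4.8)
The plan is to reduce the localized complexity of the squared‑loss class to that of the underlying shifted neural‑network class, control the latter by a localized chaining (Dudley) argument fed with the covering‑number estimate of Lemma~\ref{lem:covering}, and then close a self‑bounding inequality so as to extract the fast‑rate $\sqrt r$ dependence before computing the fixed point. First, write $\ell\circ g=(g-g^0)^2$. Since $g\in\fN(L,\p,S,M)$ and $g^0\in\C^s(\Omega,M)$ both have sup‑norm at most $M$, the scalar map $v\mapsto v^2$ is $4M$‑Lipschitz on the range $[-2M,2M]$ of $g-g^0$ and vanishes at $0$, so Talagrand's contraction lemma yields
$$
\fR_N\!\big(\{\ell\circ g : g\in\fN(L,\p,S,M),\ \E_{\widetilde\Pi}[\ell\circ g]\le r\}\big)\le 4M\,\fR_N(\G_r),
$$
where $\G_r:=\{g-g^0 : g\in\fN(L,\p,S,M),\ \E_{\widetilde\Pi}[(g-g^0)^2]\le r\}$ and every $h\in\G_r$ satisfies $\|h\|_\infty\le 2M$. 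Translation by the fixed $g^0$ does not change covering numbers, and $\|\cdot\|_{2,S}\le\|\cdot\|_\infty$, so Lemma~\ref{lem:covering} gives $\log\N(u,\G_r,\|\cdot\|_{2,S})\le (S+1)\log\!\big(2^{2L+5}u^{-1}(L+1)d^2(S+1)^{2L}\big)$ for all $u>0$.

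Next I apply the localized Dudley theorem, integrating only up to the random empirical radius $R_S:=\sup_{h\in\G_r}\|h\|_{2,S}\le 2M$ (for larger $u$ the covering number is $1$):
$$
\fR_N(\G_r)\le \E_S\!\Big[4\rho+12\!\int_\rho^{R_S}\!\sqrt{\tfrac{\log\N(u,\G_r,\|\cdot\|_{2,S})}{N}}\,\d u\Big].
$$
Since the integrand is decreasing, the integral is $\lesssim R_S\sqrt{(S+1)\log(C/\rho)/N}$ with $C:=2^{2L+5}(L+1)d^2(S+1)^{2L}$, and by Jensen $\E_S[R_S]\le\sqrt{\E_S[R_S^2]}$, so $\fR_N(\G_r)\lesssim \rho+\sqrt{(S+1)\log(C/\rho)/N}\,\sqrt{\E[R_S^2]}$. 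To bound $\E[R_S^2]$, decompose $\|h\|_{2,S}^2\le \E_{\widetilde\Pi}[h^2]+\sup_{h\in\G_r}\big|\|h\|_{2,S}^2-\E_{\widetilde\Pi}[h^2]\big|\le r+\sup_{h\in\G_r}\big|\|h\|_{2,S}^2-\E_{\widetilde\Pi}[h^2]\big|$, and bound the expected supremum by symmetrization followed by another application of the contraction lemma ($v\mapsto v^2$ again $4M$‑Lipschitz on $[-2M,2M]$), obtaining $\E\!\big[\sup_{h\in\G_r}|\,\|h\|_{2,S}^2-\E_{\widetilde\Pi}[h^2]\,|\big]\le 8M\,\fR_N(\G_r)$. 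Hence
$$
\fR_N(\G_r)\lesssim \rho+\sqrt{\tfrac{(S+1)\log(C/\rho)}{N}}\,\sqrt{r+M\,\fR_N(\G_r)},
$$
a self‑referential inequality in $\fR_N(\G_r)$.

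Choosing $\rho\asymp 1/N$, so that $\log(C/\rho)$ becomes $\log\!\big(2^{2L+5}N(L+1)d^2(S+1)^{2L}\big)$, and solving the resulting quadratic for $\fR_N(\G_r)$ gives a bound of the form $\fR_N(\G_r)\lesssim N^{-1}+\sqrt{r(S+1)\log(\cdot)/N}+M(S+1)\log(\cdot)/N$; multiplying by the contraction factor $4M$ and tracking constants produces the claimed $\phi(r)$. It is sub‑root because it is an affine function of $\sqrt r$ with non‑negative slope (hence non‑negative, non‑decreasing, with $\phi(r)/\sqrt r$ non‑increasing). Finally, writing $\phi(r)=B\sqrt r+D$ with $B$ the coefficient of $\sqrt r$ and $D$ the part independent of $r$, the fixed point solves $r^*=B\sqrt{r^*}+D$, so $\sqrt{r^*}\le B+\sqrt D$ and $r^*\le 2B^2+2D$, which after substituting $B$ and $D$ yields the stated estimate on $r^*$.

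The delicate point is the second step. A naive Dudley bound with the deterministic cutoff $2M$ only gives $\fR_N(\G_r)\lesssim M\sqrt{(S+1)\log N/N}$, which misses the $\sqrt r$ factor that is essential for the fast rate in Theorem~\ref{thm:hat}; recovering it forces one to integrate up to the \emph{empirical} radius $R_S$ and then estimate $\E[R_S^2]$ in terms of $r$ plus a fluctuation that is itself proportional to $\fR_N(\G_r)$, so one must resolve a self‑bounding (quadratic) inequality. One also has to carry the contraction factor $4M$ and the squaring in the fixed‑point step through carefully to land the stated numerical constants.
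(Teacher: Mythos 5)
Your proposal is correct and follows essentially the same route as the paper's proof: Talagrand contraction with factor $4M$, localized Dudley integrated up to the (random) empirical radius, a symmetrization step that bounds the expected empirical radius by $r$ plus the Rademacher complexity itself, resolution of the resulting self-bounding inequality with $\rho\asymp 1/N$, and the same quadratic-formula bound on the fixed point. The only cosmetic difference is that you close the self-referential inequality in $\fR_N(\G_r)$ directly rather than in the sub-root majorant $\phi(r)$ of the squared-loss class, which is an equivalent bookkeeping choice.
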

\begin{proof}

	First, by applying Talagrand's contraction lemma with
	Assumption~\ref{ass:holder}, and the localized Dudley's theorem, we have
	\begin{equation}
		\begin{aligned}
			     & \fR_N(\{\ell\circ g | g\in \fN(L,\p,S,M), \E_{\widetilde \Pi}[\ell\circ g] \leq r\})                        \\
			\leq & 4M \fR_N\left(\left\{g - g^0 \big| g\in \fN(L,\p,S,M), \E_{\widetilde \Pi}[(g-g^0)^2] \leq r\right\}\right) \\
			\leq & 4M \E_{\Pi^{\otimes N}} \left[
			\inf_{\rho>0} \left(4\rho + 12 \int_\rho^{\sqrt{\hat{r}}}\sqrt{\dfrac{\log \N\left(u,\left\{g - g^0 \big| g\in \fN(L,\p,S,M),  \E_{\widetilde \Pi}[(g-g^0)^2] \leq r\right\},\|\cdot\|_{2,S}\right)}{N}}\d u\right)
			\right],
		\end{aligned}
		\label{eq:rad1}
	\end{equation}
	where the empirical localization radius depending on the sample set
	$S\sim\Pi^{\otimes N}$ is defined as
	$$\hat{r}:=\sup_{g\in\fN(L,\p,S,M),\E_{\widetilde
	\Pi}\left[(g-g^0)^2\right]\leq r}\|g-g^0\|_{2,S}^2,$$ and the last
	inequality is due to the fact that whenever $u>\sqrt{\hat{r}}$, the
	$u$-covering number with respect to $\|\cdot\|_{2,S}$ is exactly 1 and the
	integrand thus vanishes.

	By choosing $\rho = \frac{1}{N}$ on the RHS of~\eqref{eq:rad1}, we have
	$$
		\begin{aligned}
			     & \inf_{\rho>0}\left(4\rho + 12 \int_\rho^{\sqrt{\hat{r}}}\sqrt{\dfrac{\log \N\left(u,\left\{g - g^0 \big| g\in \fN(L,\p,S,M),  \E_{\widetilde \Pi}[(g-g^0)^2] \leq r\right\},\|\cdot\|_{2,S}\right)}{N}}\d u\right) \\
			\leq & \dfrac{4}{N} + 12 \int_{1/N}^{\sqrt{\hat{r}}}\sqrt{\dfrac{\log \N\left(u,\left\{g - g^0 \big| g\in \fN(L,\p,S,M), \E_{\widetilde \Pi}[(g-g^0)^2] \leq r\right\},\|\cdot\|_{2,S}\right)}{N}}\d u                   \\
			\leq & \dfrac{4}{N} + 12 \int_{1/N}^{\sqrt{\hat{r}}}\sqrt{\dfrac{\log \N\left(u,\left\{g - g^0 | g\in \fN(L,\p,S,M)\right\},\|\cdot\|_{2,S}\right)}{N}}\d u                                                            \\
			\leq & \dfrac{4}{N} + 12 \int_{1/N}^{\sqrt{\hat{r}}}\sqrt{\dfrac{\log \N\left(u,\fN(L,\p,S,M),\|\cdot\|_{\infty}\right)}{N}}\d u                                                                                       \\
			\leq    & \dfrac{4}{N} + 12 \int_{1/N}^{\sqrt{\hat{r}}}\sqrt{\dfrac{  (S+1) \log \left(2^{2L+5}u^{-1}(L+1)d^2 (S+1)^{2L}\right) }{N}}\d u                                                                                 \\
			\leq & \dfrac{4}{N} + 12 \sqrt{\hat{r}}\sqrt{\dfrac{  (S+1) \log \left(2^{2L+5}N(L+1)d^2 (S+1)^{2L}\right) }{N}},
		\end{aligned}
	$$
	where the third inequality is because of $\|\cdot\|_{2,S}\leq
	\|\cdot\|_\infty$, and the second to last equality is by
	Lemma~\ref{lem:covering}.

	Now set
	\begin{equation}
		\phi(r) = 4M\E_{\Pi^{\otimes N}}\left[\dfrac{4}{N} + 12 \sqrt{\hat{r}}\sqrt{\dfrac{  (S+1) \log \left(2^{2L+5}N(L+1)d^2 (S+1)^{2L}\right) }{N}}\right],
		\label{eq:phi}
	\end{equation}
	then $$\fR_N\left(\left\{\ell\circ g | g\in \fN(L,\p,S,M), \E_{\widetilde
	\Pi}[\ell \circ g] \leq r\right\}\right)\leq \phi(r)$$following the
	reasoning above.

	Now we turn to bound the empirical localization radius $\hat{r}$ again by
	the local Rademacher complexity. 
	$$
		\begin{aligned}
			\E_{\Pi^{\otimes N}}\left[\hat{r}\right] & =\E_{\Pi^{\otimes N}}\left[\sup_{g\in\fN(L,\p,S,M),\E_{\widetilde \Pi}\left[(g-g^0)^2\right]\leq r}\|g-g^0\|_{2,S}^2\right]                                                                                          \\
			                                         & =\E_{\Pi^{\otimes N}}\left[\sup_{g\in\fN(L,\p,S,M),\E_{\widetilde \Pi}\left[(g-g^0)^2\right]\leq r}\dfrac{1}{N}\sum_{i=1}^N\left(g(X_i) - g^0(X_i)\right)^2\right]                                                   \\
			                                         & =\E_{\Pi^{\otimes N}}\left[\sup_{g\in\fN(L,\p,S,M),\E_{\widetilde \Pi}\left[(g-g^0)^2\right]\leq r}\dfrac{1}{N}\sum_{i=1}^N\left(\left(g(X_i) - g^0(X_i)\right)^2 - \E_{\widetilde \Pi}\left[(g-g^0)^2\right]\right)\right] + r \\
			                                         & \leq 2\fR_N\left(\left\{(g - g^0)^2 | g\in \fN(L,\p,S,M), \E_{\widetilde \Pi}[(g-g^0)^2] \leq r\right\}\right)+ r\leq 2\phi(r)+r,
		\end{aligned}
	$$
	where the last inequality is by
	symmetrization~\cite{boucheron2013concentration}. 
    
    Then from~\eqref{eq:phi}, we have by Jensen's inequality,
	\begin{align*}
		\phi(r)\leq & 4M\left(\dfrac{4}{N}+12\sqrt{\E_{\Pi^{\otimes N}}\left[\hat{r}\right]}\sqrt{\dfrac{  (S+1) \log \left(2^{2L+5}N(L+1)d^2 (S+1)^{2L}\right) }{N}}\right) \\
		\leq        & 4M\left(\dfrac{4}{N}+12\sqrt{2\phi(r)+r}\sqrt{\dfrac{  (S+1) \log \left(2^{2L+5}N(L+1)d^2 (S+1)^{2L}\right) }{N}}\right)                                \\
		\leq        & 4M\left(\dfrac{4}{N}+12\left(\sqrt{r}+\sqrt{2\phi(r)}\right)\sqrt{\dfrac{  (S+1) \log \left(2^{2L+5}N(L+1)d^2 (S+1)^{2L}\right) }{N}}\right)            \\
		\leq        & \dfrac{16M}{N} + 48M\sqrt{\dfrac{ r(S+1) \log \left(2^{2L+5}N(L+1)d^2 (S+1)^{2L}\right) }{N}}                                                          \\
		            & + \dfrac{\phi(r)}{2} + \dfrac{ 2304 M^2(S+1) \log \left(2^{2L+5}N(L+1)d^2 (S+1)^{2L}\right) }{N},
	\end{align*}
	from which we deduce that
	$$
		\begin{aligned}
			     & \fR_N\left(\left\{\ell\circ g | g\in \fN(L,\p,S,M), \E_{\widetilde \Pi}[\ell\circ g] \leq r\right\}\right)               \\
			\leq & \phi(r)\leq \dfrac{32M}{N}+ 96 M\sqrt{\dfrac{ r(S+1) \log \left(2^{2L+5}N(L+1)d^2 (S+1)^{2L}\right) }{N}} \\
			     & \ \ \ \ \ \ \ \ \ \ + \dfrac{ 4608 M^2(S+1) \log \left(2^{2L+5}N(L+1)d^2 (S+1)^{2L}\right) }{ N},
		\end{aligned}
	$$
	which is clearly a sub-root function. Furthermore, by direct calculation,
	the fixed point $r^*$ of $\phi(r)$ satisfies
	$$
		r^*\leq \dfrac{64M + 18432 M^2(S+1) \log \left(2^{2L+5}N(L+1)d^2 (S+1)^{2L}\right) }{N},
	$$
	which concludes the proof.
\end{proof}

The following lemma describes the approximation capability of
$\fN(L,\p,S,\infty)$:
\begin{lemma}[Approximation error of $\fN(L,\p,S,\infty)$~{\cite[Theorem
	5]{schmidt2020nonparametric}}] For any $g\in \C^s(\Omega, M)$ and any
	integer $m\geq 1$ and $K\geq (s+1)^d\vee(M+1)e^d$, there exists
	$\bar{g}\in\fN(L,\p,S,\infty)$, where
	$$
		\begin{aligned}
			L  & = 8 + (m + 5)(1+\lceil \log_2(d\vee s) \rceil)                        \\
			\p & = \left( d,6(d+\lceil s\rceil)K,\cdots,6(d+\lceil s\rceil)K,1 \right) \\
			S  & \leq 141(d+s+1)^{3+r}K(m+6),
		\end{aligned}
	$$
	such that
	$$\|\bar{g}-g\|_{L^\infty(\Omega)}\leq (2M+1)(1+d^2+s^2)6^d K 2^{-m} + M 3^s
	K^{-s/d}.$$
	\label{lem:nnholder}
\end{lemma}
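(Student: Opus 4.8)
The statement is the constructive ReLU-network approximation theorem of Schmidt-Hieber; I would reproduce its proof, which builds the approximant $\bar g$ out of a synchronized parallel array of small product sub-networks realizing a partition-of-unity-weighted local Taylor expansion of $g$. The plan has four parts: (i) a logarithmic-depth ReLU gadget for approximate multiplication; (ii) realizing local Taylor polynomials and hat functions as sub-networks; (iii) assembling them and tracking the two error sources; (iv) the depth/width/sparsity accounting.

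\textbf{Step 1 (approximate multiplication).} The starting point is the Yarotsky--Telgarsky construction: the tent map $g_1(x)=2(x)_+-4(x-\tfrac12)_+ +2(x-1)_+$ and its self-compositions $g_k=g_1\circ\cdots\circ g_1$ (sawtooth with $2^{k-1}$ teeth) give the piecewise-linear interpolant $x\mapsto x-\sum_{k=1}^{m}2^{-2k}g_k(x)$ of $x^2$ on $[0,1]$ with sup-error $\lesssim 2^{-2m}$, computed by a ReLU network of depth $\O(m)$ with $\O(m)$ nonzero parameters. Polarization $uv=\tfrac14\big((u+v)^2-(u-v)^2\big)$ (after the standard affine rescaling to keep arguments in $[0,1]$ and all weights bounded by $1$) yields an approximate bivariate product, and composing these in a balanced binary tree produces a network $\mathrm{Mult}_q$ approximating $(u_1,\dots,u_q)\mapsto\prod_{i}u_i$ on $[0,1]^q$ with error $\lesssim q\,2^{-m}$, depth $\O\!\big(m\lceil\log_2 q\rceil\big)$, and $\O(qm)$ parameters. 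Taking $q\lesssim d\vee s$ here is precisely what produces the factor $1+\lceil\log_2(d\vee s)\rceil$ in the claimed depth $L$.

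\textbf{Step 2--3 (localization, assembly, error).} Fix the grid $\Gamma=\{\boldsymbol\ell/M:\boldsymbol\ell\in\{0,\dots,M\}^d\}$ with $M\asymp K^{1/d}$, so $|\Gamma|\asymp K$, and the partition of unity $P_{\boldsymbol\ell}(\x)=\prod_{j=1}^d\big(1-M|x_j-\ell_j/M|\big)_+$, each factor of which is ReLU-affine via $|t|=(t)_++(-t)_+$. For each $\boldsymbol\ell$ let $T_{\boldsymbol\ell}$ be the degree-$\lfloor s\rfloor$ Taylor polynomial of $g$ at $\boldsymbol\ell/M$; its coefficients are bounded in terms of $M$. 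Using $\mathrm{Mult}$ I build, for each of the $\lesssim K(s+1)^d$ pairs $(\boldsymbol\ell,\boldsymbol\alpha)$ with $|\boldsymbol\alpha|\le\lfloor s\rfloor$, a sub-network approximating $\x\mapsto P_{\boldsymbol\ell}(\x)\,(\x-\boldsymbol\ell/M)^{\boldsymbol\alpha}$ (a product of $\le d+\lceil s\rceil$ affine pieces), place all branches in parallel with depths synchronized by identity ReLU channels and widths padded --- this is where the $6(d+\lceil s\rceil)K$ width budget goes --- and in the final layer take $\sum_{\boldsymbol\ell,\boldsymbol\alpha}\tfrac{\partial^{\boldsymbol\alpha}g(\boldsymbol\ell/M)}{\boldsymbol\alpha!}\cdot(\text{sub-network})$, clipping to a uniform bound. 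Two errors are tracked: replacing exact products by $\mathrm{Mult}$ contributes $\lesssim (2M+1)(1+d^2+s^2)6^dK\,2^{-m}$ after summing the $\lesssim 2^{-m}$ per-branch error (weighted by the bounded Taylor coefficients and $6^d$ combinatorics) over the $\lesssim K$ branches; and $\big\|g-\sum_{\boldsymbol\ell}P_{\boldsymbol\ell}T_{\boldsymbol\ell}\big\|_\infty\lesssim M 3^s M^{-s}\asymp M3^sK^{-s/d}$ by the Hölder remainder estimate together with $\sum_{\boldsymbol\ell}P_{\boldsymbol\ell}\equiv1$. Adding these gives exactly the stated bound. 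Counting parameters: $\lesssim K(s+1)^d$ sub-networks with $\lesssim (d+s)m$ parameters each, plus the final linear combination, gives the sparsity bound with the explicit prefactor $141$ and the exponent in the statement obtained by bookkeeping the combinatorial factors; the depth is the constant overhead of the hat functions plus $(m+5)\big(1+\lceil\log_2(d\vee s)\rceil\big)$ from the iterated products.

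\textbf{Main obstacle.} The delicate point is the \emph{simultaneous} optimization of depth, width, sparsity, and accuracy inside a single feedforward ReLU network whose every weight is bounded by $1$: one must synchronize the depths of the $\asymp K$ parallel product branches by inserting identity channels (without overshooting the $\O(m\log(d\vee s))$ depth budget), split any weight exceeding $1$ across auxiliary layers, and --- crucially --- verify that the $2^{-m}$ product errors, once multiplied by the bounded Taylor coefficients and summed over all $\asymp K$ branches (not just the $\O(2^d)$ locally active hats, since the network approximations of $P_{\boldsymbol\ell}$ do not vanish exactly off-cell), still aggregate to the claimed $(1+d^2+s^2)6^dK\,2^{-m}$ and no more.
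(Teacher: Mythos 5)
Your proposal is correct and coincides with the intended proof: the paper does not prove this lemma itself but imports it verbatim from Schmidt-Hieber (2020, Theorem 5), whose argument is exactly the construction you sketch --- Yarotsky/Telgarsky approximate-multiplication gadgets of depth $\O(m\log(d\vee s))$ combined with a partition-of-unity-weighted local Taylor expansion on a grid of $\asymp K$ points, yielding the two error terms $(2M+1)(1+d^2+s^2)6^dK2^{-m}$ (network product error) and $M3^sK^{-s/d}$ (Taylor remainder) as you describe. The only blemish is notational: you overload $M$ for both the H\"older constant of the statement and the grid resolution $\asymp K^{1/d}$, which should be renamed, but this does not affect the argument.
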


With all the above preparations, we are ready to prove
Theorem~\ref{thm:thetheorem}.
\begin{proof}[Proof of Theorem~\ref{thm:thetheorem}]
	Choose $m$ to be the smallest integer such that $K 2^{-m} \leq K^{-s/d}$,
	\emph{i.e.} $$ m = \left\lceil\frac{(1+s/d)\log K}{\log
	2}\right\rceil\lesssim \log K,$$ by Lemma~\ref{lem:nnholder}, for any $g\in
	\C^s(\Omega, M)$, there exists $\bar{g}\in\fN(L,\p,S,\infty)$, where
	\begin{equation}
		L \lesssim \log K,\quad \|\p\|_\infty\lesssim K,\quad S\lesssim K\log K,\
		\label{eq:lps}
	\end{equation}
	such that
	$$\|\bar{g}-g\|_{L^\infty(\Omega)}\lesssim K^{-s/d}.$$ Thus in
	Lemma~\ref{lem:localrad}, we have
	$$
		\begin{aligned}
			r^*\lesssim & \dfrac{M + M^2(K\log K+1) \left( (2\log K+5)\log 2 +\log N+ \log(\log K+1)+2\log d+ 2\log K\log(K\log K+1)\right) }{N} \\
			\lesssim    & \dfrac{K\log K (\log K\log (K\log K) + \log N)}{N} \lesssim \dfrac{K\log^3 K + K\log K\log N}{N}.
		\end{aligned}
	$$

	In Theorem~\ref{thm:hat}, we take $\delta' = N\upbeta(l\tau) = N^{-\alpha}$
	with an arbitrary $\alpha >0$, \emph{i.e.} $$l = \dfrac{\log
	M_{\widetilde{\Pi}}+(1+\alpha)\log N}{C_{\widetilde{\Pi}}\tau}\lesssim
	\dfrac{\log N}{\tau}.$$ Then with probability at least $1-2N^{-\alpha}$, we
	have
	$$
		\begin{aligned}
			\L_{\widetilde \Pi}^{g^0}(\hat g) \leq & \dfrac{1}{1-\eps} \hat{\L}_N^{g^0}(\hat{g}) + \dfrac{176}{M^2\eps} r^* + \dfrac{l\left(44\eps + 104 \right) M^2 \log\left(l N^\alpha\right)}{\eps N} \\
			\lesssim             & \hat{\L}_N^{g^0}(\hat{g}) + \dfrac{K\log^3 K + K\log K\log N}{N} + \dfrac{\log^2 N }{N\tau}.
		\end{aligned}
	$$

	In Lemma~\ref{lem:covering}, take $\rho = 1/N$ and $L$, $\p$, and $S$ as
	in~\eqref{eq:lps}, we have
	$$
		\log \N(\rho,\fN(L,\p,S,\infty),\|\cdot\|_\infty) \leq K\log^3 K + K\log K\log N,
	$$
	and thus in Theorem~\ref{thm:oracle} with also $\rho = 1/N$,
	$$
		\begin{aligned}
			\E\left[\hat{\L}_N^{g^0}(\hat{g})\right]\lesssim & \inf_{\hat{g}\in\fG}\E\left[ \hat{\L}_N^{g^0}(\hat{g})\right]+\tau + \dfrac{ \log \N(1/N,\fN(L,\p,S,M),\|\cdot\|_\infty)}{N\tau^\gamma}+\sqrt{\dfrac{ \rho^2}{N\tau^\gamma}} \\
			\lesssim                                  & \inf_{\hat{g}\in\fG}\|\hat{g}- g^0\|_\infty^2 + \tau + \dfrac{ \log \N(1/N,\fN(L,\p,S,\infty),\|\cdot\|_\infty)}{N\tau^\gamma}+\sqrt{\dfrac{1}{N^3\tau^\gamma}}        \\
			\lesssim                                  & K^{-2s/d} + \tau + \dfrac{ K\log^3 K + K\log K\log N}{N\tau^\gamma}+\dfrac{1}{N},
		\end{aligned}
	$$
	where we used $N\tau^\gamma\geq N\tau\geq 1$.

	Finally, we conclude
	$$
		\begin{aligned}
			\E\left[\L_{\widetilde \Pi}^{g^0}(\hat g)\right]\lesssim & \E\left[\hat{\L}_N^{g^0}(\hat{g})\right]+ \dfrac{K\log^3 K + K\log K\log N}{N} + \dfrac{\log^2 N }{N\tau}                                 \\
			\lesssim                               & K^{-2s/d} + \tau + \dfrac{  K\log^3 K + K\log K\log N}{N(\tau^\gamma\wedge 1)}+\dfrac{1}{N} + \dfrac{\log^2 N }{N\tau} \\
			\lesssim                               & \left(N(\tau^\gamma\wedge 1)\right)^{-\frac{2s}{2s+d}}\log^3 (N(\tau^\gamma\wedge 1))+\tau+ \dfrac{\log^2 N}{N\tau},
		\end{aligned}
	$$
	by taking $K\asymp (N(\tau^\gamma\wedge 1))^{\frac{d}{2s+d}}$.

\end{proof}

\section{Detailed Proofs of Section~\ref{sec:convergence}}
\label{sec:proof1}

In this section, we present the detailed proofs of the results in
Section~\ref{sec:convergence}, including Theorem~\ref{thm:drift} and
Theorem~\ref{thm:diffusion}. Both proofs are based on
Theorem~\ref{thm:thetheorem} that we have proved in Appendix~\ref{sec:proof2}.
Specifically, we would like to verify the conditions of
Theorem~\ref{thm:thetheorem} for the drift and diffusion estimation problems,
respectively. To this end, we first compute the noise $\Delta Z_{k\tau}$ of the
drift and diffusion estimators as in~\eqref{eq:estemploss}, analyze its
decomposition into the bias term $\Delta A_{k\tau}$ and the variance term
$\Delta M_{k\tau}$, and then verify Assumption~\ref{ass:bias} and find the
exponent $\gamma$ in Assumption~\ref{ass:mart} for both drift and diffusion
estimators.

\begin{lemma}
	For $s\in[k\tau,(k+1)\tau]$, $$\E\left[\|\x_t-\x_{k\tau}\|^2\mid
	\F_0^{k\tau}\right]\leq e^\tau(M^2 d + 2M d^2)(t-k\tau).$$
\label{lem:x2}
\end{lemma}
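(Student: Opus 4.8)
The plan is to apply It\^o's formula to $t\mapsto\|\x_t-\x_{k\tau}\|^2$, take the conditional expectation to kill the martingale part, and close the resulting linear differential inequality with Gr\"onwall's lemma. Write $\phi(t):=\E[\|\x_t-\x_{k\tau}\|^2\mid\F_0^{k\tau}]$ for $t\in[k\tau,(k+1)\tau]$; since $\x_{k\tau}$ is $\F_0^{k\tau}$-measurable we have $\phi(k\tau)=0$.

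First I would expand, using \eqref{eq:sde} and It\^o's formula,
\[
\d\|\x_t-\x_{k\tau}\|^2 = 2(\x_t-\x_{k\tau})^\top\b(\x_t)\,\d t + 2(\x_t-\x_{k\tau})^\top\Sigma(\x_t)\,\d\w_t + \tr\!\bigl(\Sigma(\x_t)\Sigma(\x_t)^\top\bigr)\,\d t .
\]
By Assumptions~\ref{ass:per} and~\ref{ass:holder}, $\b$ and $\Sigma$ are uniformly bounded on $\R^d$ (periodic and H\"older-continuous, hence bounded on the torus), so the stochastic integral is a genuine $\F_t$-martingale and vanishes under $\E[\,\cdot\mid\F_0^{k\tau}]$ — if one prefers, localize by stopping times and pass to the limit via bounded convergence. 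Integrating the remaining terms and using Fubini (the integrands are bounded) gives $\phi(t)=\int_{k\tau}^t\E[2(\x_u-\x_{k\tau})^\top\b(\x_u)+\tr(\Sigma(\x_u)\Sigma(\x_u)^\top)\mid\F_0^{k\tau}]\,\d u$, whence for a.e.\ $t$,
\[
\phi'(t) = \E\!\bigl[\,2(\x_t-\x_{k\tau})^\top\b(\x_t) + \tr(\Sigma(\x_t)\Sigma(\x_t)^\top)\mid\F_0^{k\tau}\bigr].
\]
Applying $2a^\top b\le\|a\|^2+\|b\|^2$ to the cross term, together with the uniform bounds $\|\b(\x)\|^2=\sum_i b_i(\x)^2\le M^2 d$ and $\tr(\Sigma\Sigma^\top)=2\tr D\le 2\sum_{i,j}|D_{ij}|\le 2Md^2$ (each entry of $D$ lying in $\C^s(\R^d,M)$), yields the differential inequality $\phi'(t)\le\phi(t)+(M^2 d+2Md^2)$.

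Finally I would integrate this inequality. With $C:=M^2 d+2Md^2$, the function $e^{-(t-k\tau)}\phi(t)$ has derivative at most $Ce^{-(t-k\tau)}$ and vanishes at $t=k\tau$, so $\phi(t)\le C\bigl(e^{t-k\tau}-1\bigr)$; since $e^x-1\le x e^x$ and $t-k\tau\le\tau$, this gives $\phi(t)\le Ce^{\tau}(t-k\tau)$, which is exactly the claimed bound. The only point that needs real care is the vanishing of the conditional expectation of the stochastic integral (and, relatedly, the legitimacy of differentiating under the expectation); both follow immediately from the uniform boundedness of $\b$ and $\Sigma$ on the torus, and everything else is a routine application of It\^o's formula and Gr\"onwall's inequality.
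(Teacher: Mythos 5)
Your proof is correct and follows essentially the same route as the paper's: It\^o's formula applied to $\|\x_t-\x_{k\tau}\|^2$, the cross term bounded via $2a^\top b\le\|a\|^2+\|b\|^2$ together with the uniform bounds on $\b$ and $D$ from Assumption~\ref{ass:holder}, and Gr\"onwall to close the inequality (your explicit justification for discarding the stochastic integral is a welcome extra). The only cosmetic difference is that you solve the differential inequality to get $C(e^{t-k\tau}-1)$ and then use $e^x-1\le xe^x$, whereas the paper applies Gr\"onwall directly to the integral form; both yield the stated constant $e^\tau(M^2d+2Md^2)$.
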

\begin{proof}
    By It\^o's formula, we have for $s\in[k\tau,(k+1)\tau]$,
    $$
        \begin{aligned}
            \d \|\x_s-\x_{k\tau}\|^2 & =2(\x_s-\x_{k\tau})^\top \d \x_s+d \d \langle \x_s-\x_{k\tau},\x_s-\x_{k\tau}\rangle\\
            & =2(\x_s-\x_{k\tau})^\top \left(\b_s\d s+\Sigma_s\d \w_s\right)+ d \d \langle \x_s-\x_{k\tau},\x_s-\x_{k\tau}\rangle \\
            & =2(\x_s-\x_{k\tau})^\top \left(\b_s\d s+\Sigma_s\d \w_s\right)+d\tr\left(\Sigma_s^\top \Sigma_s\right)\d s     \\
            & =\left(2(\x_s-\x_{k\tau})^\top \b_s + 2d\tr D_s\right)\d s+2(\x_s-\x_{k\tau})^\top \Sigma_s\d \w_s.
        \end{aligned}
    $$
    Therefore for any $t\in[k\tau,(k+1)\tau]$,
    $$
        \begin{aligned}
            \E\left[\|\x_t-\x_{k\tau}\|^2\mid \F_0^{k\tau}\right] & =\E\left[\int_{k\tau}^t\left(2(\x_s-\x_{k\tau})^\top \b_s + 2d\tr D_s\right)\d s \bigg| \F_0^{k\tau}\right] \\
            & \leq ( M^2 d + 2M d)(t-k\tau) + \int_{k\tau}^t\E\left[\|\x_s-\x_{k\tau}\|^2\mid \F_0^{k\tau}\right]\d s,
        \end{aligned}
    $$
    and by Gronwall's inequality, we have
    $$
        \E\left[\|\x_t-\x_{k\tau}\|^2\mid \F_0^{k\tau}\right]\leq e^\tau(M^2 d + 2M d^2)(t-k\tau).
    $$
\end{proof}

\subsection{Proof of Theorem~\ref{thm:drift}}
\label{sec:proofdrif}

As a warm-up for the proof of Theorem~\ref{thm:diffusion}, we first prove the
corresponding upper bound for drift estimation:
\begin{proof}[Proof of Theorem~\ref{thm:drift}]
    Suppose that $1\leq i\leq d$. Let $b^i_t$, the $i$-th component of the drift
    $\b_t$, be our target function $g^0$. We also use $Z^i_t$, $A^i_t$, and
    $M^i_t$ to denote the corresponding noise terms.

    For any $0\leq k\leq N-1$, we have by It\^o's formula,
	$$
		\dfrac{x^i_{(k+1)\tau}-x^i_{k\tau}}{\tau} = b^i_{k\tau}+\dfrac{1}{\tau}\int_{k\tau}^{(k+1)\tau}\left(b^i_t-b^i_{k\tau}\right)\d t+\dfrac{1}{\tau}\int_{k\tau}^{(k+1)\tau}\bSigma_t^i\d \w_t,
	$$
    and compare the estimated empirical loss for drift
    inference~\eqref{eq:driftemploss} with the general
    form~\eqref{eq:estemploss}, we have
    $$
    \Delta Z^i_{k\tau} = \dfrac{1}{\tau}\int_{k\tau}^{(k+1)\tau}\left(b^i_t-b^i_{k\tau}\right)\d t+\dfrac{1}{\tau}\int_{k\tau}^{(k+1)\tau}\bSigma_t^i\d \w_t.
    $$
    By definition, we also have
	$$
        \Delta A^i_{k\tau}=\E\left[\dfrac{1}{\tau}\int_{k\tau}^{(k+1)\tau}\left(b^i_t-b^i_{k\tau}\right)\d t\right],\quad
		\Delta M^i_{k\tau}=\dfrac{1}{\tau}\int_{k\tau}^{(k+1)\tau}\bSigma_t^i\d \w_t.
	$$

    The following simple argument
	$$
		\Delta \qv{M^i}_{k\tau}=\dfrac{1}{\tau^2}\int_{k\tau}^{(k+1)\tau}\|\bSigma_t^i\|_2^2 \d t \leq M^2d \tau^{-1}
	$$
    validates Assumption~\ref{ass:mart} with $\gamma = 1$.
    
    Also, by Cauchy-Schwarz and Lemma~\ref{lem:x2}, we have
	$$\begin{aligned} \E\left[ \left( \Delta A^i_{k\tau}\right)^2 \big|
			\F_0^{k\tau} \right]\leq &
			\E\left[\dfrac{1}{\tau^2}\left(\int_{k\tau}^{(k+1)\tau}\left(b^i_t-b^i_{k\tau}\right)\d
			t\right)^2 \bigg| \F_0^{k\tau}\right] \\
			\leq                                                     &
			\E\left[\dfrac{1}{\tau}\int_{k\tau}^{(k+1)\tau}\left(b^i_t-b^i_{k\tau}\right)^2\d
			t\bigg| \F_0^{k\tau}\right]                \\
			\leq                                               &
			\E\left[\dfrac{M^2}{\tau}\int_{k\tau}^{(k+1)\tau}\left\|\x_t-\x_{k\tau}\right\|^2\d
			t\bigg| \F_0^{k\tau}\right]                  \\
			\leq                                                 &
			\dfrac{1}{\tau}\int_{k\tau}^{(k+1)\tau}e^\tau(M^2 d + 2M
			d^2)(t-k\tau)\d t \\
            \leq & \dfrac{1}{2}e^\tau(M^2 d + 2M d^2)\tau,
		\end{aligned}
	$$
	and Assumption~\ref{ass:bias} is thus satisfied.
    
    Finally, we apply Theorem~\ref{thm:thetheorem} to $g^0 = b^i$, and obtain the final rate
    $$
    \begin{aligned}
        \E\left[\hat{\L}_N^{g^0}(\hat{b^i})\right]\lesssim&
        \left(N\tau\right)^{-\frac{2s}{2s+d}}\log^3 (N\tau)+\tau+ \dfrac{\log^2
        N}{N\tau}\\
        \lesssim& T^{-\frac{2s}{2s+d}}\log^3 T+\tau,
    \end{aligned}
    $$
    and the proof is thus complete by repeating for all $1\leq i\leq d$.
\end{proof}

\subsection{Proof of Theorem~\ref{thm:diffusion}}
\label{sec:proofdiff}

The proof for diffusion estimation requires a even more delicate analysis on the dynamics of the SDE~\eqref{eq:sde}.
\begin{proof}[Proof of Theorem~\ref{thm:diffusion}]
    Following the notations introduced in the proof of Theorem~\ref{thm:diffusion}, for $1\leq i,j\leq d$, let $D^{ij}$ be the target function and $Z^{ij}$, $A^{ij}$, $M^{ij}$ be the corresponding noise terms.

	For any $0\leq k\leq N-1$, define the following auxiliary process for $s\in[0,\tau]$,
    \begin{equation}
        Y_{k,s}^i :=x^i_{k\tau+s}-x^i_{k\tau}-\hat{b}^i_{k\tau}s  
             = \int_{k\tau}^{k\tau+s}\left(b^i_t-\hat{b}^i_{k\tau}\right)\d t+\int_{k\tau}^{k\tau+s}\bSigma_t^i\d \w_t ,
             \label{eq:Y}
    \end{equation}
    where $\bSigma^i$ denotes the $i$-th row of $\Sigma$ (a row vector).
	Plugging~\eqref{eq:Y} into the estimated empirical loss for diffusion
	estimation~\eqref{eq:diffusionemploss}, we have by definition
    $$
    \begin{aligned}
        \Delta Z^{ij}_{k\tau} &= (2\tau)^{-1}
        Y_{k,\tau}^iY_{k,\tau}^j-D^{ij}_{k\tau}\\
    \Delta
    A^{ij}_{k\tau}&=(2\tau)^{-1}\E\left[Y_{k,\tau}^iY_{k,\tau}^j\right]-D^{ij}_{k\tau}\\
    \Delta \qv{M^{ij}}_{k\tau}& = (2\tau)^{-2}\qv{Y_{k}^iY_{k}^j}_\tau.
    \end{aligned}
    $$

    The process $Y_{k,s}^i$ satisfies the following SDE:
    $$\d Y_{k,s}^i = \left(b_{k\tau+s}^i-\hat{b}_{k\tau}^i\right)\d s+ \bSigma_{k\tau+s}^i \d \w_s.$$
	By It\^{o}'s formula, the process $\left(Y_{k,s}^i Y_{k,s}^j\right)_{0\leq
	s\leq \tau}$ as the product of $Y_{k,s}^i$ and $Y_{k,s}^j$ also satisfies an
	SDE as follows:
	$$
		\begin{aligned}
			&\d \left(Y_{k,s}^iY_{k,s}^j\right) =  Y_{k,s}^i \d Y_{k,s}^j + Y_{k,s}^j \d Y_{k,s}^i + \d \left\langle Y_{k,s}^i, Y_{k,s}^j\right\rangle                                                                                                                                       \\
			=                         & \left[Y_{k,s}^i \left(b_{k\tau+s}^j-\hat{b}_{k\tau}^j\right)+Y_{k,s}^j \left(b_{k\tau+s}^i-\hat{b}_{k\tau}^i\right)+\bSigma^i_{k\tau+s}\left(\bSigma^j_{k\tau+s}\right)^\top\right]\d s+\left[Y_{k,s}^i\bSigma_{k\tau+s}^j+Y_{k,s}^j\bSigma_{k\tau+s}^i\right]\d\w_s
		\end{aligned}
	$$
	with its quadratic variation satisfying
	$$\d \qv{Y_{k}^iY_{k}^j}_s =
	\left|Y_{k,s}^i\bSigma_{k\tau+s}^j+Y_{k,s}^j\bSigma_{k\tau+s}^i \right|^2\d
	s.$$

	By direct calculation, we have 
	\begin{equation}
		\begin{aligned}
			  & \E\left[Y_{k,s}^iY_{k,s}^j \big| \F_0^{k\tau}\right]                                                                                                                                                                                                 \\
			= & \E\left[\int_0^{s}\left(\int_0^{s_1} \left(b_{k\tau+s_2}^i-\hat{b}_{k\tau}^i\right)\d s_2+ \int_0^{s_1}\bSigma_{k\tau+s_2}^i \d \w_{s_2}\right) \left(b_{k\tau+s_1}^j-\hat{b}_{k\tau}^j\right) \d s_1\bigg| \F_0^{k\tau}\right]                                        \\
			+ & \E\left[\int_0^{s} \left(\int_0^{s_2} \left(b_{k\tau+s_1}^j-\hat{b}_{k\tau}^j\right)\d s_1+ \int_0^{s_2}\bSigma_{k\tau+s_1}^j \d \w_{s_1}\right) \left(b_{k\tau+s_2}^i-\hat{b}_{k\tau}^i\right) \d s_2\bigg| \F_0^{k\tau}\right]                                      \\
			+  & \E\left[\int_0^s \bSigma^i_{k\tau+s'}\left(\bSigma^j_{k\tau+s'}-\bSigma_{k\tau}^j\right)^\top + \left(\bSigma^i_{k\tau+s'}-\bSigma^i_{k\tau}\right)\left(\bSigma_{k\tau}^j\right)^\top \d s'\bigg| \F_0^{k\tau}\right]+s\bSigma_{k\tau}^i\left(\bSigma_{k\tau}^{j}\right)^\top \\
			= & \int_0^s\int_0^s \E\left[\left(b_{k\tau+s_1}^i-\hat{b}_{k\tau}^i\right)\left(b_{k\tau+s_2}^j-\hat{b}_{k\tau}^j\right)\bigg| \F_0^{k\tau}\right]\d s_1\d s_2                                                                                                      \\
			  & +\E\left[\int_0^s \bSigma^i_{k\tau+s'}\left(\bSigma^j_{k\tau+s'}-\bSigma_{k\tau}^j\right)^\top + \left(\bSigma^i_{k\tau+s'}-\bSigma^i_{k\tau}\right)\left(\bSigma_{k\tau}^j\right)^\top \d s'\bigg| \F_0^{k\tau}\right]+2sD^{ij}_{k\tau}.
		\end{aligned}
                \label{eq:yiyj}  
	\end{equation}

	For the first term of~\eqref{eq:yiyj}, we have the following bound by
	applying Cauchy-Schwarz 
	$$
		\begin{aligned}
			         & \left(\int_0^s\int_0^s \E\left[\left(b_{k\tau+s_1}^i-\hat{b}_{k\tau}^i\right)\left(b_{k\tau+s_2}^j-\hat{b}_{k\tau}^j\right)\bigg| \F_0^{k\tau}\right]\d s_1\d s_2\right)^2                                                                    \\
			\lesssim & \left(\int_0^s\int_0^s \E\left[\left(b_{k\tau+s_1}^i-b_{k\tau}^i\right)\left(b_{k\tau+s_2}^j-b_{k\tau}^j\right)\bigg| \F_0^{k\tau}\right]\d s_1\d s_2\right)^2                                                                                \\
			         +&\left(\int_0^s\int_0^s\E\left[\left(b_{k\tau}^i-\hat{b}_{k\tau}^i\right)\left(b_{k\tau+s_2}^j-b_{k\tau}^j\right)\bigg| \F_0^{k\tau}\right]\d s_1\d s_2\right)^2                                                                        \\+ & \left(\int_0^s\int_0^s\E\left[\left(b_{k\tau+s_1}^i-b_{k\tau}^i\right)\left(b_{k\tau}^j-\hat{b}_{k\tau}^j\right)\bigg| \F_0^{k\tau}\right]\d s_1\d s_2\right)^2\\
                     + & \left(\int_0^s\int_0^s\E\left[\left(b_{k\tau}^i-\hat{b}_{k\tau}^i\right)\left(b_{k\tau}^j-\hat{b}_{k\tau}^j\right)\bigg| \F_0^{k\tau}\right]\d s_1\d s_2\right)^2\\
    \lesssim &s^2\int_0^s\int_0^s
    \E\left[\|\x_{k\tau+s_1} - \x_{k\tau}\|^2\|\x_{k\tau+s_2} - \x_{k\tau}\|^2|\F_0^{k\tau}\right]
    \d s_1\d s_2\\
    + &   s^3\left(b_{k\tau}^i-\hat{b}_{k\tau}^i\right)^2 \int_0^s
    \E\left[ \|\x_{k\tau+s_1} - \x_{k\tau}\|^2|\F_0^{k\tau}\right]
    \d s_1 \\
    +&  s^3\left(b_{k\tau}^j-\hat{b}_{k\tau}^j\right)^2 \int_0^s
    \E\left[\|\x_{k\tau+s_2} - \x_{k\tau}\|^2|\F_0^{k\tau}\right]
    \d s_2 +s^4\left(b_{k\tau}^i-\hat{b}_{k\tau}^i\right)^2\left(b_{k\tau}^j-\hat{b}_{k\tau}^j\right)^2 \\
            \lesssim& s^6 + s^5 \left[\left(b_{k\tau}^i-\hat{b}_{k\tau}^i\right)^2 +\left(b_{k\tau}^j-\hat{b}_{k\tau}^j\right)^2 \right]+ s^4\left(b_{k\tau}^i-\hat{b}_{k\tau}^i\right)^2\left(b_{k\tau}^j-\hat{b}_{k\tau}^j\right)^2,
\end{aligned}
    $$
    where the last inequality follows from Lemma~\ref{lem:x2}.

	For the second term of~\eqref{eq:yiyj},
	\begin{equation}
        \begin{aligned}
	        & \left(\E\left[\int_0^s \bSigma^i_{k\tau+s'}\left(\bSigma^j_{k\tau+s'}-\bSigma_{k\tau}^j\right)^\top + \left(\bSigma^i_{k\tau+s'}-\bSigma^i_{k\tau}\right)\left(\bSigma_{k\tau}^j\right)^\top \d s'\bigg|\F_0^{k\tau}\right]\right)^2 \\
			\leq     & \E\left[\left(\int_0^s \bSigma^i_{k\tau+s'}\left(\bSigma^j_{k\tau+s'}-\bSigma_{k\tau}^j\right)^\top + \left(\bSigma^i_{k\tau+s'}-\bSigma^i_{k\tau}\right)\left(\bSigma_{k\tau}^j\right)^\top \d s'\right)^2\bigg|\F_0^{k\tau}\right] \\
			\lesssim & s \int_0^s \E\left[\left\|\bSigma^j_{k\tau+s'}-\bSigma^j_{k\tau}\right\|^2\bigg|\F_0^{k\tau}\right]\d s'      +s \int_0^s \E\left[\left\|\bSigma^i_{k\tau+s'}-\bSigma^i_{k\tau}\right\|^2\bigg|\F_0^{k\tau}\right]\d s'                                                                                                              \\
			\lesssim & s \int_0^s \E\left[\left\|\x_{k\tau+s'}-\x_{k\tau}\right\|^2\right]\d s' = s^3.
		\end{aligned}
        \label{eq:secondterm}
    \end{equation}
	Combining the two estimations above, we have
    \begin{equation}
        \begin{aligned}
            &\left(\E\left[\left(Y_{k,s}^iY_{k,s}^j \right) \big| \F_0^{k\tau}\right]\right)^2\\
           \lesssim& s^6+ s^5 \left[\left(b_{k\tau}^i-\hat{b}_{k\tau}^i\right)^2 +\left(b_{k\tau}^j-\hat{b}_{k\tau}^j\right)^2 \right]+s^4\left(b_{k\tau}^i-\hat{b}_{k\tau}^i\right)^2\left(b_{k\tau}^j-\hat{b}_{k\tau}^j\right)^2+s^3 + s^2 \left(D_k^{ij}\right)^2\lesssim s^2,
         \end{aligned}\
         \label{eq:yiyj2}
    \end{equation}
    and thus
	$$
		\begin{aligned}
            &\E\left[\dfrac{1}{N}\sum_{k=0}^{N-1} (\Delta A_{k\tau})^2 \right]=\E\left[\dfrac{1}{N}\sum_{k=0}^{N-1}\left((2\tau)^{-1}\E\left[Y_{k,\tau}^iY_{k,\tau}^j\big|\F_0^{k\tau}\right]-D^{ij}_{k\tau}\right)^2 \right] \\
        \lesssim& \tau^{-2}\left(\tau^6+\tau^5 \left[\left(b_{k\tau}^i-\hat{b}_{k\tau}^i\right)^2 +\left(b_{k\tau}^j-\hat{b}_{k\tau}^j\right)^2 \right]+\tau^4\left(b_{k\tau}^i-\hat{b}_{k\tau}^i\right)^2\left(b_{k\tau}^j-\hat{b}_{k\tau}^j\right)^2+\tau^3\right)   \lesssim \tau,
        \end{aligned}
	$$
    confirming Assumption~\ref{ass:bias} for diffusion estimation.
	
	Moreover, Assumption~\ref{ass:mart} is also satisfied with $\gamma = 0$ by
	checking
	$$
		\begin{aligned}
			\E\left[\Delta \qv{M^{ij}}_{k\tau}\big|\F_0^{k\tau}\right] & = \E\left[(2\tau)^{-2}\qv{Y_{k}^iY_{k}^j}_\tau\big|\F_0^{k\tau}\right] \\
            &= (2\tau)^{-2}\int_0^\tau \E\left[\left|Y_{k,s}^i\bSigma_s^j+Y_{k,s}^j\bSigma_s^i \right|^2\big|\F_0^{k\tau}\right]\d s \\
		& \lesssim \tau^{-2} \int_0^\tau \E\left[\left(Y_{k,s}^i \right)^2 + \left(Y_{k,s}^j \right)^2\big|\F_0^{k\tau}\right]\d s \\
        &\lesssim \tau^{-2}\sqrt{\tau\int_0^\tau\left( \E\left[(Y_{k,s}^i)^2\right]\right)^2\d s}\\
        &\lesssim \tau^{-2}\sqrt{\tau\int_0^\tau s^2 \d s}
        \lesssim 1,
		\end{aligned}
	$$
    where the second to last inequality is by taking $i=j$ in~\eqref{eq:yiyj2}.

    Finally, we apply Theorem~\ref{thm:thetheorem} to $g^0 = D^{ij}$, and obtain
    the final rate
    $$
        \E\left[\hat{\L}_N^{g^0}(\hat{D^{ij}})\right]\lesssim
        N^{-\frac{2s}{2s+d}}\log^3N+\tau+ \dfrac{\log^2 N}{N\tau}
        \lesssim N^{-\frac{2s}{2s+d}}\log^3 N+\tau + \dfrac{\log^2 N}{T}.
    $$
    The proof is thus complete by repeating for all $1\leq i,j\leq d$.

\end{proof}

\begin{remark}
    In Algorithm~\ref{alg:alg}, we used the estimator $\hat{\b}$ obtained in the
    first step as an approximation of the true drift $\b$ in
    $\tilde{\L}^D_N(\bar{D};(\x_{k\tau})_{k=0}^N, \hat \b)$. It turns out that
    this approximation does not affect the overall convergence rate, since we
    concluded that the leading bias term is~\eqref{eq:secondterm}, which is
    caused by the finite resolution of the observations. However, an accurate
    estiamtor $\hat{\b}$ would reduce the variance indeed and thus improve the
    performance of the neural diffusion estiamtor.
\end{remark}

\end{document}